\newsavebox{\@brx}
\newcommand{\llangle}[1][]{\savebox{\@brx}{\(\m@th{#1\langle}\)}
\mathopen{\copy\@brx\kern-0.5\wd\@brx\usebox{\@brx}}}
\newcommand{\rrangle}[1][]{\savebox{\@brx}{\(\m@th{#1\rangle}\)}
\mathclose{\copy\@brx\kern-0.5\wd\@brx\usebox{\@brx}}}
\newcommand{\Incoherentset}{\mathcal{C}}
\newcommand{\Dataset}{\mathcal{D}}
\newcommand{\Loglikelihood}{\mathcal{L}}
\newcommand{\Tr}[1]{\textsf{Tr}(#1)}
\newcommand{\solset}{\Phi}
\newcommand{\solz}{\solset(Z)}
\newcommand{\delz}{\Delta(Z)}
\newcommand{\Real}[1]{\mathbb{R}^{#1}}
\newcommand{\norm}[1]{\left\Vert #1 \right\Vert}
\newcommand{\vectext}[1]{\mathsf{vec}(#1)}
\theoremstyle{plain}
\newtheorem{theorem}{Theorem}[section]
\newtheorem{lemma}[theorem]{Lemma}
\theoremstyle{definition}
\theoremstyle{remark}
\title{Recommendations with Sparse Comparison Data: \\ Provably Fast Convergence for Nonconvex Matrix Factorization}
\author{
  Suryanarayana Sankagiri\\
  EPFL, Switzerland \\ 
  \href{mailto:suryanarayana.sankagiri@epfl.ch}{suryanarayana.sankagiri@epfl.ch}
  \and
  Jalal Etesami \\
  TUM, Germany \\
  \href{mailto:j.etesami@tum.de}{j.etesami@tum.de}
  \and
  Matthias Grossglauser \\
  EPFL, Switzerland \\
  \href{mailto:matthias.grossglauser@epfl.ch}{matthias.grossglauser@epfl.ch}
}
\begin{document}
\maketitle

\begin{abstract}
In this paper, we consider a recommender system that elicits user feedback through pairwise comparisons instead of ratings. 
We study the problem of learning personalised preferences from such comparison data via collaborative filtering. 
Similar to the classical matrix completion setting, we assume that users and items are endowed with low-dimensional latent features. 
These features give rise to user-item utilities, and the comparison outcomes are governed by a discrete choice model over these utilities. 
The task of learning these features is then formulated as a maximum likelihood problem over the comparison dataset. 
Despite the resulting optimization problem being nonconvex, we show that gradient-based methods converge exponentially to the latent features, given a warm start. 
Importantly, this result holds in a sparse data regime, where each user compares only a few pairs of items. 
Our main technical contribution is to extend key concentration results commonly used in matrix completion to our model. 
Simulations reveal that the empirical performance of the method exceeds theoretical predictions, even when some assumptions are relaxed.
Our work demonstrates that learning personalised recommendations from comparison data is both computationally and statistically efficient. 
\end{abstract}

\vspace{3em}

\begingroup
\hypersetup{linkcolor = blue} 
\tableofcontents
\endgroup

\vspace{7em}

\section{Introduction}\label{sec:introduction}

Recommender systems are central to modern streaming platforms and digital marketplaces, where they curate personalized selections for each user from the vast set of items these platforms host. Algorithms powering such systems learn users' preferences based on the feedback that they provide, {\em e.g.}, the ratings on the items they have consumed. A classical method of learning from ratings data is \textit{matrix completion}, which is based on the following modeling assumptions. Each user and item is endowed with a low-dimensional feature vector, and their inner product is taken to be the user-item utility. The ratings are assumed to be noisy reflections of this utility. By fitting this model to the available rating data, the system can learn to predict users' preferences over unseen items as well. This approach has been immensely successful in practice \citep{koren2009matrix} and is also supported by strong theoretical foundations \citep{ge2016matrix}.

This work focuses on recommender systems that learn from pairwise preference comparisons instead of ratings. One reason to consider such a setting is that comparison data are widely available as implicit feedback---for instance, when a user clicks on one of four options, it suggests a preference for the selected item over the others. Additionally, we believe explicitly collecting comparison feedback instead of ratings can be beneficial for several reasons: (i) comparisons naturally cancel out user biases in ratings \citep{shah2013case}; (ii) they avoid the discretization issues of rating-based methods, where responses are typically binary or lie on a  1--5 star scale \citep{davenport2014onebit}; and (iii) comparing two items is cognitively easier than rating them on an abstract scale \citep{stewart2005absolute}. In fact, the advantages of ordinal (comparison) feedback over cardinal (rating) feedback have been empirically demonstrated in small-scale tasks \citep{shah2016estimation}. We ask whether it is possible to learn personalized preferences from such comparison data in a computationally and statistically efficient manner.

To answer this question, it is natural to consider the following comparison-based recommender system model that arises by combining the matrix completion assumptions with a discrete choice model. 
Specifically, assume each user $u$ has a utility $x_{u,i}$ for every item $i$, where $x_{u,i}$ is defined as the  inner product of a low-dimensional user feature  vector $u$ and item feature vector $v$. Thus the utility matrix $X$ can be factorized into feature matrices as follows: $X = UV^T$.
Comparisons follow a noisy oracle: when presented with two items $i$ and $j$, the user $u$ picks $i$ over $j$ with probability $g(x_{u,i} - x_{u,j})$ for some known link function $g(\cdot)$.
Given a dataset that is generated from this model with some ground-truth features $(U^*, V^*)$, one can learn these features by optimizing the likelihood of the model over the dataset.
These learned features can then be used to predict each user's preferences over unseen items.
The key theoretical question of interest is whether there exists an efficient algorithm that can provably recover the ground-truth features.

There has been some progress towards this question, notably by \citet{park2015preference} and \citet{negahban2018learning}. They prove sample complexity guarantees, showing that it is indeed possible to estimate the ground-truth features, even when each user makes only a few comparisons. However, their analysis rests on a convex problem posed in terms of the entire utility matrix $X$, and is therefore inefficient to solve. In contrast, the optimization problem posed in terms of the feature matrices $(U, V)$ is computationally much easier to solve, despite being nonconvex. Indeed, the nonconvex approach has been applied to large-scale, real-world comparison datasets, where the convex approach would be infeasible \citep{rendle2009bpr, park2015preference}. While this approach is computationally faster, the nonconvexity in the problem makes it harder to provide theoretical guarantees. Proving recovery guarantees for this nonconvex approach has been highlighted as an important open problem by \citet{negahban2018learning}. Our work aims to plug this gap in the literature.

In this work, we provide the first theoretical recovery guarantees for the nonconvex learning-from-comparisons problem. Our guarantees stem from a careful analysis of the loss landscape. We show that within a neighborhood of the true solution, the negative log-likelihood function exhibits a strong convexity-like property. Therefore, with a warm start, (projected) gradient descent converges exponentially fast to the global minimum (see Theorem \ref{thm:main}). 
Our proof of this result involves two broad steps. First, we demonstrate that the \textit{expected} log likelihood function satisfies this desirable structural property. Second, we show that the \textit{empirical} log likelihood function is close to its expected value,
even when the dataset is sparse. Showing the latter requires strong concentration results. Our work introduces new techniques to establish these results, extending the methods developed for the matrix completion problem. Our guarantees are qualitatively similar to those established for matrix completion, notably the work of \citet{zheng2016convergence}. Further details of our technical contributions are presented after a review of related literature.

Our simulation results corroborate our theoretical findings. Importantly, they show that some of the assumptions (such as the warm start) needed for our theoretical result are not necessary in practice. They also show that the constants in our theoretical results---the sample complexity and the convergence rate---are quite conservative. Our work suggests that explicitly asking users to compare pairs of items (instead of rating them) can be a viable approach to learning user preferences. To the best of our knowledge, such a dataset is not publicly available. Note that prior works using near-identical methods on real datasets, such as \citet{rendle2009bpr} and \citet{park2015preference}, infer comparisons from other forms of data. We hope our work will motivate practitioners to collect a large comparison dataset similar to the Netflix dataset, on which our method can be tested. An interesting direction of future research is to empirically test whether comparison-based feedback leads to better recommendations than rating-based feedback, possibly due to lower noise.

\section{Related Work}
\subsection{On Matrix Completion}
The matrix completion problem can be stated as follows: recover a low-rank matrix given a small subset of its entries, possibly corrupted by noise. There are two approaches that provide theoretically optimal solutions for this problem. One approach involves posing a nuclear norm minimization program, subject to the constraints that some select entries must match the observation. \citet{candes2009exact} was the first to theoretically establish that a low rank matrix can be recovered exactly, given a small, randomly sampled, subset of its entries (without noise). Later work extended these results to the setting with noisy observations \citep{candes2010matrix, negahban2012restricted}. 

An alternate approach is to pose the problem in its \textit{matrix factorization} form. This is based on the observation that a low-rank matrix $X$ admits a factorization into two smaller matrices $(U, V)$: $X = UV^T$. One can pose a squared-loss minimization problem in terms of the factors $(U, V)$ \citep{mnih2007probabilistic}. While this alternate formulation leads to a nonconvex objective function, it is much faster to solve and yields good results on real data \cite{koren2009matrix}. This empirical success spurred a long line of theoretical research on analyzing this nonconvex optimization problem. The work of \citet{keshavan2010matrix} was the first to provide theoretical guarantees for this nonconvex formulation. They propose a two-step method. First, they show that performing the singular value decomposition of the partially observed matrix leads to a candidate solution close to the ground-truth. Next, using this matrix as a starting point, they show that a gradient-descent like method converges to the true solution. 

Other works have built upon this initial result to show slightly stronger theoretical guarantees with improved proof techniques \citep{chen2015fast, sun2016guaranteed, zheng2016convergence}. Following the two-step approach prescribed by \citet{keshavan2010matrix}, all these works  focus on proving that there exists a basin of attraction around the true solution. 
Notably, all these papers use a key concentration result developed by \citet{candes2009exact} (Theorem 4.1). This result, in turn, relies on the following assumptions
(i) the ground-truth matrix is \textit{incoherent} (no row or column of the matrix dominates the rest) and (ii) the observed entries are chosen uniformly at random from all the entries of the matrix. Furthermore, these methods require that the iterates always remain incoherent throughout; to this end, they use a regularizer \citep{sun2016guaranteed} or a projection step \citep{chen2015fast, zheng2016convergence}.

Further work on this problem has led to significant relaxations in the assumptions needed to prove theoretical guarantees. Firstly, \citet{ge2016matrix} and  \citet{ge2017no} show that all local minima are global in the nonconvex formulation. This implies gradient-based methods are guaranteed to converge to a global optimum, even without the initialization procedure. Secondly, \citet{ma2020implicit} shows that gradient descent has implicit regularization and thus can converge to the optimal solution without an explicit regularizer or a projection operation. 

\subsection{On Learning From Comparisons}
The central problem in learning from comparison data is to estimate the preference order/rank of all the items given a dataset. A popular approach to solve this problem is to assume the comparisons arise from a probabilistic choice model, such as the Bradley-Terry-Luce choice model. 
Theoretical guarantees for learning the parameters of this model have been established in the literature \citep{negahban2012iterative, maystre2015fast, shah2016estimation}. In addition to the offline setting, the corresponding active learning problem has also been well-studied, especially in the framework of dueling bandits \citep{bengs2021preference}. In particular, the contextual dueling bandit model is quite similar to our model; however, with the `collaborative filtering' aspect missing, the basic estimation problem there reduces to that of logistic regression \citep{saha2021optimal, bengs22stochastic}.

The problem of learning a low-rank user-item score matrix from comparison data was first formulated by \citet{rendle2009bpr}. This work applied a model and algorithm very similar to ours to a comparison dataset derived from implicit user feedback such as views, clicks, and purchases. \citet{rendle2009bpr} demonstrated that such data is better treated as ordinal information (a preference of the viewed item over the rest) instead of cardinal information (a positive rating of the viewed item). 

\citet{park2015preference} was the first work to provide theoretical guarantees for this problem, albeit for convex version. This work also noted the similarity of this problem to the matrix completion setting, prompting them to also develop a more efficient nonconvex method which is nearly identical to ours. They applied this method to a comparison dataset derived from movie ratings (higher rated movie is preferred over a lower rated one), getting recommendations of a quality similar to processing ratings directly, thereby establishing the efficacy of this method. 

\citet{negahban2018learning} studies this problem in much greater detail, providing matrix recovery guarantees with optimal sample complexity. It also analyzes more complex settings such as sampling item pairs in a nonuniform fashion and learning from one-out-of-$k$ choices. 
Ultimately, the paper focuses only on the convex formulation, stating that the analysis of the corresponding nonconvex formulation is an important open problem.

\subsection{Our Technical Contributions}
In this work, we provide a theoretical analysis of the nonconvex formulation for the problem of learning a low-rank matrix from comparison data. 
The modeling assumptions we make, such as the incoherence of the ground-truth matrix and the uniform sampling of datapoints, are very similar to prior work on matrix factorization. Our proof strategy is also inspired by prior work on this subject; most notably, that of \citet{zheng2016convergence}. In particular, we follow their approach of using a regularizer to translate an asymmetric matrix factorization problem ($X = UV^T$) into a symmetric one ($Y = ZZ^T$). We also follow their idea of using projected gradient descent to ensure the iterates stay incoherent. 

The key difference between our work and prior work is the method used to develop the necessary concentration inequalities. Most of the papers analyzing matrix completion build upon some fundamental results from \citet{candes2009exact} and \citet{keshavan2010matrix}.However, these results do not apply to our problem, because the structure of the \textit{sampling matrix} is different. To elaborate, in matrix completion, a data point consists of a single user and a single item, while here, a datapoint consists of a single user and an item-pair.
This seemingly minor difference makes us lose the interpretation of the set of samples acting like a projection operator \citep{candes2009exact}, or the samples being edges of a bipartite graph \citep{keshavan2010matrix}.
In this work, we derive the necessary concentration results by using the matrix Bernstein inequality  \citep{tropp2015introduction} as the main tool. Further details are given in Section \ref{sec:proof}.

We make two major simplifying assumptions in this work. First, we  assume that our comparisons are noiseless. That is, instead of observing a binary preference outcome, we observe the expected value of this outcome. 
Extending our analysis to the more realistic setting of noisy, binary comparisons is an important direction of future work.\footnote{Indeed, in the matrix completion literature as well, the noiseless case has been addressed first and the noisy case in a follow up work (e.g., \citet{candes2009exact} followed by \citet{candes2010matrix}, \citet{keshavan2010matrix} followed by \citet{keshavan2010matrixb}).} Second, we assume we are given an initial point that is suitably close to the ground truth solution. In the matrix completion literature, such an initial solution can be obtained by performing a singular value decomposition on the partially observed matrix, as shown by \cite{keshavan2010matrix}. However, this initialization method does not work here. Our simulations in Section \ref{sec:simulations} suggest that this warm start may not be a necessity. Proving convergence from a random point, as done by \cite{ge2016matrix}, remains an open problem.

\section{Model}\label{sec:model}

\subsection{The Data Generation Process}\label{sec:generative_model}
Let there be a fixed set of users and items. Let $n_1$ denote the number of users and $n_2$ the number of items. We assume that each user $u$ has a certain utility $x^*_{u,i}$ for every item $i$. Each user $u$ and each item $i$ has a $r$-dimensional feature vector. Further, we assume that the score matrix $X^* \in \Real{n_1 \times n_2}$ has rank $r$. Although our analysis holds for any rank $r \leq \min{n_1, n_2}$, the results are interesting when the matrix is low rank, {\em i.e.,} the $r$ is much smaller than $n_1$ and $n_2$. We call $X^*$ the ground-truth utility matrix.

When a user $u$ is asked to choose which option they prefer between two items $i$ and $j$, we assume the user makes a choice by comparing their utilities. To be precise, let $w = 1$ denote the event that the user picks $i$ over $j$ and $w = 0$ the complementary event. We assume 
\begin{equation}
    P(w = 1) = g(x^*_{u,i} - x^*_{u,j}),
\end{equation}
where $g: \Real{} \rightarrow (0,1)$ is called the \textit{link function}. This model captures the intuition that the user is certain in their choices among items that differ significantly in their utility, but is more ambiguous when choosing between similar utility items. A special case is the classical Plackett-Luce choice model, where the link function is the sigmoid function: $g(x) = 1/(1 + \exp(-x))$. In general, the link function is a smooth, strictly increasing function and is symmetric around zero in the following sense: $g(-x) = 1 - g(x)$.

We assume we are given a dataset $\Dataset$ where each data point represents a comparison made by a user between two items. The size of the dataset, \textit{i.e.,} the number of data points, is represented by $m$. We index the dataset by $k$. Each data point $\Dataset_k$ is of the form $((u; i, j), w)$ and is sampled randomly as follows. The user index $u$ is chosen uniformly at random from $[n_1]$. The pair of item indices $(i,j)$ is chosen uniformly at random from the set of $n_2(n_2-1)$ pairs of distinct items. The item pair $(i,j)$ is sampled independently from $u$. The triplets for different datapoints are sampled independently of each other.

In this work, we assume that comparisons are \textit{noiseless}, {\em i.e.,} instead of a binary outcome, we observe the expected value of the comparison outcome $w$ (that is, $g(x^*_{u,i} - x^*_{u,j})$). Although this assumption is not a reflection of practice, we make this assumption for the simplicity of exposition. By making this assumption, we can show that with sufficient data, we can estimate the ground-truth matrix to arbitrary precision. The binary outcome setting can be viewed as a \textit{noisy} setting, as any random variable $w$ can be expressed as the sum of its mean $\mathbb{E}[w]$ and some mean-zero noise. We believe it is possible to extend our results to the noisy case, except that the recovery guarantees will contain a residual estimation error due to the noise.

\subsection{Notation}\label{sec:notation}
We now introduce some additional notation that we will use throughout the rest of this paper. This notation is useful not only to succinctly represent the loss function (see Section \ref{sec:loss_function}), but also to argue about the desired concentration results (see Section \ref{sec:proof}).

In Section \ref{sec:generative_model}, we assumed that the score matrix $X^* \in \Real{n_1 \times n_2}$ has rank $r$. This implies that it admits the following rank-$r$ SVD: $X^* = U^* \Sigma^* V^{*T}$. Here, $U^* \in \Real{n_1 \times r}$ and $V^* \in \Real{n_2 \times r}$ are orthonormal matrices (satisfying $U^{*T} U^* = V^{*T} V^* = I_r$), and $\Sigma^* \in \Real{r \times r}$ is a diagonal matrix with entries $\sigma_1^* \geq \ldots \geq \sigma_r^* > 0$.

Let $n = n_1 + n_2$. Define $Z^* \in \Real{n \times r}$ and $Y^* \in \Real{n \times n}$ as follows:
\begin{align}\label{eq:matrix_z}
    Z^* &= 
    \begin{bmatrix}
        U^* \\ V^*
    \end{bmatrix}
    \Sigma^{* 1/2}, \\ 
    Y^* &= Z^*Z^{*T} = 
    \begin{bmatrix}
        U^*\Sigma^*U^{*T} & X^*\\
        X^{*T} & V^*\Sigma^*V^{*T}
    \end{bmatrix}.
\end{align}
We can interpret $U^*\Sigma^{* 1/2}$ as the matrix of user feature vectors, with row $u$ corresponding to user $u$. Similarly, $V^*\Sigma^{* 1/2}$ can be viewed the matrix of item feature vectors. Both user and item features are $r$-dimensional vectors. Note that $X^* = (U^*\Sigma^{* 1/2})(V^*\Sigma^{* 1/2})^T$. Thus, the user-item utility $x^*{u,i}$ can be viewed as the inner product of the corresponding user and item feature vectors.

Given the relation between matrices $X^*$, $Y^*$, and $Z^*$, estimating the ground-truth utility matrix $X^*$ is equivalent to estimating $Z^*$ (barring the symmetries discussed in Section \ref{sec:symmetries}). The major advantage of this reduction is that it reduces the number of parameters from $n_1n_2$ (in $X^*$) to $(n_1 + n_2)r$ (in $Z^*$). This significant reduction in parameters (when $r$ is small) leads to corresponding gains in computational efficiency. Thus, from here on, the goal of the learning problem is to estimate $Z^*$. Before we present the precise learning problem in subsequent subsections, we introduce some more notation that will make the presentation concise.

Let $e_1, e_2, \ldots e_{n_1}$ denote unit vectors in $\Real{n_1}$ and let $\Tilde{e}_1, \Tilde{e}_2, \ldots, \Tilde{e}_{n_2}$ denote unit vectors in $\Real{n_2}$. Let $\llangle C, D \rrangle = \sum_{i,j} c_{i,j}d_{i,j}$ denote the matrix inner product between two matrices of the same size. Therefore:
\begin{align}\label{eq:def_A1}
    \llangle e_u(\Tilde{e}_i - \Tilde{e}_j)^T, X^* \rrangle = x^*_{u,i} - x^*_{u,j}.
\end{align}
For any triplet $(u; i, j)$, define the corresponding \textit{sampling matrix} $A \in \Real{n \times n}$ to be:
\begin{align}\label{eq:def_A2}
    A = \begin{bmatrix}
        0 & e_u(\Tilde{e}_i - \Tilde{e}_j)^T\\
        0 & 0
    \end{bmatrix}.
\end{align}
In the equation above, $0$ denotes matrices with all entries zero of the appropriate size. With this notation, for any data point $((u; i, j), w)$, we have:
\begin{align}
     \llangle A, Y^* \rrangle &= \llangle A^T, Y^* \rrangle  = x^*_{u,i} - x^*_{u,j} \\
    \Rightarrow P(w = 1 \, | \, (u; i, j)) &= P(w = 1 \, | \, A) = g(\llangle A, Y^* \rrangle) \nonumber.
\end{align}

Let $A_k$ denote the sampling matrix corresponding to the datapoint $\mathcal{D}_k$. By our modeling assumptions above, $A_1, A_2, \ldots$ are i.i.d. random matrices of the same form as \eqref{eq:def_A2}, with the index $u$ being chosen uniformly at random from $[n_1]$, and the indices $(i,j)$ being chosen uniformly at random from $[n_2]$ (with the condition $i \neq j$).

Lastly, for any matrix $Y \in \Real{n \times n}$, define 
\begin{align}\label{eq:def_D_operator}
    \mathcal{D}(Y) &\triangleq \frac{1}{m}\sum_{k = 1}^m \llangle A_k + A_k^T, Y \rrangle^2. 
\end{align}
We overload this notation to highlight the fact that the operator $\mathcal{D}(\cdot)$ captures the collective action of all the sampling matrices of the dataset $\mathcal{D}$. We shall encounter such terms repeatedly in our analysis. Observe that $\mathcal{D}(Y)$ is the empirical mean of i.i.d. random terms. Thus, it is reasonable to expect that $\mathcal{D}(Y) \approx \mathbb{E}[\mathcal{D}(Y)]$, if the number of datapoints $m$ is sufficiently large. Our analysis rests on proving such concentration results; see Section \ref{sec:proof} for more details.

\subsection{The Loss Function}\label{sec:loss_function}
Recall, from the previous section, that our goal is to estimate the matrix $Z^*$. We do so by maximizing the likelihood as a function of matrices $Z \in \Real{n \times r}$ over the dataset $\mathcal{D}$. In other words, we formulate a loss function in terms of the negative log likelihood, and minimize this function using a gradient descent-like method. This section presents the expressions for the log likelihood and its gradient, using the notation developed in the previous section.

Given a binary outcome $w$, the likelihood of the outcome under a Bernoulli distribution with parameter $p$ is $p^{w}(1-p)^{1-w}$. Therefore, the negative log-likelihood of this observation is $-w\log(p) - (1-w)\log(1-p)$. Next, consider a datapoint $((u; i, j), w)$ with the corresponding sampling matrix $A$. The negative log-likelihood of this observation with parameters $Z$ is 
\begin{align*}
    -w \log(g(\llangle A, ZZ^T \rrangle)) - (1-w) \log(1 - g(\llangle A, ZZ^T \rrangle)).
\end{align*}
Then, for the entire dataset, the (normalized) negative log likelihood is given by:
\begin{align}\label{eq:log_likelihood}
    \Loglikelihood(Z) &= \frac{1}{m} \sum_{k = 1}^{m} -w_k \log(g(\llangle A_k, ZZ^T \rrangle)) \nonumber \\
    &\quad - (1-w_k) \log(1 - g(\llangle A_k, ZZ^T \rrangle)).
\end{align}
The gradient of $\Loglikelihood(Z)$ is 
\begin{align} \label{eq:gradient_likelihood}
    \nabla \Loglikelihood(Z) &= \frac{1}{m}\sum_{k=1}^{m}
    h_k (A_k+ A_k^T) Z, \text{ where }\\
    h_k &\triangleq \frac{g'(z_k)\left(g(z_k) - w_k\right)}{g(z_k)(1-g(z_k))}, \ z_k \triangleq \llangle A_k,ZZ^T \rrangle. \nonumber
\end{align}
Here, $\nabla \Loglikelihood(Z)$ is a matrix of the same size as $Z$ while $h_k$ and $z_k$ are scalars. Finally, note that with the noiseless assumption, we can substitute $w_k$ by $g(\llangle A_k, Z^*Z^{*T} \rrangle)$.

\subsection{Important Parameters}\label{sec:imp_parameters}
\paragraph{Condition Number} Let $\sigma^*_1, \sigma^*_2, \ldots \sigma^*_r$ denote the singular values of $X^*$. Denote the ratio $\sigma^*_1/\sigma^*_r$, called the \textit{condition number} of the data, by $\kappa$. Also note $\kappa$ is also the condition number of $Z^*$, because the singular values of $Z^*$ are $\sqrt{2\sigma^*_1}, \sqrt{2\sigma^*_2}, \ldots \sqrt{2\sigma^*_r}$.

\paragraph{Incoherence} For any matrix $Z$, let $\norm{Z}_{2, \infty}$ denote the maximum of the $\ell_2$ norm of its rows and let $\norm{Z}_{F}$ denote the Frobenius norm of $Z$. Define the \textit{incoherence parameter} of the ground-truth matrix as 
\begin{align}\label{eq:def_mu}
    \mu \triangleq n(\norm{Z^*}_{2, \infty}^2/\norm{Z^*}_{F}^2).
\end{align}
In principle, $\mu$ can take values from $1$ to $n$. However, the sample complexity worsens with $\mu$, as the concentration bounds weaken with $\mu$.

\paragraph{Link Function Bounds}  Let $I$ denote the interval {$[-{24 \mu (\norm{Z^*}_F^2}/{n}), {24 \mu (\norm{Z^*}_F^2}/{n})]$.} 
Let $\xi$ and $\Xi$ be lower and upper bounds for the following expression:
\begin{align}\label{eq:link_function_lower_bound}
    \xi &\triangleq  \min_{x \in I, y \in I} \frac{g'(x)g'(y)}{g(x)(1 - g(x))}, \\ 
    \Xi &\triangleq \max_{x \in I, y \in I} \frac{g'(x)g'(y)}{g(x)(1 - g(x))}. \label{eq:link_function_upper_bound}
\end{align}
By the assumptions on $g(\cdot)$ stated above, $\xi$ is strictly positive and $\Xi$ is finite. These terms are used to bound the term $h_k$ in \eqref{eq:gradient_likelihood} above.

\subsection{Symmetries in the Problem}\label{sec:symmetries}
The generative model, and consequently the log likelihood function, is invariant to certain transformations in the parameters. We explore these symmetries and their consequences in this section.

\paragraph{Scale Invariance} 
For any score matrix $X$, the mapping to $Z = (U, V)$ is not unique.
Indeed, for any invertible $r \times r$ matrix $P$, the matrix  $Z' = (UP^T, VP^{-1})$ is indistinguishable from $(U, V)$, as they both lead to the same score matrix $X$ and hence the same likelihood. However, we can distinguish `imbalanced' matrices from `balanced' ones by by adding a regularizer term $\norm{U^TU - V^TV}_F^2$ to the loss function. Minimizing this regularizer while keeping the log-likelihood constant leads to a pair of feature matrices that are balanced in the norms. In more compact terms, the regularizer can be written as follows:
\begin{align}\label{eq:def_regularizer}
    \mathcal{R}(Z) \triangleq \norm{Z^TDZ}_F^2; \ D \triangleq \begin{bmatrix}
        I_{n_1} & 0\\
        0 & -I_{n_2}
    \end{bmatrix}.
\end{align}
Note that the ground-truth matrix $Z^*$ satisfies $\mathcal{R}(Z^*) = 0$. Combining the regularizer with the negative log likelihood, the objective function becomes:
\begin{align}\label{eq:objective_function}
    f(Z) \triangleq \mathcal{L}(Z) + (\lambda/4)\mathcal{R}(Z),
\end{align}
where $\lambda$ is a positive constant. In this work, we set $\lambda = \xi\gamma/4$; however, in practice, it should be treated as a hyperparameter. In summary, adding the regularizer $\mathcal{R}(Z)$ factors out the scale-invariance of the problem.

\paragraph{Rotational Invariance}
Beyond the scale invariance, the problem at hand also exhibits rotational invariance.
Let $R$ be any orthogonal matrix in $r$ dimensions, \textit{i.e.}, $R \in \mathbb{R}^{r \times r}$ such that $RR^T=R^TR=I$. The matrix $ZR = (UR, VR)$ give rise to the same scores as $Z = (U, V)$. Thus, one can identify the ground-truth features only up to an orthogonal transformation. Denote this equivalence class of the ground-truth feature matrices by $\solset$:
\begin{align}\label{eq:def_solutionset}
    \solset \triangleq \{ \Tilde{Z}^* \, : \Tilde{Z}^* = Z^* R \text{ for some } \text{orthonormal } R\}.
\end{align}
This equivalence class of solutions naturally gives rise to a new distance metric $\Delta$ that measures how close a candidate solution $Z$ is to $\solset.$
Define
\begin{align}\label{eq:def_distance}
     R(Z) &\triangleq \text{arg} \min_{R: R^TR = RR^T = I_r} \norm{Z - Z^* R}_F, \\
    \solz &\triangleq \text{arg} 
    \min_{\Tilde{Z}^* \in \solset} \norm{Z - \Tilde{Z}^*}_F = Z^* R(Z), \label{eq:def_closest_sol} \\
     \delz &\triangleq Z - \solz. \label{eq:def_difference_sol}
\end{align}
We measure the quality of a solution $Z$ by $\norm{\delz}_F$.

\paragraph{Shift Invariance} Under our model, all comparisons involve computing the difference between the utilities of two items. Therefore, adding a constant vector to each item's feature vector does not affect the scores. Mathematically, this can be seen as follows. Let $\Tilde{V}^* = V^* + 1v^T$, where $v \in \Real{r}$ and $1 \in \Real{n_2}$ is the vector of all ones. Let $\Tilde{X}^*, \Tilde{Y}^*,$ and $\Tilde{Z}^*$ denote the corresponding quantities derived from $(U^*, \Tilde{V}^*)$. Then for any triplet $(u; i, j)$ and the corresponding sampling matrix $A$, we have $\llangle e_u(\Tilde{e}_i - \Tilde{e}_j)^T, \Tilde{X}^* \rrangle = \llangle e_u(\Tilde{e}_i - \Tilde{e}_j)^T, X^* \rrangle$, which implies $\llangle A, \Tilde{Y}^* \rrangle = \llangle A, Y^* \rrangle.$ Because of this invariance, we assume, without loss of generality, that $1^T V^* = 0$. In words, we assume that the item features of all matrices in $\solset$ sum to zero.

The shift invariance also manifests itself in our objective function $\Loglikelihood(Z)$. It is important to factor out the shift invariance in order to establish a strong-convexity like property (i.e., a curvature) for $\Loglikelihood(Z)$. Therefore, we restrict our attention to the following subspace:
\begin{align}\label{eq:H_hyperplane}
    \mathcal{H} = \{ Z \in \Real{n \times r}: Z = (U, V), 1^TV = 0 \}.
\end{align}
For any $Z = (U, V)$, we shall work with the projection of $Z$ onto $\mathcal{H}$, denoted by $\mathcal{P}_{\mathcal{H}}(Z)$. This projection is given by $(U, JV)$, where $J \triangleq I_{n_2} - 11^T/(n_2)$. Finally, note that by the assumption stated before, $\solset \subseteq \mathcal{H}$.

\section{Algorithm and Result}\label{sec:algorithm}

A naive approach to minimize the loss function \eqref{eq:objective_function} is to simply apply the gradient descent method until one is sufficiently close to convergence. Indeed, in Section \ref{sec:simulations}, we show this works well in practice. However, for proving theoretical guarantees, we need to use \textit{projected gradient descent}. Notably, the projection step involves two successive projections, first onto a set of `incoherent matrices' $\Incoherentset$ and then onto $\mathcal{H}$ (defined in \eqref{eq:H_hyperplane}). The set $\Incoherentset$ is defined as follows:
{\begin{align}\label{eq:def_incoherent_set}
    \Incoherentset \triangleq \left\{Z \in  \mathbb{R}^{n \times r} \, : \, \norm{Z}_{2, \infty} \leq \frac{4}{3}\sqrt{\frac{\mu}{n} }\norm{Z^0}_F \right\}.
\end{align}}
Thus, $\Incoherentset$ contains matrices that are `nearly as incoherent' as $Z^*$ (if $\norm{Z^0}_F \approx \norm{Z^*}_F$). For any $Z \in \Real{n \times r}$, the projection of $Z$ onto $\Incoherentset$, $\mathcal{P}_{\Incoherentset}(Z)$, is a matrix in $\Real{n \times r}$ obtained by clipping the rows of $Z$ to $\beta = (4/3)\sqrt{({\mu}/{n})}\norm{Z^0}_F$:
\begin{align*}
    \forall \ j \in [n], \ \mathcal{P}_{\Incoherentset}(Z)_j &= 
    \begin{cases}
        Z_j & \text{if } \norm{Z_j}_{2} \leq \beta\\
        Z_j (\beta/\norm{Z_j}_{2}) & \text{otherwise}
    \end{cases}.
\end{align*}

The rationale for the projections is the following. One, the objective function displays a strong-convexity like property only within the region of incoherent matrices. The projection operation $\mathcal{P}_\Incoherentset$ ensures that we stay in this region, which is crucial for proving the theoretical results. The second projection, $\mathcal{P}_{\mathcal{H}}$ factors out the shift invariance in the loss function. This is essential in order to establish strong convexity; otherwise, there is no curvature in the direction of invariance. Here, there is a caveat: this second projection may push the iterates out of the set $\Incoherentset$. However, we show (in Lemma \ref{lem:incoherence_of_projection}) that the iterates remain incoherent enough, \textit{i.e.}, they remain in the set {$\overline{\Incoherentset}$}, where
\begin{align}\label{eq:def_Cbar}
    \overline{\Incoherentset} \triangleq \left\{Z \in  \mathbb{R}^{n \times r} \, : \, \norm{Z}_{2, \infty} \leq \sqrt{{12\mu}/{n}}\norm{Z^*}_F \right\}
\end{align}

\begin{algorithm}[htbp]
\caption{Projected Gradient Descent}\label{alg:pgd}
\begin{algorithmic}
\STATE {\bfseries Input:} Objective function $f$, initial solution $Z^0 \in \mathbb{R}^{n \times r}$, stepsize $\eta$
\STATE $t \gets 0$
\STATE $Z^0 \gets \mathcal{P}_{\mathcal{H}}\left(\mathcal{P}_{\Incoherentset}\left(Z^0\right)\right)$
\REPEAT
    \STATE $Z^{t+1} \gets \mathcal{P}_{\mathcal{H}}\left(\mathcal{P}_{\Incoherentset}\left(Z^t - \eta \nabla f(Z^t) \right)\right)$
    \STATE $t \gets t+1$
\UNTIL{convergence}
\STATE {\bfseries Output:} $Z^t$
\end{algorithmic}
\end{algorithm}

Our main theorem states that in the noiseless setting, given a sufficiently large dataset and a warm start, Algorithm \ref{alg:pgd} converges exponentially fast to a solution equivalent to the ground-truth matrix. For the sake of conciseness, we introduce the following constants: 
\begin{align*}
    \gamma \triangleq 2/(n_1(n_2 - 1)), \quad
    \tau \triangleq \xi/\Xi, \quad
    \alpha \triangleq \xi \gamma \sigma^*_r.
\end{align*}
Let $\mathcal{B}(\varepsilon) = \{Z: \norm{\Delta(Z)}_{F}^2 \leq \varepsilon\sigma^*_r\}$ denote a `ball' around the true solution. With this notation in place, we state the main result of this paper.
\begin{restatable}{theorem}{tM}\label{thm:main}
    Suppose the following conditions hold:
    \begin{itemize}
        \item The dataset $\mathcal{D}$ consists of $m$ i.i.d. samples generated according to the model presented in Section \ref{sec:generative_model}.
        \item The number of samples $m$ is at least $10^7\left(\mu r \kappa / \tau \right)^2 n \log\left(8n/\delta\right)$ for some $\delta \in (0,1)$.
        \item The initial point $Z^0$ lies in $\mathcal{B}(\tau/50)$.
        \item The stepsize $\eta$ in Algorithm \ref{alg:pgd} satisfies $\eta \alpha \leq 2.5 \cdot 10^{-6} (\tau/\mu r \kappa)^2$.
    \end{itemize}
    Then, with probability at least $1-\delta$, the iterates $Z^1, Z^2, \ldots$ of Algorithm \ref{alg:pgd} satisfy:
    \begin{align*}
        \norm{\Delta(Z^t)}_F^2 \leq \left(1 - \frac{\alpha\eta}{4} \right)^{t} \norm{\Delta(Z^0)}_F^2 \quad \forall \ t \in \mathbb{N}.
    \end{align*}
\end{restatable}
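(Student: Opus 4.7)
The plan is to follow the standard template for provably convergent non-convex matrix factorization used by \citet{zheng2016convergence} and related works: show that a single projected-gradient step contracts the distance $\|\Delta(Z)\|_F^2$ by a factor of $(1-\alpha\eta/4)$ whenever $Z$ lies in the basin $\mathcal{B}(\tau/50)$ and then close the induction. The warm-start assumption supplies the base case; all the real work is in (i) proving a local regularity inequality for $f$, and (ii) verifying that neither projection can enlarge $\|\Delta(\cdot)\|_F$, so the descent inequality survives the two projection steps.

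For the contraction step, I would fix the orthogonal factor $R(Z^t)$ chosen in \eqref{eq:def_distance} and start from the identity
\begin{align*}
\|Z^t - \eta\nabla f(Z^t) - Z^*R(Z^t)\|_F^2
\;=\;\|\Delta(Z^t)\|_F^2 \;-\; 2\eta\,\langle \nabla f(Z^t), \Delta(Z^t)\rangle \;+\; \eta^2\,\|\nabla f(Z^t)\|_F^2.
\end{align*}
The core analytic statement to prove is a local regularity condition of the form
\begin{align*}
\langle \nabla f(Z), \Delta(Z)\rangle \;\geq\; c_1\,\alpha\,\|\Delta(Z)\|_F^2 \;+\; c_2\,\|\nabla f(Z)\|_F^2,
\qquad Z\in\mathcal{B}(\tau/50)\cap \overline{\Incoherentset}\cap \mathcal{H},
\end{align*}
with numerical constants $c_1,c_2$; plugging this back into the descent identity and choosing $\eta\alpha$ as in the hypotheses yields the one-step contraction $\|Z^{t+1}-Z^*R(Z^t)\|_F^2\le (1-\alpha\eta/4)\|\Delta(Z^t)\|_F^2$, and by definition $\|\Delta(Z^{t+1})\|_F\le \|Z^{t+1}-Z^*R(Z^t)\|_F$.

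The regularity inequality itself decomposes into three ingredients. First, using the form of $\nabla\Loglikelihood(Z)$ in \eqref{eq:gradient_likelihood}, a Taylor expansion of $g$ around $\langle A_k, Y^*\rangle$ together with the link-function bounds $\xi, \Xi$ lets me rewrite $\langle \nabla \Loglikelihood(Z),\Delta(Z)\rangle$ (up to a higher-order residual controlled by $\mathcal{B}(\tau/50)$) as a positive-weighted empirical quadratic form
\begin{align*}
\frac{1}{m}\sum_{k=1}^{m} \theta_k \,\langle A_k+A_k^T,\, ZZ^T-Y^*\rangle\,\langle A_k+A_k^T,\, \Delta(Z)Z^T+Z\Delta(Z)^T\rangle,
\end{align*}
with $\theta_k\in[\xi,\Xi]$. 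Its expectation corresponds, after restriction to $\mathcal{H}$, to a PSD operator on symmetric matrices whose smallest eigenvalue on the relevant subspace is of order $\xi\gamma$, which produces the $\alpha=\xi\gamma\sigma_r^*$ factor. Second, I add in the curvature contributed by the regularizer $\mathcal{R}(Z)$; a direct calculation (Zheng-Lafferty style) shows $\langle\nabla\mathcal{R}(Z),\Delta(Z)\rangle$ provides the missing strong convexity along the asymmetric directions where $\Loglikelihood$ is degenerate. Third, and most important, I need a concentration statement: the empirical quadratic form approximates its mean uniformly over $Z\in\overline{\Incoherentset}\cap\mathcal{H}\cap\mathcal{B}(\tau/50)$.

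The main obstacle is exactly this concentration step, and it is where the excerpt promises a novel contribution. Unlike in matrix completion, the sampling matrix $A_k$ has the two-non-zero-entry structure of \eqref{eq:def_A2}, so the set of $A_k$ does not give a scaled projection onto coordinate rank-one matrices and the Candes-Recht / Keshavan-Montanari-Oh tools do not apply directly. I would therefore apply matrix Bernstein \citep{tropp2015introduction} to the random operators
\begin{align*}
T_k \;\triangleq\; \theta_k\,(A_k+A_k^T)\otimes(A_k+A_k^T),
\end{align*}
suitably restricted, bounding their operator norm and matrix variance using the incoherence bound $\|Z\|_{2,\infty}^2\le 12\mu\|Z^*\|_F^2/n$ valid for $Z\in\overline{\Incoherentset}$ and the link-function bounds $\xi,\Xi$. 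Combined with a standard $\varepsilon$-net argument over the low-rank / incoherent set and a union bound with failure probability $\delta$, this concentration delivers the sample complexity $m\gtrsim(\mu r\kappa/\tau)^2 n\log(n/\delta)$ claimed in the theorem. The fiddly part is carrying the incoherence through the $\mathcal{P}_{\mathcal{H}}$ projection (handled by the promised Lemma \ref{lem:incoherence_of_projection}, which guarantees iterates remain in $\overline{\Incoherentset}$), and verifying that both projections are non-expansive with respect to $\|\Delta(\cdot)\|_F$: $\mathcal{P}_{\Incoherentset}$ is a row-wise clipping that contracts toward $Z^*$ (which itself lies in $\Incoherentset$ provided the warm start controls $\|Z^0\|_F$), and $\mathcal{P}_{\mathcal{H}}$ is the orthogonal projection onto a linear subspace containing $\solset$, so it contracts toward every element of $\solset$ simultaneously. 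These two facts let the per-step contraction survive the projection and close the induction.
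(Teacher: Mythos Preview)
Your high-level template (descent identity, non-expansiveness of the two projections toward $\solset$, induction) is exactly what the paper does, and your handling of the projections is correct. Two substantive points, however, separate your plan from the actual proof.

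\textbf{Packaging of the local regularity.} You aim for a single Cand\`es--Li style inequality $\llangle\nabla f,\Delta\rrangle\ge c_1\alpha\|\Delta\|_F^2+c_2\|\nabla f\|_F^2$. The paper instead proves two separate statements (Lemmas~\ref{lem:strong_convexity} and~\ref{lem:smoothness}) that both carry an \emph{extra} term $\|\solset^TD\Delta\|_F^2$; this term is produced by the regularizer, appears with a positive sign in the lower bound on $\llangle\nabla f,\Delta\rrangle$ and again in the upper bound on $\|\nabla f\|_F^2$, and is eliminated in the descent identity by choosing $\eta$ so that the two contributions cancel. If you try to fold the two lemmas into your single inequality you will find that the coefficient in front of $\|\Delta\|_F^2$ becomes negative (since $\tau\le 1$ while $\mu r\kappa\ge 1$), so the combined form does not hold with useful constants. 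You need the two-inequality route with the shared cross term.

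\textbf{Concentration without a net.} Your plan to apply matrix Bernstein to $T_k=\theta_k(A_k+A_k^T)\otimes(A_k+A_k^T)$ and then union-bound over an $\varepsilon$-net of the low-rank incoherent set will not recover the claimed $m=O\!\big((\mu r\kappa/\tau)^2 n\log(n/\delta)\big)$. A net over $Z\in\Real{n\times r}$ has log-cardinality $\Theta(nr)$, and the per-point scalar Bernstein bound already costs $\Theta(\mu r\kappa^2 n)$ samples, so the union bound pushes you to $m=\Omega(n^2)$. The paper's way around this is the main technical contribution: for the first-order term it writes $\Dataset(\Delta\solset^T)=v^TS_{\Dataset}v$ with $v=\vectext{\Delta R^T}$ and $S_{\Dataset}=\tfrac1m\sum_k a_ka_k^T$, $a_k=\vectext{(A_k+A_k^T)Z^*}$. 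The crucial point is that $S_{\Dataset}\in\Real{nr\times nr}$ is \emph{independent of $Z$}, so a single matrix-Bernstein bound on $\|S_{\Dataset}-\mathbb{E}S_{\Dataset}\|_2$ is automatically uniform over all $Z$ and no net is needed. For the second-order term $\Dataset(\Delta\Delta^T)$ (which is not in the tangent space at $Y^*$, so the $S_{\Dataset}$ trick does not apply), the paper introduces a \emph{dual sampling matrix} $B_{\Dataset}=\tfrac1m\sum_k e_{u_k}(\tilde e_{i_k}+\tilde e_{j_k})^T\in\Real{n_1\times n_2}$ and shows $\Dataset(\Delta\Delta^T)$ is controlled by a bilinear form in $B_{\Dataset}$; again $B_{\Dataset}$ is $Z$-independent, and matrix Bernstein on it yields a uniform bound. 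These two $Z$-free reductions are what make the $n\log n$ scaling work; your tensor-plus-net route misses them.
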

We highlight two important points from the above theorem. First, the dependence on the problem size is $O(nr^2\log n)$, which is near-optimal.
Second, for a well-chosen step-size, the algorithm convergences exponentially at rate $O((\tau/\mu r \kappa )^2)$. Although the constants in the sample complexity result and convergence rate are quite large in the statement of Theorem \ref{thm:main}, our experimental results in Section \ref{sec:simulations} show that in practice, these constants are moderate. The following section gives a sketch of the proof of Theorem \ref{thm:main}. The full proof is provided in the appendix.

\section{Proof Outline}\label{sec:proof}

Theorem \ref{thm:main} is nearly identical to the convergence guarantees of gradient descent for a strongly convex and smooth function, notwithstanding the projection step and the symmetries. Lemmas \ref{lem:strong_convexity} and \ref{lem:smoothness} establish properties akin to strong-convexity and smoothness respectively. Note that we have dropped the dependence on $Z$ for brevity; \textit{e.g.}, we denote $\nabla f(Z)$ by $\nabla f$. 
\begin{restatable}{lemma}{lSCHP}\label{lem:strong_convexity}
    Suppose the number of samples $m$ is at least $10^7\left(\mu r \kappa/\tau\right)^2 n \log\left(2n/\delta\right)$, for some $\delta \in (0,1)$.   
    Then, with probability at least $1-\delta$, $\forall \ Z \in \mathcal{H} \cap \mathcal{B}(\tau/50) \cap \overline{\Incoherentset}$,
    \begin{align*}
        \llangle \nabla f, \Delta \rrangle \geq \frac{\xi\gamma}{4} \norm{\Delta}_F^2  + \frac{\xi\gamma}{8} \norm{\Delta^TD\solset}_F^2.
    \end{align*}
\end{restatable}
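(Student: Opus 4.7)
The plan is to decompose $\langle \nabla f(Z), \delz \rangle$ into the loss and regularizer contributions, obtain a lower bound on each, and combine them via a matrix-factorization identity that converts block-wise residuals into $\|\delz\|_F^2$. Setting $\Phi = \solz$, $\Delta = \delz$, $B_k = A_k + A_k^T$, and $G \triangleq Z\Delta^T + \Delta Z^T = \Phi\Delta^T + \Delta\Phi^T + 2\Delta\Delta^T$, the symmetry of $B_k$ yields $\langle \nabla\mathcal{L}(Z), \Delta\rangle = \tfrac{1}{2m}\sum_k h_k \langle B_k, G\rangle$. Because the data is noiseless, $w_k = g(\langle A_k, Y^*\rangle)$, and the mean value theorem gives $h_k = (c_k/2)\langle B_k, G - \Delta\Delta^T\rangle$ with $c_k \in [\xi,\Xi]$ (valid because $Z \in \overline{\Incoherentset}$ places $z_k$ and the MVT point in $I$). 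Substituting produces the clean split
\begin{equation*}
    \langle \nabla\mathcal{L}(Z), \Delta\rangle = \frac{1}{4m}\sum_{k=1}^{m} c_k \langle B_k, G\rangle^2 - \frac{1}{4m}\sum_{k=1}^{m} c_k\langle B_k, G\rangle \langle B_k, \Delta\Delta^T\rangle,
\end{equation*}
the first sum being the restricted-strong-convexity contribution (bounded below by $\tfrac{\xi}{4m}\sum_k \langle B_k, G\rangle^2$) and the second being an $O(\|\Delta\|_F^2)$ perturbation.

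The crux is a uniform concentration bound: with high probability over the sampling, for every $Z \in \mathcal{H} \cap \mathcal{B}(\tau/50) \cap \overline{\Incoherentset}$, $\tfrac{1}{m}\sum_k \langle B_k, G\rangle^2 \gtrsim 4\gamma\|G_{12}\|_F^2$, where $G_{12}$ is the top-right $n_1 \times n_2$ block of $G$ (the only block visible to $B_k$). A direct computation---crucially exploiting $Z \in \mathcal{H}$ together with $\mathbf{1}^T V^* = 0$ so that every row of $G_{12}$ sums to zero---shows $\mathbb{E}[\langle B_1, G\rangle^2] = 4\gamma\|G_{12}\|_F^2$, while incoherence bounds $|\langle B_k, G\rangle| = O(\mu\|Z^*\|_F^2/n)$; scalar Bernstein then handles any fixed $G$ at the claimed sample complexity $m = \widetilde{\Omega}((\mu r\kappa/\tau)^2 n)$. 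The hard part---and, as Section \ref{sec:proof} flags, the paper's main technical novelty---is promoting this pointwise bound into a uniform bound over the low-dimensional manifold of admissible $G$'s. The standard matrix-completion concentration machinery \citep{candes2009exact,keshavan2010matrix} does not transfer, because $B_k$ is not a rank-one sampling operator and $\{A_k\}$ do not average to a projection; instead matrix Bernstein \citep{tropp2015introduction} is applied to the matrix-valued sum $\tfrac{1}{m}\sum_k B_k\langle B_k, G\rangle$, combined with an $\varepsilon$-net over rank-$3r$ matrices. The cross-term is then controlled by Cauchy--Schwarz and the analogous concentration applied to $\Delta\Delta^T$, with $\|\Delta\|_F^2 \leq (\tau/50)\sigma_r^*$ making it dominated by the main term.

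For the regularizer, using $\Phi^T D \Phi = 0$ (a consequence of $U^{*T}U^* = V^{*T}V^* = I_r$), a direct computation gives $\nabla\mathcal{R}(Z) = 4 D Z (Z^T D Z)$, and expanding $Z^T D Z = P + P^T + \Delta^T D \Delta$ with $P \triangleq \Phi^T D \Delta$ (so $\|P\|_F^2 = \|\Delta^T D \Phi\|_F^2$) yields $\langle \nabla\mathcal{R}(Z), \Delta\rangle = 2\|Z^T D Z\|_F^2 + O(\|\Delta\|_F^3)$. With $\lambda = \xi\gamma/4$, this contributes $(\xi\gamma/8)\|Z^T D Z\|_F^2$ up to absorbable error. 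We then invoke the Tu--Bhojanapalli/Zheng--Lafferty-style identity
\begin{equation*}
    \|ZZ^T - \Phi\Phi^T\|_F^2 + \tfrac{1}{2}\|Z^T D Z\|_F^2 \;\gtrsim\; \sigma_r(\Phi)^2 \|\Delta\|_F^2 \;=\; 2\sigma_r^*\|\Delta\|_F^2,
\end{equation*}
valid because $\Phi^T D\Phi = 0$ and $\sigma_r(\Phi)^2 = 2\sigma_r^*$. Since $\|G_{12}\|_F^2$ matches $\|(ZZ^T - Y^*)_{12}\|_F^2$ up to $O(\|\Delta\|_F^4)$ and the diagonal blocks of $ZZ^T - Y^*$ are controlled by $\|Z^T D Z\|_F$ plus $\|\Delta\|_F^2$ terms, combining the loss lower bound $\xi\gamma\|G_{12}\|_F^2$, the regularizer lower bound $(\xi\gamma/8)\|Z^T D Z\|_F^2$, and the identity yields the main $\tfrac{\xi\gamma}{4}\|\Delta\|_F^2$ term. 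The additional $\tfrac{\xi\gamma}{8}\|\Delta^T D\Phi\|_F^2$ is obtained by holding back a fraction of $\|Z^T D Z\|_F^2$ from the Tu--Bhojanapalli substitution and translating it into $\|P\|_F^2$ using $4\|P\|_F^2 = \|P + P^T\|_F^2 + \|P - P^T\|_F^2$ in combination with the optimality property $\Phi^T \Delta = \Delta^T \Phi$ of $R(Z)$, which constrains the antisymmetric part $P - P^T$. All quartic-in-$\Delta$ errors are then swept under $\|\Delta\|_F^2 \leq (\tau/50)\sigma_r^*$, leaving the stated inequality; the dominant technical difficulty throughout remains the uniform concentration step, everything else being routine book-keeping once that bound is in hand.
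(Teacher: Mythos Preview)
Your overall architecture---split $\llangle\nabla f,\Delta\rrangle$ into the $\mathcal{L}$ and $\mathcal{R}$ contributions, linearize $h_k$ via the mean value theorem, concentrate the resulting empirical quadratic form, and absorb the block cross-terms with a Zheng--Lafferty-type regularizer identity---matches the paper. The substantive divergence is in how the uniform concentration is obtained. You propose scalar/matrix Bernstein on the $G$-dependent sum together with an $\varepsilon$-net over low-rank matrices; the paper uses no covering argument at all. For the main term it writes $\mathcal{D}(\Delta\solset^T)=v^{T}S_{\Dataset}v$ with $v=\vectext{\Delta R^{T}}$ and $S_{\Dataset}=\tfrac1m\sum_k a_ka_k^{T}$, $a_k=\vectext{(A_k+A_k^{T})Z^*}\in\Real{nr}$ (Lemma~\ref{lem:D_to_quadratic_form}); because $S_{\Dataset}$ depends only on the data and on $Z^*$, a single matrix Bernstein bound on $\norm{S_{\Dataset}-\mathbb{E}S_{\Dataset}}_2$ gives the lower bound uniformly in $Z$ with the correct $nr$ dimension factor built in. For the quartic piece $\mathcal{D}(\Delta\Delta^{T})$ the paper introduces a separate ``dual sampling matrix'' $B_{\Dataset}=\tfrac1m\sum_{(u;i,j)}e_u(\tilde e_i+\tilde e_j)^{T}$ and shows $\mathcal{D}(\Delta\Delta^{T})\le 8\,z_U^{T}B_{\Dataset}z_V$ with $z$ the vector of squared row-norms of $\Delta$, again reducing to one $Z$-independent Bernstein application (Lemmas~\ref{lem:deltadelta_intermediate} and~\ref{lem:BD_concentration}). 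Your $\varepsilon$-net route is viable but heavier, and the phrase ``matrix Bernstein on $\tfrac1m\sum_k B_k\langle B_k,G\rangle$'' is ambiguous since that object still depends on $G$. On the algebra side, the paper separates $\Delta\solset^{T}$ from $\Delta\Delta^{T}$ at the outset (Lemma~\ref{lem:convexity_algebra}), gets $\norm{\Delta_U\solset_V^{T}+\solset_U\Delta_V^{T}}_F^2\ge\sigma_r^*\norm{\Delta}_F^2+2\llangle\solset_U\Delta_V^{T},\Delta_U\solset_V^{T}\rrangle$ directly from the singular values of $\solset_U,\solset_V$, and then cancels the residual cross-term against the regularizer via the explicit computation in \citet{zheng2016convergence}, Appendix~C.1---so there is no need for the Tu--Bhojanapalli detour through $\norm{ZZ^{T}-Y^*}_F^2$ and the block-reassembly you sketch.
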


\begin{restatable}{lemma}{lSHP}\label{lem:smoothness}
    Suppose the number of samples $m$ is at least {$2n\log(4n/\delta)$}, for some $\delta \in (0,1)$.
    Then, with probability at least $1-\delta$, $\forall \ Z \in \mathcal{B}(1) \cap \overline{\Incoherentset}$,
    \begin{align*}
        \norm{\nabla f}^2_F &\leq 10^5 (\Xi \gamma \mu r \sigma^*_1)^2 \norm{\Delta}_F^2 + \frac{(\xi\gamma)^2}{2}\sigma^*_1\norm{\solset^TD\Delta}_F^2.
    \end{align*}
\end{restatable}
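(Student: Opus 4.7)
My plan is to split $\|\nabla f(Z)\|_F^2 \le 2\|\nabla\Loglikelihood(Z)\|_F^2 + 2(\lambda/4)^2\|\nabla \mathcal{R}(Z)\|_F^2$ and bound the two pieces separately. The regularizer contribution is purely deterministic, while the log-likelihood contribution is where concentration is needed.

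For the regularizer I would compute $\nabla \mathcal{R}(Z) = 4 D Z (Z^TDZ)$ and use the key algebraic identity $\solz^T D \solz = 0$ (which holds because $\mathcal{R}(Z^*)=0$ and $\mathcal{R}$ is orthogonally invariant) to expand
\begin{align*}
Z^T D Z \;=\; \solz^T D \delz \;+\; \delz^T D \solz \;+\; \delz^T D \delz.
\end{align*}
A triangle inequality then gives $\|Z^TDZ\|_F^2 \le O(1)\|\solz^T D \delz\|_F^2 + O(1)\|\delz\|_F^4$, and on $\mathcal{B}(1)$ we have $\|\delz\|_F^4 \le \sigma^*_r \|\delz\|_F^2$ and $\|Z\|_{\mathrm{op}}^2 \le (\sqrt{2\sigma^*_1}+\sqrt{\sigma^*_r})^2 = O(\sigma^*_1)$. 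Combining $\|\nabla\mathcal{R}(Z)\|_F \le 4\|Z\|_{\mathrm{op}}\|Z^TDZ\|_F$ with $\lambda = \xi\gamma/4$ produces exactly the second RHS term, plus a small $(\xi\gamma\sigma^*_1)^2\|\delz\|_F^2$ contribution that is absorbed into the first term of the stated bound.

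For the log-likelihood gradient, the pivotal observation is that in the noiseless setting $\nabla \Loglikelihood(\Tilde Z^*) = 0$ pointwise for every $\Tilde Z^* \in \solset$, because each scalar factor $g(z_k^*)-w_k$ vanishes by construction. Writing $\nabla\Loglikelihood(Z) = \nabla\Loglikelihood(Z) - \nabla\Loglikelihood(\solz)$ and applying the mean value theorem to $h_k$, with $|h_k| = \bigl|\tfrac{g'(z_k)g'(\zeta_k)}{g(z_k)(1-g(z_k))}\bigr|\,|z_k - z_k^*| \le \Xi\,|\llangle A_k, H\rrangle|$ (valid since $Z, \solz \in \overline{\Incoherentset}$ keeps $z_k,z_k^*,\zeta_k \in I$), yields
\begin{align*}
\|\nabla\Loglikelihood(Z)\|_F \;\le\; \Xi \, \Bigl\| \tfrac{1}{m}\sum_{k=1}^{m} \llangle A_k, H \rrangle (A_k+A_k^T) \Bigr\|_F \cdot \|Z\|_{\mathrm{op}},
\end{align*}
where $H = ZZ^T - \solz\solz^T = \delz Z^T + \solz \delz^T$ is linear in $\delz$. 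I would bound the middle Frobenius norm by $\|E[\cdot]\|_F + \|\cdot - E[\cdot]\|_F$: a direct computation of the expectation (using uniform sampling of $(u;i,j)$ and the zero-column-sum property $1^TV=0$ enforced by $\mathcal{P}_{\mathcal{H}}$) gives $\|E[\cdot]\|_F = O(\gamma \sqrt{\sigma^*_1})\|\delz\|_F$, while the stochastic deviation is controlled by the matrix Bernstein inequality applied to the sum $\sum_k \llangle A_k, H\rrangle(A_k+A_k^T)$. Using the incoherence of $Z$ and $Z^*$ to bound each summand by $O(\mu r \sigma^*_1/n)$ in operator norm (since $\|A_k+A_k^T\|_{\mathrm{op}}=\sqrt{2}$ and $|\llangle A_k, H\rrangle|\le O(\mu r \sigma^*_1/n)$) and computing the matrix variance via $(A_k+A_k^T)^2 = 2 e_u e_u^T + (\Tilde e_i - \Tilde e_j)(\Tilde e_i - \Tilde e_j)^T$, the sample requirement $m \ge 2n\log(4n/\delta)$ is just enough to dominate the log-dimensional factor in Bernstein, producing the target bound $\|\nabla \Loglikelihood(Z)\|_F^2 \le O((\Xi\gamma\mu r\sigma^*_1)^2)\|\delz\|_F^2$ with constants comfortably below $10^5$.

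The main obstacle is the matrix Bernstein step: computing $\|E[\llangle A_k, H\rrangle^2 (A_k+A_k^T)^2]\|_{\mathrm{op}}$ tightly enough that the variance proxy is controlled in terms of $\|\delz\|_F^2$ (rather than a pessimistic $\sigma^*_1$-scale bound) with only $O(n\log n)$ samples. As emphasized by the authors, the non-standard structure of $A_k$—supported on an item pair $(i,j)$ rather than a single index—means one cannot directly reuse the Candès--Recht identities, and the variance has to be computed by hand from the uniform sampling distribution, which is precisely where the paper's technical contribution sits.
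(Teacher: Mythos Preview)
Your treatment of the regularizer term is essentially the paper's: expand $Z^TDZ$ using $\solz^TD\solz=0$, bound $\|DZZ^TDZ\|_F^2$ by a combination of $\|\solz^TD\delz\|_F^2$ and $\sigma^*_1\|\delz\|_F^2$, and absorb the latter into the $10^5(\Xi\gamma\mu r\sigma^*_1)^2$ term.

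The log-likelihood part, however, has two genuine gaps. First, the displayed step
\[
\norm{\nabla\Loglikelihood(Z)}_F \;\le\; \Xi\,\Bigl\|\tfrac{1}{m}\sum_k \llangle A_k,H\rrangle(A_k+A_k^T)\Bigr\|_F\,\norm{Z}_{\mathrm{op}}
\]
does not follow from $|h_k|\le\Xi|\llangle A_k,H\rrangle|$: one has $h_k=c_k\llangle A_k,H\rrangle$ with $c_k\in[\xi,\Xi]$ varying with $k$ (and with $Z$), and in general $\|\sum_k c_k a_k M_k\|_F\not\le\Xi\|\sum_k a_k M_k\|_F$ when the $M_k$'s can cancel. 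Second, even granting that step, the matrix Bernstein bound you propose on $\sum_k\llangle A_k,H\rrangle(A_k+A_k^T)$ is for a \emph{fixed} $H$; since $H=ZZ^T-\solz\solz^T$ depends on $Z$, a single Bernstein application does not give the required uniformity over $\mathcal{B}(1)\cap\overline{\Incoherentset}$. A covering argument over $Z$ would inflate the sample requirement well beyond $2n\log(4n/\delta)$, and treating the map $H\mapsto\sum_k\llangle A_k,H\rrangle(A_k+A_k^T)$ as a random operator lands you in dimension $n^2$ (or at best $nr$), which again costs more samples.

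The paper sidesteps both issues simultaneously. It first writes $\norm{\nabla\Loglikelihood}_F^2=\sup_{\norm{W}_F=1}\llangle\nabla\Loglikelihood,W\rrangle^2$ and applies Cauchy--Schwarz in $k$ (Lemma~\ref{lem:smoothness_algebra}) to obtain
\[
\llangle\nabla\Loglikelihood,W\rrangle^2 \;\le\; 2\Xi^2\bigl(\Dataset(\delz\solz^T)+\tfrac{1}{4}\Dataset(\delz\delz^T)\bigr)\,\Dataset(WZ^T),
\]
which absorbs the $Z$-dependent coefficients $c_k$ into the scalar $\Xi^2$ without any norm-of-sum manipulation. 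Each $\Dataset(\cdot)$ factor is then bounded (Lemma~\ref{lem:smoothness_upperbound}) not via Bernstein but through a purely combinatorial inequality $\Dataset(WZ^T)\le 4\max\{\norm{B_\Dataset}_1,\norm{B_\Dataset^T}_1\}\norm{Z}_{2,\infty}^2\norm{W}_F^2$, where $B_\Dataset=\tfrac{1}{m}\sum_k e_{u_k}(\tilde e_{i_k}+\tilde e_{j_k})^T$ is the dual sampling matrix and is \emph{independent of $Z$}. A Chernoff bound on the column/row sums of $B_\Dataset$ then controls $\max\{\norm{B_\Dataset}_1,\norm{B_\Dataset^T}_1\}\le 4/\min\{n_1,n_2\}$ with exactly the $m\ge 2n\log(4n/\delta)$ samples, and uniformity in $Z$ is automatic. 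This decoupling into a $Z$-independent random object is the key device your proposal is missing.
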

The lemmas are easy to interpret as strong convexity and smoothness conditions if we ignore the terms $\norm{\Delta^TD\solset}_F^2$ (which stem from the regularizer).

At a high level, the method for proving both these lemmas is similar. First, the expressions to be bounded, namely $\llangle \nabla f, \Delta \rrangle$ and $\norm{\nabla f}_F^2$, are written out as the sum and product of $\mathcal{D}(Y)$ terms (recall the definition of $\mathcal{D}(Y)$ from \eqref{eq:def_D_operator}).
Second, we demonstrate that these terms, which capture an empirical mean of i.i.d. random variables, are close to their statistical mean. Specifically, we show that with high probability, $\mathcal{D}(Y) \approx \mathbb{E}[\mathcal{D}(Y)]$, uniformly for all $Y$ in some appropriate set. Finally, we put these results together with the appropriate parameters to ensure that the bounds presented in Lemmas \ref{lem:strong_convexity} and \ref{lem:smoothness} hold. The following sections flesh out more details.

\paragraph{Showing Strong Convexity}
The proof of Lemma \ref{lem:strong_convexity} can be split into the following three lemmas. 
\begin{restatable}{lemma}{lSCA}\label{lem:convexity_algebra}
    For any $Z \in \overline{\Incoherentset}$,
    \begin{align*}
        \llangle \nabla \mathcal{L}, \Delta \rrangle &\geq \frac{\xi}{2} \mathcal{D}\left(\Delta\solset^T\right) - \frac{5\Xi}{8} \mathcal{D}\left(\Delta\Delta^T\right)
    \end{align*}
\end{restatable}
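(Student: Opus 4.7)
The plan is to carry out a direct algebraic expansion of $\llangle \nabla \mathcal{L}(Z), \Delta \rrangle$ using the gradient formula \eqref{eq:gradient_likelihood}, then apply the mean value theorem to $g(z_k)-w_k$, and close with a single Young-type inequality. This is a deterministic claim with no randomness, so there is no concentration argument to make; the work is bookkeeping with symmetric inner products.

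First I would use the fact that $A_k + A_k^T$ is symmetric to rewrite $\llangle (A_k + A_k^T)Z, \Delta \rrangle = \llangle A_k + A_k^T, Z\Delta^T \rrangle$, split $Z = \solset + \Delta$, and, after exploiting $\llangle A_k + A_k^T, M\rrangle = \llangle A_k + A_k^T, M^T\rrangle$ (which follows from symmetry of the left argument), obtain $\llangle A_k + A_k^T, Z\Delta^T\rrangle = a_k + b_k$, where $a_k \triangleq \llangle A_k + A_k^T, \Delta\solset^T\rrangle$ and $b_k \triangleq \llangle A_k + A_k^T, \Delta\Delta^T\rrangle$. A parallel expansion inside $h_k$ is needed. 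In the noiseless setting, $w_k = g(z_k^*)$ with $z_k^* \triangleq \llangle A_k, \solset\solset^T\rrangle$, so by the MVT, $g(z_k)-w_k = g'(\bar z_k)(z_k - z_k^*)$ for some $\bar z_k$ between $z_k$ and $z_k^*$. Writing $z_k - z_k^* = \llangle A_k, \solset\Delta^T + \Delta\solset^T + \Delta\Delta^T\rrangle$ and using $\llangle A_k, M\rrangle = \tfrac{1}{2}\llangle A_k + A_k^T, M\rrangle$ for symmetric $M$ collapses this to $a_k + \tfrac{1}{2}b_k$. Hence $h_k = c_k(a_k + \tfrac{1}{2}b_k)$ with $c_k \triangleq g'(z_k)g'(\bar z_k)/[g(z_k)(1-g(z_k))]$.

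The incoherence hypothesis $Z \in \overline{\Incoherentset}$ enters only to pin down the range of $c_k$: together with the (tighter) incoherence of $Z^*$, it gives $|z_k| \leq 24\mu\norm{Z^*}_F^2/n$ and $|z_k^*| \leq 2\mu\norm{Z^*}_F^2/n$, so $z_k$, $z_k^*$, and hence $\bar z_k$ all lie in the interval $I$, and \eqref{eq:link_function_lower_bound}--\eqref{eq:link_function_upper_bound} give $\xi \leq c_k \leq \Xi$. Assembling, $\llangle \nabla\mathcal{L}, \Delta\rrangle = \frac{1}{m}\sum_k c_k(a_k + \tfrac{1}{2}b_k)(a_k + b_k) = \frac{1}{m}\sum_k c_k\bigl[a_k^2 + \tfrac{3}{2}a_k b_k + \tfrac{1}{2}b_k^2\bigr]$. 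Young's inequality with weight tuned to make the coefficient of $a_k^2$ drop to $\tfrac{1}{2}$, namely $\tfrac{3}{2}|a_k b_k| \leq \tfrac{1}{2} a_k^2 + \tfrac{9}{8} b_k^2$, lower bounds the bracket by $\tfrac{1}{2} a_k^2 - \tfrac{5}{8} b_k^2$. Bounding $c_k \geq \xi$ on the positive part and $c_k \leq \Xi$ on the negative one, and recognizing the definition of $\mathcal{D}$, completes the proof.

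The only real hurdle is this bookkeeping. The factor-of-two mismatch between the expansion of $\llangle \nabla\mathcal{L}, \Delta\rrangle$ (which produces $a_k + b_k$) and that of $z_k - z_k^*$ (which produces $a_k + \tfrac{1}{2}b_k$) is exactly what yields the asymmetric coefficient $\tfrac{3}{2}$ on $a_k b_k$ in the bracket, and the Young-inequality weight must then be chosen so that the residual on $b_k^2$ comes out to precisely $\tfrac{5}{8}$. Any miscount sabotages the clean split into the two $\mathcal{D}$-terms stated in the lemma.
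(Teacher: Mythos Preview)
Your proposal is correct and follows essentially the same approach as the paper's own proof: expand the gradient using $Z=\solset+\Delta$, apply the mean value theorem in the noiseless setting to get the factor $a_k+\tfrac12 b_k$, multiply out to obtain $a_k^2+\tfrac32 a_k b_k+\tfrac12 b_k^2$, and then use the elementary inequality (the paper phrases it as $(a+\tfrac32 b)^2\ge 0$, you phrase it as Young's inequality with the same weights) to reach $\tfrac12 a_k^2-\tfrac58 b_k^2$; the incoherence hypothesis is invoked in the same way to sandwich $c_k$ between $\xi$ and $\Xi$.
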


\begin{restatable}{lemma}{lSCLB}\label{lem:convexity_lowerbound}
    Let some $\epsilon, \delta \in (0, 1)$ be given. Suppose the number of samples $m$ exceeds $96 \mu r  \left(\kappa/\epsilon\right)^2 n\log\left(n/\delta\right)$. Then, with probability at least $1 - \delta$, $\forall \ Z \in \mathcal{H},$
    \begin{align*}
         \mathcal{D}\left(\Delta\solset^T\right) \geq \gamma\left((1- \epsilon)\sigma^*_r\norm{\Delta}_F^2 + 2 \llangle \solset_U \Delta_V^T, \Delta_U \solset_V^T \rrangle \right).
    \end{align*}
\end{restatable}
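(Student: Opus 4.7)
The plan is to prove Lemma~\ref{lem:convexity_lowerbound} in three stages: reduce $\mathcal{D}(\Delta\solset^T)$ to sampled pairwise differences of a single $(n_1 \times n_2)$ matrix $M$, compute its expectation exactly and lower-bound it, then obtain uniform concentration by a matrix-Bernstein argument on an appropriate low-dimensional subspace. For the reduction, I would observe that for a sample $(u;i,j)$ with sampling matrix $A$, a block-matrix calculation gives $\llangle A + A^T, \Delta\solset^T \rrangle = M_{u,i} - M_{u,j}$, where $M \triangleq \Delta_U \solset_V^T + \solset_U \Delta_V^T$. The constraint $\mathbf{1}^T \Delta_V = 0$ (from $Z \in \mathcal{H}$) together with the standing convention $\mathbf{1}^T \solset_V = 0$ (since $\mathbf{1}^T V^* = 0$) gives $M \mathbf{1} = \mathbf{0}$. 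Uniform marginals yield $\mathbb{E}[M_{u,i}^2] = \|M\|_F^2/(n_1 n_2)$, and the zero-row-sum of $M$ gives $\mathbb{E}[M_{u,i} M_{u,j}] = -\|M\|_F^2/(n_1 n_2(n_2-1))$; combining, $\mathbb{E}[\mathcal{D}(\Delta\solset^T)] = \gamma \|M\|_F^2$.

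Expanding the square produces $\|M\|_F^2 = \|\Delta_U \solset_V^T\|_F^2 + 2\llangle \solset_U \Delta_V^T, \Delta_U \solset_V^T \rrangle + \|\solset_U \Delta_V^T\|_F^2$, in which the cross term matches the one in the lemma exactly. Since $\solset = Z^* R$ for orthogonal $R$, both Gram matrices $\solset_U^T \solset_U$ and $\solset_V^T \solset_V$ equal $R^T \Sigma^* R \succeq \sigma_r^* I_r$, so $\|\Delta_U \solset_V^T\|_F^2 \geq \sigma_r^* \|\Delta_U\|_F^2$ and symmetrically $\|\solset_U \Delta_V^T\|_F^2 \geq \sigma_r^* \|\Delta_V\|_F^2$. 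Hence $\mathbb{E}[\mathcal{D}(\Delta\solset^T)] \geq \gamma \sigma_r^* \|\Delta\|_F^2 + 2\gamma \llangle \solset_U \Delta_V^T, \Delta_U \solset_V^T \rrangle$, which is the target bound with $1$ in place of $(1-\epsilon)$.

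The remaining step is to deduce $|\mathcal{D}(\Delta\solset^T) - \mathbb{E}[\mathcal{D}(\Delta\solset^T)]| \leq \epsilon \gamma \sigma_r^* \|\Delta\|_F^2$ uniformly over $Z \in \mathcal{H}$. The crucial observation is that $\Delta\solset^T = N Z^{*T}$ with $N \triangleq \Delta R(Z)^T$ and $\|N\|_F = \|\Delta\|_F$, so every relevant matrix lies in the fixed $nr$-dimensional subspace $\mathcal{V} \triangleq \{N Z^{*T} : N \in \Real{n \times r}\}$. On $\mathcal{V}$, $\mathcal{D}$ is the quadratic form induced by the $nr \times nr$ Gram operator $\mathcal{S}' \triangleq \tfrac{1}{m}\sum_k \vectext{B_k} \vectext{B_k}^T$, where $B_k \triangleq (A_k + A_k^T) Z^*$, so the uniform additive bound reduces to $\|\mathcal{S}' - \mathbb{E}[\mathcal{S}']\|_{op} \leq \epsilon \gamma \sigma_r^*$. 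I would apply matrix Bernstein to the i.i.d.\ mean-zero summands $X_k = \vectext{B_k}\vectext{B_k}^T - \mathbb{E}[\cdot]$ of dimension $nr$. Each $B_k$ has only three nonzero rows (from $\pm Z^*_{U,u}$ and the difference $Z^*_{V,i} - Z^*_{V,j}$), so incoherence of $Z^*$ yields $\|B_k\|_F^2 \leq 6 \|Z^*\|_{2,\infty}^2 \leq 12 \mu r \sigma_1^*/n$ (using $\|Z^*\|_F^2 \leq 2r\sigma_1^*$), which controls $\|X_k\|_{op}$; the variance proxy is at most $\|B_k\|_F^2 \cdot \|\mathbb{E}[\vectext{B_1}\vectext{B_1}^T]\|_{op}$, and the second factor equals $\sup_{\|N\|_F = 1}\mathbb{E}[\mathcal{D}(N Z^{*T})] \leq 2\gamma \sigma_1^*$ by the Stage~1 computation. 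Substituting these into matrix Bernstein at deviation $\epsilon \gamma \sigma_r^*$ reproduces the stated sample complexity $m \geq 96 \mu r (\kappa/\epsilon)^2 n \log(n/\delta)$.

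The hard part is this last step. The argument works only because projecting onto $\mathcal{V}$ via the fixed, incoherent matrix $Z^*$ replaces the universal per-sample bound $\|A_k + A_k^T\|_F^2 = 4$ with the much smaller $\|B_k\|_F^2 = O(\mu r \sigma_1^*/n)$ and simultaneously reduces the ambient dimension from $n^2$ to $nr$. Without the projection, matrix Bernstein on the full $n^2 \times n^2$ sampling operator would require $m \gg n^2$ samples; the projection is precisely what brings the complexity down to the near-optimal $O(\mu r \kappa^2 n \log n)$ scaling stated in the lemma, and allows the result to hold for \emph{every} $Z \in \mathcal{H}$ without any incoherence assumption on $Z$ itself.
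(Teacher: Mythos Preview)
Your proposal is correct and follows essentially the same route as the paper: the quadratic-form reduction via $\vectext{(A_k+A_k^T)Z^*}$ (your $\mathcal{S}'$ is the paper's $S_{\Dataset}$), the exact expectation $\gamma\|M\|_F^2$ and its lower bound via $\sigma_r(\solset_U)=\sigma_r(\solset_V)=\sqrt{\sigma_r^*}$, and the matrix-Bernstein step with $L=6\mu\|Z^*\|_F^2/n$ and variance proxy $2\gamma\sigma_1^* L$ all match the paper's Lemmas~\ref{lem:expectation_of_D}, \ref{lem:D_lower_bound}, \ref{lem:bound_on_operator_norm}, and \ref{lem:SD_concentration}. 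The only minor gloss is that your ``Stage~1 computation'' used $M\mathbf{1}=0$, whereas for general $N$ the bound $\sup_{\|N\|_F=1}\mathbb{E}[\mathcal{D}(NZ^{*T})]\leq 2\gamma\sigma_1^*$ needs the projection $J$ to appear and then be dropped (since $\|XJ\|_F\leq\|X\|_F$); the paper handles this explicitly in Lemma~\ref{lem:bound_on_operator_norm}.
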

In the above lemma, we use the notation $\solset = (\solset_U, \solset_V)$ and $\Delta = (\Delta_U, \Delta_V)$.
\begin{restatable}{lemma}{lSCUB}\label{lem:convexity_upperbound}
    Let some $\epsilon, \delta \in (0, 1)$ be given. Suppose the number of samples $m$ exceeds $845  \left(\mu r \kappa/\epsilon\right)^2 n \log\left(n/\delta\right)$. Then, with probability at least $1 - \delta$, $\forall \ Z \in \overline{\Incoherentset} \cap \mathcal{B}(\epsilon)$, 
    \begin{align*}
        \Dataset(\Delta\Delta^T)\leq 10 \epsilon \gamma  \sigma^*_r \norm{\Delta}_F^2.
    \end{align*}
\end{restatable}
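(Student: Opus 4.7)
The plan is to bound $\mathcal{D}(\Delta\Delta^T)$ in two stages: first I will control its expectation using the ball and incoherence assumptions, and then upgrade this to a uniform high-probability statement via a scalar Bernstein inequality combined with a covering argument over the admissible set of $\Delta$'s. Since the strategy is expectation-plus-deviation, the constant $10$ in the conclusion is slack that must accommodate Bernstein noise.

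\textbf{Reduction and mean bound.} Because $\Delta\Delta^T$ is symmetric and each sampling matrix $A_k$ is supported in the off-diagonal $(U,V)$-block, I first simplify $\llangle A_k + A_k^T, \Delta\Delta^T\rrangle = 2(M_{u_k,i_k} - M_{u_k,j_k})$ with $M := \Delta_U\Delta_V^T$, so that
\begin{align*}
  \mathcal{D}(\Delta\Delta^T) = \frac{4}{m}\sum_{k=1}^m (M_{u_k,i_k} - M_{u_k,j_k})^2.
\end{align*}
A direct computation using the uniform sampling of $(u;i,j)$ gives $\mathbb{E}[(M_{u,i}-M_{u,j})^2] \le \gamma \norm{M}_F^2$. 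Combining sub-multiplicativity with AM-GM yields $\norm{M}_F^2 \le \norm{\Delta_U}_F^2\norm{\Delta_V}_F^2 \le \norm{\Delta}_F^4/4$, and then the hypothesis $Z \in \mathcal{B}(\epsilon)$ produces $\mathbb{E}[\mathcal{D}(\Delta\Delta^T)] \le \epsilon\gamma\sigma_r^*\norm{\Delta}_F^2$. Thus the target bound already holds in expectation with a factor of $10$ to spare for concentration.

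\textbf{Pointwise Bernstein.} For a fixed $Z \in \overline{\Incoherentset}\cap\mathcal{B}(\epsilon)$ I would apply the scalar Bernstein inequality to the i.i.d. summands $X_k = (M_{u_k,i_k}-M_{u_k,j_k})^2$. The $\overline{\Incoherentset}$ bound on $Z$ together with $\norm{Z^*}_{2,\infty}^2 = \mu\norm{Z^*}_F^2/n$ and $\norm{Z^*}_F^2 \le 2r\sigma_1^*$ gives, via the triangle inequality, $\norm{\Delta}_{2,\infty}^2 \lesssim \mu r\sigma_1^*/n$. Consequently $\norm{M}_\infty \le \norm{\Delta_U}_{2,\infty}\norm{\Delta_V}_{2,\infty} \lesssim \mu r\sigma_1^*/n$, which supplies both the almost-sure bound $|X_k| \lesssim (\mu r\sigma_1^*/n)^2$ and the variance bound $\mathbb{E}[X_k^2] \le \norm{M}_\infty^2\,\mathbb{E}[X_k] \lesssim \gamma(\mu r\sigma_1^*/n)^2\norm{M}_F^2$. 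Bernstein then delivers $|\mathcal{D}(\Delta\Delta^T) - \mathbb{E}[\mathcal{D}(\Delta\Delta^T)]| \le 9\epsilon\gamma\sigma_r^*\norm{\Delta}_F^2$ with failure probability at most $\delta'$, provided $m \gtrsim (\mu r\kappa/\epsilon)^2 \log(1/\delta')$.

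\textbf{Uniform bound and main obstacle.} To turn this pointwise estimate into the claimed uniform statement, I would build an $\varepsilon$-net of the admissible set $\{\Delta \in \Real{n\times r} : \norm{\Delta}_F^2 \le \epsilon\sigma_r^*,\ \norm{\Delta}_{2,\infty} \lesssim \sqrt{\mu/n}\norm{Z^*}_F\}$; standard volume arguments give size $\exp(O(nr\log(1/\varepsilon)))$. The map $\Delta \mapsto \mathcal{D}(\Delta\Delta^T)$ is locally Lipschitz (its Lipschitz constant along the admissible set is controlled by the sampled entries of $M$ and by $\norm{\Delta}_F$), so choosing $\varepsilon$ polynomially small transfers the pointwise bound to the whole set. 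The genuinely delicate point — and the main obstacle — is that the right-hand side $10\epsilon\gamma\sigma_r^*\norm{\Delta}_F^2$ vanishes as $\norm{\Delta}_F \to 0$, so a fixed additive Bernstein deviation at each net point would dominate the target on small-norm elements. I would handle this with a peeling argument: partition the admissible set into $O(\log n)$ shells on which $\norm{\Delta}_F$ is essentially constant, run the covering-plus-Bernstein argument at the shell-appropriate accuracy, and union-bound over shells. Tracking the adaptive scale in each shell keeps the sample complexity at $m \gtrsim (\mu r\kappa/\epsilon)^2 n\log(n/\delta)$, matching the statement.
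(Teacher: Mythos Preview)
Your mean calculation is correct, but the concentration strategy has a genuine gap: the covering argument cannot deliver the stated sample complexity. An $\varepsilon$-net of $\{\Delta\in\Real{n\times r}:\norm{\Delta}_F^2\le\epsilon\sigma_r^*\}$ has cardinality $\exp(\Theta(nr\log(1/\varepsilon)))$, so the union bound replaces the $\log(n/\delta)$ factor in the target sample complexity by something of order $nr$; your assertion that the outcome ``match[es] the statement'' is therefore unjustified. The peeling device does not address this, and the Lipschitz transfer from net to full set is itself a nontrivial step here because the Lipschitz constant of $\Delta\mapsto\mathcal{D}(\Delta\Delta^T)$ depends on the random samples and would have to be controlled uniformly.

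The paper's proof avoids all of this by decoupling the randomness from $\Delta$ \emph{before} concentrating. Via Cauchy--Schwarz and $(a+b)^2\le 2(a^2+b^2)$ it shows
\[
\mathcal{D}(\Delta\Delta^T)\;\le\;8\,y_U^T B_{\mathcal D}\,y_V,\qquad y_j:=\norm{\Delta_j}_2^2,\quad B_{\mathcal D}:=\frac1m\sum_{k} e_{u_k}(\tilde e_{i_k}+\tilde e_{j_k})^T,
\]
where the $n_1\times n_2$ random matrix $B_{\mathcal D}$ is independent of $\Delta$. A single matrix Bernstein bound on $\norm{B_{\mathcal D}-\bar B}_2$ then yields, simultaneously for every $Z\in\overline{\Incoherentset}\cap\mathcal B(\epsilon)$,
\[
\mathcal{D}(\Delta\Delta^T)\;\le\;2\bigl(\gamma\norm{\Delta}_F^2+2\norm{B_{\mathcal D}-\bar B}_2\,\norm{\Delta}_{2,\infty}^2\bigr)\norm{\Delta}_F^2.
\]
This bound is already uniform in $\Delta$ and already multiplicative in $\norm{\Delta}_F^2$, so neither covering nor peeling is needed; substituting $\norm{\Delta}_F^2\le\epsilon\sigma_r^*$, $\norm{\Delta}_{2,\infty}^2\le 52\mu r\sigma_1^*/n$, and the Bernstein deviation $\norm{B_{\mathcal D}-\bar B}_2\le(\epsilon/13\mu r\kappa)/\min\{n_1,n_2\}$ gives the result with the stated constant $845$.
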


Using these three lemmas, Lemma \ref{lem:strong_convexity} can be derived in a straightforward manner (proof in Appendix \ref{sec:main_proofs}). Indeed, if we ignore the cross-term $\llangle \solset_U \Delta_V^T, \Delta_U \solset_V^T \rrangle$ in Lemma \ref{lem:convexity_lowerbound}, it is not hard to see that the three lemmas combined lead to the lower bound $\llangle \nabla \mathcal{L}, \Delta \rrangle \geq O(1) \gamma \sigma^*_r \norm{\Delta}_F^2$. The gradient of the regularizer helps cancel out this cross-term, but leads to the additional $\norm{\Delta D \solset}_F^2$ term.

The steps in the proof of Lemma \ref{lem:convexity_algebra} are algebraic in nature and largely follow the pattern presented in \citet{zheng2016convergence}; the proof is given in Appendix \ref{sec:initial_lemmas}. The main technical contribution of our work lies in the proof of Lemmas \ref{lem:convexity_lowerbound} and \ref{lem:convexity_upperbound}. Although the statements of these lemmas are similar to Lemmas 10 and 8 respectively of \citet{zheng2016convergence}, we prove these results in different ways. We outline the broad steps taken to prove these results, filling in the details in Appendices \ref{sec:lower_bound} and \ref{sec:upper_bound} respectively.

A key step to prove Lemma \ref{lem:convexity_lowerbound} is to show the identity:
\begin{align}
\label{eq:quadratic_form1}
    &\mathcal{D}\left(\Delta\solset^T\right) = v^T S_{\Dataset}v, \ \text{where} \ v \triangleq \vectext{\Delta R^T}, \nonumber \\
    & \quad S_{\Dataset} \triangleq \frac{1}{m} \sum_{k = 1}^m a_ka_k^T,\  a_k \triangleq \vectext{(A_k+A_k^T) Z^*}. 
\end{align}
Here, we use the notion of vectorization of a matrix, \textit{i.e.}, stacking the columns of a matrix to form a vector. Thus, for a matrix $Z \in \Real{n \times r}$, $\vectext{Z}$ is a vector in $\Real{nr}$. 

Given this quadratic form, it follows that:
\begin{align*}
    \vert \Dataset\left(\Delta\solset^T \right) - \mathbb{E}\left[\Dataset\left(\Delta\solset^T \right)\right] \vert 
    & \leq \norm{S_{\Dataset} - \mathbb{E}[S_{\Dataset}]}_{2} \norm{v}_2^2   
\end{align*}
The term $\norm{S_{\Dataset} - \mathbb{E}[S_{\Dataset}]}_{2}$ can be bounded with high probability using the matrix Bernstein inequality (see Lemma \ref{lem:SD_concentration}). To complete the proof of Lemma \ref{lem:convexity_lowerbound}, it remains to calculate $\mathbb{E}\left[\Dataset\left(\Delta\solset^T\right)\right]$. In Lemma \ref{lem:expectation_of_D}, we show that \(\mathbb{E}\left[\Dataset\left(\Delta\solset^T\right)\right] = \gamma \norm{\Delta_U\solset_V^T + \solset_U\Delta_V^T}_F^2.\)

The proof of Lemma \ref{lem:convexity_upperbound}, just like the one for Lemma \ref{lem:convexity_lowerbound}, involves analyzing a quadratic form around a random matrix, which we split into the mean (expectation) term and the deviation from the mean. We show that:
\begin{align*}
    &\Dataset(\Delta \Delta^T) = y^T B_{\Dataset} y =  y^T \mathbb{E}[B_{\Dataset}] y + y^T (B_{\Dataset} - \mathbb{E}[B_{\Dataset}]) y; \\
    &y \in \Real{n} : \, y_j = \norm{\Delta_j}_2^2 \, \forall j , \ B_{\Dataset} = \frac{1}{m}\sum_{(u; i, j) \in \Dataset} e_u(\Tilde{e}_i + \Tilde{e}_j).
\end{align*}
The first term is bounded above with the warm-start assumption: $\norm{\Delta}_F^2 \leq O(1) \sigma^*_r$. The second term is bounded using the matrix Bernstein inequality (see Lemma \ref{lem:BD_concentration}).

\paragraph{Showing Smoothness}
Our method of proving Lemma \ref{lem:smoothness} follows the proof style of \citet{zheng2016convergence}.
We start by observing that 
\begin{align*}
    \norm{\nabla \Loglikelihood}_F^2 = \sup_{W \in \Real{n \times r}: \norm{W}_F = 1} \llangle \nabla \Loglikelihood, W \rrangle^2.
\end{align*}
Therefore, it suffices to find a bound for the term on the right hand side of the above equation. The following lemmas, proven in Appendix \ref{sec:upper_bound}, provide the requisite bound.
\begin{restatable}{lemma}{lSA}\label{lem:smoothness_algebra}
    For any $Z \in \overline{\Incoherentset}$ and any $W \in \Real{n \times r}$,
    \begin{align*}
        \llangle \nabla \Loglikelihood, W \rrangle^2  \leq 2 \Xi^2 \left(\mathcal{D}(\Delta\solset^T) + \frac{1}{4}\mathcal{D}(\Delta\Delta^T)\right) \, \mathcal{D}(WZ^T).
    \end{align*}
\end{restatable}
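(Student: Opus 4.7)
The plan is to start from the gradient formula \eqref{eq:gradient_likelihood} and rewrite $\llangle \nabla \Loglikelihood, W\rrangle$ as a weighted sum of scalar inner products, then apply Cauchy--Schwarz to peel off the $\mathcal{D}(WZ^T)$ factor. Using cyclicity of the trace, one checks that $\llangle (A_k+A_k^T)Z, W\rrangle = \llangle A_k+A_k^T,\, WZ^T\rrangle$, so
\begin{align*}
    \llangle \nabla \Loglikelihood, W \rrangle = \frac{1}{m}\sum_{k=1}^{m} h_k \, \llangle A_k + A_k^T,\, WZ^T\rrangle.
\end{align*}
Cauchy--Schwarz then gives
\begin{align*}
    \llangle \nabla \Loglikelihood, W \rrangle^2 \leq \Bigl(\tfrac{1}{m}\sum_k h_k^2\Bigr) \cdot \mathcal{D}(WZ^T),
\end{align*}
so the remaining task is to bound $\tfrac{1}{m}\sum_k h_k^2$ by $2\Xi^2\bigl(\mathcal{D}(\Delta\solset^T) + \tfrac{1}{4}\mathcal{D}(\Delta\Delta^T)\bigr)$.

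Next I would bound each $|h_k|$ pointwise. In the noiseless model $w_k = g(b_k)$ with $b_k \triangleq \llangle A_k, Y^*\rrangle$, while $z_k = \llangle A_k, ZZ^T\rrangle$. Applying the mean value theorem to the numerator of $h_k$ yields
\begin{align*}
    h_k = \frac{g'(z_k)\,g'(c_k)}{g(z_k)(1-g(z_k))}\,(z_k - b_k)
\end{align*}
for some $c_k$ between $z_k$ and $b_k$. The hypothesis $Z \in \overline{\Incoherentset}$, together with the elementary observation $|\llangle A_k, ZZ^T\rrangle| \le 2\norm{Z}_{2,\infty}^{2}$ (and the analogous tighter bound for $Z^*$), forces $z_k, b_k, c_k$ all to lie in the interval $I$ of Section~\ref{sec:generative_model}. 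By the definition \eqref{eq:link_function_upper_bound} of $\Xi$, the prefactor is then at most $\Xi$ in absolute value, and so $h_k^2 \leq \Xi^2 (z_k - b_k)^2$.

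Finally, I would expand $z_k - b_k$ using $Z = \solset + \Delta$ and the identity $\solset\solset^T = Z^*Z^{*T}$ (rotational invariance), which yields
\begin{align*}
    ZZ^T - Z^*Z^{*T} = \Delta\solset^T + \solset\Delta^T + \Delta\Delta^T.
\end{align*}
Since $\Delta\Delta^T$ is symmetric, $\llangle A_k, \Delta\Delta^T\rrangle = \tfrac{1}{2}\llangle A_k + A_k^T, \Delta\Delta^T\rrangle$, while the two asymmetric cross terms combine to $\llangle A_k + A_k^T, \Delta\solset^T\rrangle$. Hence
\begin{align*}
    z_k - b_k = \llangle A_k + A_k^T, \Delta\solset^T\rrangle + \tfrac{1}{2}\llangle A_k + A_k^T, \Delta\Delta^T\rrangle.
\end{align*}
The elementary inequality $(p + q/2)^2 \leq 2p^2 + q^2/2$, equivalent to $(p-q/2)^2 \geq 0$, applied term by term and averaged over $k$, turns this into the claimed factor $2\Xi^2\bigl(\mathcal{D}(\Delta\solset^T) + \tfrac{1}{4}\mathcal{D}(\Delta\Delta^T)\bigr)$. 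This lemma is essentially a deterministic manipulation; no concentration argument is needed, and the only subtlety is ensuring that the link-function arguments remain in $I$, which is precisely the role of the incoherence assumption $Z \in \overline{\Incoherentset}$.
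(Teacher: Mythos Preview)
Your proposal is correct and follows essentially the same route as the paper's proof: both expand $h_k$ via the mean value theorem, use the incoherence assumption to confine all link-function arguments to $I$ (so the prefactor is bounded by $\Xi$), and then combine Cauchy--Schwarz with the elementary inequality $(p+q/2)^2 \le 2p^2 + q^2/2$. The only cosmetic difference is the order of operations: the paper applies a \emph{weighted} Cauchy--Schwarz with weights $a_k = g'(z_k)g'(y_k)/[g(z_k)(1-g(z_k))]$ and then bounds $\max_k a_k \le \Xi$ on each factor, whereas you bound $h_k^2 \le \Xi^2(z_k - b_k)^2$ pointwise first and then apply unweighted Cauchy--Schwarz; both paths yield the identical intermediate bound $\Xi^2\bigl(\tfrac{1}{m}\sum_k b_k^2\bigr)\bigl(\tfrac{1}{m}\sum_k c_k^2\bigr)$.
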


\begin{restatable}{lemma}{lSUB}\label{lem:smoothness_upperbound}
    Suppose the number of samples $m$ is at least {$2n\log(4n/\delta)$}.
    Then, with probability at least $1 - \delta$, the following inequalities hold uniformly for all $Z \in \overline{\Incoherentset}$:
    \begin{align*}
    \mathcal{D}(\Delta\solset^T) 
    &\leq 16\gamma(\mu r \sigma^*_1) \norm{\Delta}_F^2,\\
    {\mathcal{D}}(\Delta\Delta^T) 
    &\leq 416 \gamma(\mu r \sigma^*_1) \norm{\Delta}_F^2,\\
    \mathcal{D}(WZ^T)
    &\leq 192 \gamma(\mu r \sigma^*_1)  \norm{W}^2_F \ \forall \ W \in \Real{n \times r}.
    \end{align*}
\end{restatable}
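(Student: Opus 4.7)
The plan is to prove all three bounds by a unified combinatorial argument: for each $Y$, decompose $\llangle A_k + A_k^T, Y\rangle$ into a constant number of row-wise inner products, apply Cauchy--Schwarz together with the incoherence of $Z$ or $\solset$ to peel off an $O(\mu r \sigma_1^*/n)$ factor, and control the residual $\sum_k \norm{X_{p_k}}^2$ (with $X \in \{\Delta, W\}$) via the maximum vertex degree of the random sampling graph. Because the resulting bound depends on $Z$ only through $\norm{Z}_{2,\infty}$ and on $W$ only through $\norm{W}_F$, uniformity over $\overline{\Incoherentset}$ and over all $W \in \Real{n\times r}$ comes for free once the degree bound is secured as a single high-probability event.

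For the algebraic step, I would use the identity $\llangle A_k + A_k^T, MN^T\rangle = \langle M_{u_k}, N_{n_1+i_k} - N_{n_1+j_k}\rangle + \langle N_{u_k}, M_{n_1+i_k} - M_{n_1+j_k}\rangle$, valid for any $M, N \in \Real{n \times r}$. Squaring and applying $(a+b+c+d)^2 \leq 4(a^2+b^2+c^2+d^2)$ together with Cauchy--Schwarz bounds $|\llangle A_k+A_k^T, MN^T\rangle|^2$ by a constant times terms of the form $\norm{M_{u_k}}^2 \norm{N_{n_1+\star}}^2$ and $\norm{N_{u_k}}^2 \norm{M_{n_1+\star}}^2$. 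For $Y = \Delta\solset^T$, one factor is pulled out uniformly as $\norm{\solset}_{2,\infty}^2 = \norm{Z^*}_{2,\infty}^2 \leq 2\mu r \sigma_1^*/n$ (using that rotations preserve row norms and $\norm{Z^*}_F^2 \leq 2r\sigma_1^*$). For $\Delta\Delta^T$, use $\norm{\Delta}_{2,\infty} \leq \norm{Z}_{2,\infty} + \norm{\solset}_{2,\infty} = O(\sqrt{\mu r\sigma_1^*/n})$ via $Z \in \overline{\Incoherentset}$. For $WZ^T$, pull out $\norm{Z}_{2,\infty}^2 \leq 24\mu r \sigma_1^*/n$ and leave the $W$-row norms intact in the sum.

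In each case the residual takes the form $\sum_k \norm{X_{p_k}}^2$ with $p_k$ one of the three vertices appearing in sample $k$; I rewrite $\sum_k \norm{X_{p_k}}^2 = \sum_{p \in [n]} d_p \norm{X_p}^2 \leq d_{\max} \norm{X}_F^2$, where $d_p$ counts the samples that involve vertex $p$ (on either the user or the item side). A multiplicative Chernoff bound applied to each user's degree (which is $\mathrm{Binomial}(m, 1/n_1)$, mean $m/n_1$) and to each item's degree (stochastically dominated by $\mathrm{Binomial}(2m, 1/n_2)$, mean $2m/n_2$), followed by a union bound over the $n$ vertices, yields $d_{\max} \leq c \cdot m/n$ with probability at least $1-\delta$ under the stated sample complexity $m \geq 2n\log(4n/\delta)$. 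Substituting this back gives $\mathcal{D}(Y) \leq O(\mu r \sigma_1^*/n^2)\, \norm{X}_F^2 = O(\gamma \mu r \sigma_1^*) \norm{X}_F^2$ since $\gamma = 2/(n_1(n_2-1)) = \Theta(1/n^2)$, with the same high-probability event covering all three bounds simultaneously.

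The main obstacle is the $\mathcal{D}(WZ^T)$ bound: unlike the other two, $W$ carries no incoherence, so the Cauchy--Schwarz step has to be arranged to peel off the incoherent factor $\norm{Z}_{2,\infty}^2$ rather than any $W$-quantity, leaving the $W$-row norms in precisely the position where the degree bound collapses them to $\norm{W}_F^2$. Matching the numerical constants $16$, $416$, and $192$ stated in the lemma will require careful bookkeeping but no new ideas beyond those outlined above.
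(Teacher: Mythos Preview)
Your approach is correct and essentially identical to the paper's: the paper packages the same argument via a ``dual sampling matrix'' $B_{\Dataset} = \frac{1}{m}\sum_k e_{u_k}(\tilde e_{i_k}+\tilde e_{j_k})^T$, proving $\mathcal{D}(WZ^T) \le 4\max\{\norm{B_\Dataset}_1,\norm{B_\Dataset^T}_1\}\,\norm{Z}_{2,\infty}^2\norm{W}_F^2$ by Cauchy--Schwarz and H\"older, then bounding the $\ell_1$ operator norm (which is exactly the normalized max vertex degree you describe) by Chernoff plus a union bound. One bookkeeping caution: your scalings $d_{\max}\le c\,m/n$ and $\gamma=\Theta(1/n^2)$ are only valid when $n_1\asymp n_2$; in general the degree bound is $d_{\max}/m=O(1/\min\{n_1,n_2\})$, and the paper closes the loop via the exact inequality $2/(n\min\{n_1,n_2\})\le \gamma$, which you should use in place of the asymptotic claim.
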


Lemma \ref{lem:smoothness} follows by combining these lemmas and accounting for the gradient of the regularizer (see Appendix \ref{sec:main_proofs}).

\begin{figure*}[ht]
    \centering
    \begin{subfigure}[b]{0.35\textwidth}
        \centering
        \includegraphics[width=\textwidth]{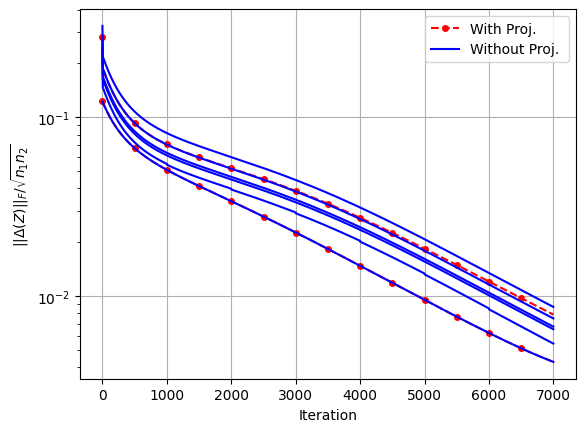}
        \caption{Different initializations}
    \end{subfigure}
    \hspace{1.1cm}
    \begin{subfigure}[b]{0.35\textwidth}
        \centering
        \includegraphics[width=\textwidth]{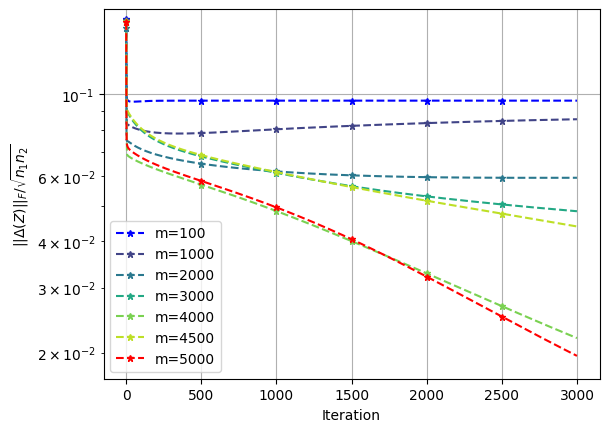}
        \caption{Varying dataset size}
    \end{subfigure}\\
        \begin{subfigure}[b]{0.35\textwidth}
        \centering
       \includegraphics[width=\textwidth]{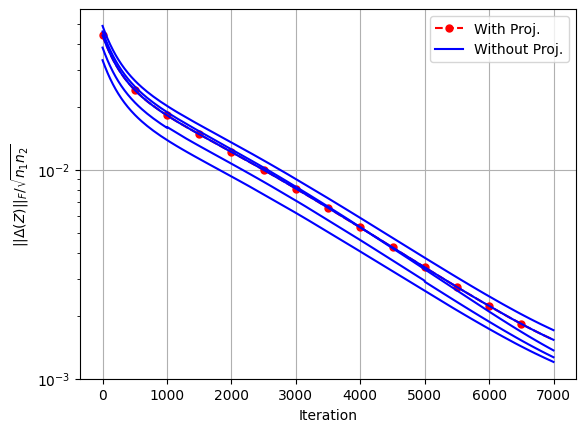}
       \caption{Different initializations}
    \end{subfigure}
    \hspace{1.1cm}
       \begin{subfigure}[b]{0.35\textwidth}
        \centering
       \includegraphics[width=\textwidth]{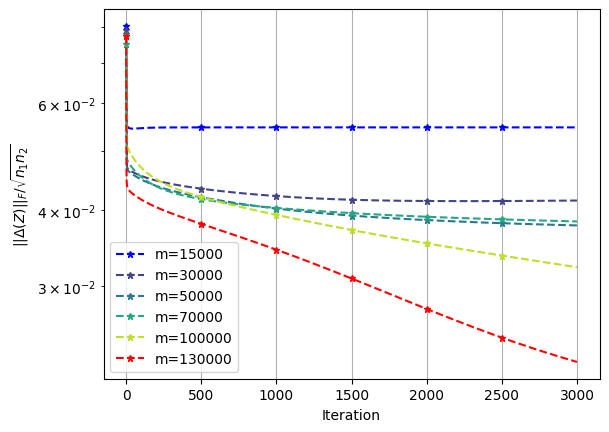}
        \caption{Varying dataset size}
    \end{subfigure}
    \caption{The top row and bottom row show the results for $(n_1,n_2)=(200,300)$ and $(n_1,n_2)=(2000,3000)$, respectively. (a) and (c) illustrate the effect of different initializations with a fixed number of data points, while the remaining plots demonstrate the effect of varying dataset size $m$. Y-axes are in log scale. }
    \label{fig:1}
\end{figure*}

\section{Simulations}\label{sec:simulations}

\textbf{Data Generation:} We generated a random ground truth matrix $X^*\in\mathbb R^{n_1\times n_2}$ with entries selected independently at random according to normal distribution and calculated its rank-$r$ SVD, $U^*\Sigma^*V^{*T}$. 
We have two settings: a low-dimensional setting with $(n_1,n_2)=(200,300)$ and a high-dimensional setting with $(n_1,n_2)=(2000,3000)$. In both settings, we had $r=3$, $\mu\approx 1.01, \kappa=1.1$. 
Using this matrix, we randomly and independently collected $m$ comparison data points. Specifically, for each setting, the comparison dataset took the form $\{(A_k, w_k):k=1,\ldots,m\}$, where $A_k$ represents the $k\textsuperscript{th}$ sampling matrix as in \eqref{eq:def_A2} and and $w_k=g(\llangle A_k, Z^*Z^{*T}\rrangle)$. In this work, we set the regularizer coefficient to be $\lambda = \gamma/40$.
Subsequently, we applied Algorithm \ref{alg:pgd} using the stepsize $\eta$ as recommended by Theorem \ref{thm:main}. 
The quality of the algorithm's output at iteration $t$ is measured by $||\Delta(Z^t)||_F/\sqrt{n_1n_2}$. Our code can be found \href{https://github.com/indy-lab/matrix-factorization-comparisons}{here}.
Figure \ref{fig:1} presents the resulting plots.

\textbf{Initialization:} We initialize the algorithm with $Z^{0} = Z^{*T} + \vartheta(N_1, N_2J)$, where $N_1 \in \mathbb{R}^{n_1 \times r}$ and $N_2 \in \mathbb{R}^{n_2 \times r}$, with their entries drawn from a standard normal distribution.  For our experiments, we use $\vartheta \in \{0.5, 1, 2\}$. Figures \ref{fig:1} (a) and (c) show the effect of different initial solutions and also the projection steps in low and high dimensional settings, respectively. 
In both settings, the number of data points $m$ and also the stepsize were chosen as recommended in Theorem \ref{thm:main}. This result confirms the linear convergence of Algorithm \ref{alg:pgd} as predicted by our theoretical analysis. It is important to emphasize that while both a warm start and the projection step are required for our theoretical guarantees, these simulation results suggest that they are not needed in practice.

\textbf{Dataset size:} We examine the impact of dataset size $m$ on the algorithm's performance. Figures \ref{fig:1} (b) and (d) demonstrate the resulting normalized errors in low and high dimensional settings, respectively.  
As depicted in these plots, a large enough $m$ leads to linear convergence of the algorithm while for a small $m$, the error $\norm{\Delta(Z^t)}_F$ does not go to zero as $t$ increases. In both plots, the red curves show the converges rate for $m$ computed by $c_0(\mu r\kappa)^2n\log(n/\delta)$ with $\delta=0.05$ and $c_0$ being $1/4$ for low-dimensional and $1/2$ for high-dimensional setting. In Appendix \ref{sec:sim_results_appendix}, we present an additional plot that highlights the dependence of $m$ on $r$.

\section{Conclusion}\label{sec:conclusion}

In this paper, we consider a mathematical model for a comparison-based recommender system: the concatenation of the classical matrix factorization framework with a Plackett-Luce-style comparison oracle.
We proved that, given a relatively sparse dataset, the parameters of the model can be recovered through an efficient, gradient descent based algorithm, despite the loss function being nonconvex.
Our proof rests on establishing that the loss function satisfies properties akin to strong convexity and smoothness in a neighborhood around the optimal solution.
For our analysis, we made two assumptions: we are given a warm start and we observe the exact choice probabilities (rather than binary outcomes).
We hope that our work will form the basis of further analysis of this problem that performs a global analysis or provides guarantees for data with noisy comparisons.
Finally, we believe that this work is an important contribution in establishing the viability of comparison-based recommender systems.

\newpage

\bibliography{proper_citations}
\bibliographystyle{plainnat}

\newpage
\appendix
\onecolumn
\section{Helper Lemmas}\label{sec:helper_lemmas}

\subsection{Matrix Inner Product Identities}

We state some basic identities of the matrix inner product operator, which are trivial to verify but are used frequently in the paper. In the following identities, $D, E,$ and $F$ are arbitrary matrices so long as their sizes are compatible with the equations.
\begin{align}
    \llangle E, F \rrangle &= \Tr{EF^T} = \Tr{FE^T} \label{eq:identity_trace} \\
    \llangle E, F \rrangle &= \llangle F, E \rrangle = \llangle E^T, F^T \rrangle \label{eq:identity_transpose} \\
    \llangle DE, F \rrangle &= \llangle D, FE^T \rrangle = \llangle E, D^TF \rrangle, \quad \llangle D, EF \rrangle = \llangle DF^T, E \rrangle = \llangle E^TD, F \rrangle \label{eq:identity_shift}
\end{align}
From these identities, we get that for any sampling matrix $A$ (defined in \eqref{eq:def_A2}) and any $Y, Z \in \Real{n \times r}$:
\begin{align}\label{eq:A_AT_identity}
    \llangle (A + A^T)Y, Z \rrangle &= \llangle AY, Z \rrangle + \llangle A^TY, Z \rrangle \nonumber \\
    & = \llangle A, ZY^T \rrangle + \llangle A^T, ZY^T \rrangle \nonumber \\
    & = \llangle A, ZY^T + YZ^T \rrangle
\end{align}

Let $W$ and $Z$ be two matrices in $\Real{n \times r}$. Recall the notation convention introduced in Section \ref{sec:proof}. Using the above identity and \eqref{eq:def_A2}, we get that for any sampling matrix $A$ corresponding to the triplet $(u; i, j)$,
\begin{align}\label{eq:A_AT_YZT_identity}
    \llangle (A + A^T), WZ^T \rrangle &= \llangle e_u(\Tilde{e}_i - \Tilde{e}_j)^T, W_UZ_V + Z_UW_V \rrangle\\
    &= \llangle W_u, Z_i - Z_j \rrangle + \llangle Z_u, W_i - W_j \rrangle \label{eq:A_AT_YZT_identity2}
\end{align}

\subsection{The Frobenius Norm of the Product of Two Matrices}
Let $X$ be any matrix and let $\sigma_{\max}(X)$ and $\sigma_{\min}(X)$ denote the largest and smallest singular values of $X$. Let $v$ be any vector such that the product $Xv$ is compatible.
By the definition of singular values:
\begin{align*}
    \sigma_{\min} (X)\norm{v}_2 \leq \norm{Vx}_2 \leq \sigma_{\max}(X)\norm{v}_2
\end{align*}
Using this basic fact, we can prove the following result.
\begin{lemma}\label{lem:bounds_on_product_norms}
    Let $U \in \Real{n_1 \times r}$ and $V\in \Real{n_2 \times r}$ be any two matrices. Let $\sigma_1(U) \geq \ldots \geq \sigma_r(U)$ denote the singular values of $U$ and $\sigma_1(V) \geq \ldots \geq \sigma_r(V)$ denote the singular values of $V$. Then $\norm{UV^T}_F^2$ satisfies the following bounds:
    \begin{align*}
        \sigma_r(U)^2 \norm{V}_F^2 &\leq \norm{UV^T}_F^2 \leq \sigma_1(U)^2 \norm{V}_F^2 \\
        \sigma_r(V)^2 \norm{U}_F^2 &\leq \norm{UV^T}_F^2 \leq \sigma_1(V)^2 \norm{U}_F^2         
    \end{align*}
\end{lemma}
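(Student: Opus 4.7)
The plan is to prove the result by decomposing the Frobenius norm into a sum of squared $\ell_2$ norms of vectors, then applying the stated bound on $\|Xv\|_2$ in terms of the extreme singular values of $X$.

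First I would write $\|UV^T\|_F^2$ as the sum of squared norms of the columns of $UV^T$. Each column of $UV^T$ is of the form $U\tilde v_j$, where $\tilde v_j \in \mathbb{R}^r$ is the $j$-th column of $V^T$ (equivalently, the $j$-th row of $V$, written as a column vector). This is compatible with $U$, so the bound recalled just before the lemma gives
\begin{align*}
\sigma_r(U)^2 \|\tilde v_j\|_2^2 \;\leq\; \|U\tilde v_j\|_2^2 \;\leq\; \sigma_1(U)^2 \|\tilde v_j\|_2^2
\end{align*}
for every $j \in [n_2]$. Summing over $j$ and using $\sum_j \|\tilde v_j\|_2^2 = \|V\|_F^2$ yields the first pair of inequalities.

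For the second pair, the key observation is that $\|UV^T\|_F^2 = \|VU^T\|_F^2$, since the Frobenius norm is invariant under transposition. So I would repeat the identical argument with the roles of $U$ and $V$ swapped: decompose $\|VU^T\|_F^2$ by columns, each of which has the form $V\tilde u_i$ for some $\tilde u_i \in \mathbb{R}^r$ (the $i$-th row of $U$), apply the singular value bound for $V$, and sum. This produces $\sigma_r(V)^2 \|U\|_F^2 \leq \|UV^T\|_F^2 \leq \sigma_1(V)^2 \|U\|_F^2$.

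There is no real obstacle here; the proof is a couple of lines once the column decomposition of the Frobenius norm is invoked. The only small thing to be careful about is to decompose by the correct index (columns of $UV^T$ have dimension $n_1$, corresponding to left-multiplication by $U$, while their count is $n_2$, corresponding to the rows of $V$), so that the singular-value bound is applied to the right matrix in each of the two inequality pairs.
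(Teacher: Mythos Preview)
Your proposal is correct and essentially identical to the paper's own proof: the paper also decomposes $\norm{UV^T}_F^2$ as a sum over columns $\norm{UV_j}_2^2$, applies the singular-value bound to each term, sums, and then obtains the second pair of inequalities by invoking $\norm{UV^T}_F = \norm{VU^T}_F$ and swapping the roles of $U$ and $V$.
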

\begin{proof}
    We first prove the inequality $\norm{UV^T}_F^2 \geq \sigma_r(U)^2 \norm{V}_F^2$. Let $V_j$ denote the $j\textsuperscript{th}$ row of $V$, written as a column vector ($r \times 1$ matrix). Let $(UV^T)^j$ denote the $j\textsuperscript{th}$ column of $UV^T$. Finally, note that the squared Frobenius norm of a matrix is the sum of the squared $\ell_2$ norms of its rows or of its columns. Stitching together these simple facts, we get.
    \begin{align*}
        \norm{UV^T}_F^2 &= \sum_{j = 1}^{n_2} \norm{(UV^T)^j}_2^2 = \sum_{j = 1}^{n_2} \norm{UV_j}_2^2 \\
        &\geq \sum_{j = 1}^{n_2} \sigma_r(U)^2 \norm{V_j}_2^2  = \sigma_r(U)^2 \sum_{j = 1}^{n_2} \norm{V_j}_2^2 \\
        &= \sigma_r(D)^2 \norm{V}_F^2
    \end{align*}    
    The upper bound $\norm{UV^T}_F^2 \leq \sigma_1(U)^2 \norm{V}_F^2$ can be derived using the same steps, except we use the inequality $\norm{UV_j}_2 \leq \sigma_1(U) \norm{V_j}_2$ instead of $\norm{UV_j}_2 \geq \sigma_r(U) \norm{V_j}_2$. Finally, the second set of bounds follow by applying the first set of bounds to the matrix $VU^T$, and noting that $\norm{UV^T}_F = \norm{VU^T}_F$.
\end{proof}

\subsection{The Incoherence of the Iterates}
Recall that we have assumed that the initial point $Z^0$ satisfies the bound $\norm{\Delta(Z^0)}_F^2 \leq \sigma^*_r/16$, i.e., we are given a warm start (see Section \ref{sec:algorithm}). With this assumption, we can prove the following lemmas.
\begin{lemma}\label{lem:solset_in_C}
    Let $\Incoherentset$ be the set defined in \eqref{eq:def_incoherent_set}, i.e.,
    \begin{align*}
        \Incoherentset \triangleq \left\{Z \in  \mathbb{R}^{n \times r} \, : \, \norm{Z}_{2, \infty} \leq \frac{4}{3}\sqrt{\frac{\mu}{n} }\norm{Z^0}_F \right\}
    \end{align*}
    Then all the equivalent ground-truth matrices lie in $\Incoherentset$, i.e. $\solset \subseteq \Incoherentset$.
\end{lemma}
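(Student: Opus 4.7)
The goal is to show that every $\tilde{Z}^* = Z^* R$ (for orthogonal $R$) has $\|\tilde{Z}^*\|_{2,\infty} \leq \tfrac{4}{3}\sqrt{\mu/n}\,\|Z^0\|_F$. The natural approach is to first reduce the bound to a comparison between $\|Z^*\|_F$ and $\|Z^0\|_F$ using rotational invariance and the definition of the incoherence parameter $\mu$, and then close that comparison with the warm-start assumption $\|\Delta(Z^0)\|_F^2 \leq \sigma_r^*/16$ stated just before the lemma.

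First, I would observe that right-multiplication by an orthogonal matrix preserves row $\ell_2$ norms (since for each row $z_j$, $\|z_j R\|_2 = \|z_j\|_2$) and preserves the Frobenius norm. Hence for every $\tilde{Z}^* \in \solset$,
\begin{align*}
    \|\tilde{Z}^*\|_{2,\infty} = \|Z^*\|_{2,\infty}, \qquad \|\tilde{Z}^*\|_F = \|Z^*\|_F.
\end{align*}
By the definition \eqref{eq:def_mu} of $\mu$, we have $\|Z^*\|_{2,\infty} = \sqrt{\mu/n}\,\|Z^*\|_F$. So it suffices to prove $\|Z^*\|_F \leq \tfrac{4}{3}\|Z^0\|_F$.

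Next, by choosing $\tilde{Z}^* = \solset(Z^0)$ (the closest element of $\solset$ to $Z^0$) and applying the reverse triangle inequality,
\begin{align*}
    \|Z^0\|_F \geq \|\solset(Z^0)\|_F - \|\Delta(Z^0)\|_F = \|Z^*\|_F - \|\Delta(Z^0)\|_F.
\end{align*}
The warm-start assumption gives $\|\Delta(Z^0)\|_F \leq \tfrac{1}{4}\sqrt{\sigma_r^*}$, while from the SVD structure of $Z^*$ (its squared singular values are $2\sigma_i^*$) we get $\|Z^*\|_F^2 = 2\sum_{i=1}^r \sigma_i^* \geq 2\sigma_r^*$, so $\|Z^*\|_F \geq \sqrt{2\sigma_r^*} \geq \sqrt{\sigma_r^*}$. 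Therefore $\|\Delta(Z^0)\|_F \leq \tfrac{1}{4}\|Z^*\|_F$, which yields $\|Z^0\|_F \geq \tfrac{3}{4}\|Z^*\|_F$, i.e.\ $\|Z^*\|_F \leq \tfrac{4}{3}\|Z^0\|_F$.

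Combining these steps, $\|\tilde{Z}^*\|_{2,\infty} = \sqrt{\mu/n}\,\|Z^*\|_F \leq \tfrac{4}{3}\sqrt{\mu/n}\,\|Z^0\|_F$, which places $\tilde{Z}^*$ in $\Incoherentset$. There is no real obstacle here; the only nontrivial point is noticing that the warm-start bound $\sigma_r^*/16$ is just tight enough (a factor of $1/4$ slack) to convert the additive perturbation of $Z^0$ around $\solset$ into the multiplicative factor $4/3$ appearing in the definition of $\Incoherentset$. This is presumably why the constant $4/3$ (rather than $1$) is built into $\Incoherentset$ in the first place.
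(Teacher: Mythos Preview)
Your proposal is correct and follows essentially the same approach as the paper: both use the rotational invariance of $\|\cdot\|_{2,\infty}$ and $\|\cdot\|_F$, the definition of $\mu$, and the (reverse) triangle inequality together with the warm-start bound $\|\Delta(Z^0)\|_F^2 \leq \sigma_r^*/16$ and the SVD identity $\|Z^*\|_F^2 = 2\sum_i \sigma_i^*$ to obtain $\|Z^0\|_F \geq \tfrac{3}{4}\|Z^*\|_F$. The only cosmetic difference is ordering: you first reduce to $Z^*$ via rotational invariance and then compare norms, whereas the paper compares norms first and notes the invariance at the end.
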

\begin{proof}
Start with the identity $Z^0 = \solset(Z^0) + \Delta(Z^0)$ (which follows from \eqref{eq:def_difference_sol}). By the triangle inequality, we get
\begin{align*}
\norm{\solset(Z^0)}_F - \norm{\Delta(Z^0)}_F  \leq \norm{Z^0}_F \leq \norm{\solset(Z^0)}_F + \norm{\Delta(Z^0)}_F.
\end{align*}
Note that all matrices in $\solset$ have the same Frobenius norm. This implies that $\norm{\solset(Z^0)}_F = \norm{Z^*}_F$. Combining this with the bound on $\norm{\Delta(Z^0)}_F$, we get
\begin{align}\label{eq:Z0_bounds}
    \norm{Z^*}_F - \sqrt{\sigma^*_r}/4  \leq \norm{Z^0}_F \leq \norm{Z^*}_F + \sqrt{\sigma^*_r}/4
\end{align}
Recall that the singular values of $Z^*$ are $\sqrt{2\sigma^*_1}, \sqrt{2\sigma^*_2}, \ldots, \sqrt{2\sigma^*_r}$. We know that the Frobenius norm of a matrix is the $\ell_2$ norm of the vector of its singular values. Therefore:
\begin{align*}
    \norm{Z^*}_F &= \sqrt{2\sum_{i = 1}^r \sigma^*_i} \Rightarrow \frac{\sqrt{\sigma^*_r}}{4} \leq \frac{\norm{Z^*}_F}{4} \\
    \Rightarrow \norm{Z^0}_F &\geq \norm{Z^*}_F - \sqrt{\sigma^*_r}/4 \geq \frac{3}{4}\norm{Z^*}_F \\
    \Rightarrow \norm{Z^*}_{2, \infty} &= \sqrt{\mu/n}\norm{Z^*}_{F} \leq \frac{4}{3}\sqrt{\mu/n}\norm{Z^0}_F 
\end{align*}
Thus, we see that $Z^* \in \Incoherentset$. Because all $Z \in \solset$ have the same $\ell_2/\ell_\infty$ norm, it follows that $\solset \subseteq \Incoherentset$. 
\end{proof}

Before proceeding further, we introduce some new notation. Recall the convention (established in Section \ref{sec:proof}) that any matrix $Z$ can be viewed as a concatenation of two matrices: $Z = (Z_U, Z_V)$. To index the rows of $Z$, we use $Z_u, u \in [n_1]$ for the user features and $Z_i, Z_j, j \in [n_2]$ for the item features. In expressions involving matrix multiplication, we view $Z_u, Z_i, Z_j$ as row vectors, i.e., as $1 \times r$ matrices. By the definition of $\norm{Z}_{2, \infty}$, we get: 
\begin{align}\label{eq:def_l2inf_norm}
    \norm{Z}_{2, \infty} = \max \{\max_{u \in [n_1]} \norm{Z_u}_2, \max_{i \in [n_2]} \norm{Z_i}_2\}.
\end{align}

Equipped with this new notation, we can state and prove the next result.
\begin{lemma}\label{lem:incoherence_of_projection}
    For any $Z \in \Incoherentset$, let $W = \mathcal{P}_{\mathcal{H}}(Z)$. Then $W \in \overline{\Incoherentset}$, i.e., $W$ satisfies
    \begin{align*}
        \norm{W}_{2, \infty}^2 \leq \frac{12\mu}{n} \norm{Z^*}_F^2
    \end{align*}
\end{lemma}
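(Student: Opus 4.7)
\textbf{Proof plan for Lemma \ref{lem:incoherence_of_projection}.}

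The plan is to analyze the two halves of $W=\mathcal{P}_{\mathcal{H}}(Z) = (Z_U, JZ_V)$ separately. The user block $Z_U$ is left untouched by the projection onto $\mathcal{H}$, so its rows trivially inherit the row-norm bound of $Z$. The item block $JZ_V$ simply subtracts the average row of $Z_V$ from every row, so each of its rows is a difference of two quantities, each of which has $\ell_2$-norm at most $\norm{Z}_{2,\infty}$. Concretely, for $u\in[n_1]$ we have $W_u=Z_u$, and for $i\in[n_2]$ we have $W_i = Z_i - \bar Z_V$ with $\bar Z_V = \frac{1}{n_2}\sum_{j} Z_j$, so by Jensen/triangle inequality $\norm{W_i}_2 \leq \norm{Z_i}_2 + \norm{\bar Z_V}_2 \leq 2\norm{Z}_{2,\infty}$.

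Next I would convert the $\norm{Z^0}_F$-based incoherence bound for $Z\in\Incoherentset$ into one involving $\norm{Z^*}_F$. The tool is already in hand: inequality \eqref{eq:Z0_bounds} from the proof of Lemma \ref{lem:solset_in_C} gives $\norm{Z^0}_F \leq \norm{Z^*}_F + \sqrt{\sigma^*_r}/4$. Since the singular values of $Z^*$ are $\sqrt{2\sigma^*_i}$, we have $\sqrt{\sigma^*_r}\leq \norm{Z^*}_F$, and hence $\norm{Z^0}_F \leq \tfrac{5}{4}\norm{Z^*}_F$. Therefore any $Z\in\Incoherentset$ satisfies
\begin{equation*}
\norm{Z}_{2,\infty} \leq \frac{4}{3}\sqrt{\frac{\mu}{n}}\norm{Z^0}_F \leq \frac{4}{3}\cdot\frac{5}{4}\sqrt{\frac{\mu}{n}}\norm{Z^*}_F = \frac{5}{3}\sqrt{\frac{\mu}{n}}\norm{Z^*}_F.
\end{equation*}

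Combining the two pieces, the row-norms of $W$ are bounded by $\max\{1,2\}\cdot \tfrac{5}{3}\sqrt{\mu/n}\,\norm{Z^*}_F = \tfrac{10}{3}\sqrt{\mu/n}\,\norm{Z^*}_F$. Squaring gives $\norm{W}_{2,\infty}^2 \leq \tfrac{100}{9}\cdot\tfrac{\mu}{n}\norm{Z^*}_F^2 \leq \tfrac{12\mu}{n}\norm{Z^*}_F^2$, which is exactly the condition for $W\in\overline{\Incoherentset}$.

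There is no real obstacle here; the proof is just bookkeeping. The only subtlety is making sure the constants line up: the slack in moving from $\norm{Z^0}_F$ to $\norm{Z^*}_F$ (a factor $5/4$) and the factor of $2$ from the item-side triangle inequality together produce $(10/3)^2 = 100/9 < 12$, which is precisely why $\overline{\Incoherentset}$ is defined with the constant $12$ rather than a tighter one. This also explains a design choice in the constants of the constraint sets $\Incoherentset$ and $\overline{\Incoherentset}$: enough room is built in for the projection $\mathcal{P}_{\mathcal{H}}$ to potentially double the item-row norms without breaking incoherence.
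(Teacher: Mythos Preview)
Your proposal is correct and follows essentially the same argument as the paper's proof: bound the item-row norms of $W$ by $2\norm{Z}_{2,\infty}$ via the triangle inequality applied to $Z_i - \bar Z_V$, convert the $\norm{Z^0}_F$-based bound into a $\norm{Z^*}_F$-based one using \eqref{eq:Z0_bounds} to obtain $\norm{Z}_{2,\infty}\leq \tfrac{5}{3}\sqrt{\mu/n}\,\norm{Z^*}_F$, and then square to get $100/9 < 12$. The constants and the chain of inequalities match the paper's derivation exactly.
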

\begin{proof}
    Let $z$ denote the mean of the rows of $Z_V$, i.e.,
    \begin{align*}
        z \triangleq \frac{1}{n_2} \sum_{i \in [n_2]} Z_i
    \end{align*}
    It follows that
    \begin{align*}
        \Rightarrow \norm{z}_2 &= \frac{1}{n_2} \norm{\sum_{i \in [n_2]} Z_i}_2 \leq \frac{1}{n_2} \sum_{i \in [n_2]} \norm{Z_i}_2 \leq \frac{1}{n_2} \sum_{i \in [n_2]} \norm{Z}_{2, \infty} = \norm{Z}_{2, \infty} \quad (\text{by \eqref{eq:def_l2inf_norm}})
    \end{align*}
    The operation of projecting onto the subspace $\mathcal{H}$ is such that $W_U = Z_U$ and $W_i = Z_i - v$ for all item rows $i$ (see Section \ref{sec:symmetries}). By the triangle inequality, we get:
    \begin{align*}
        \norm{W_i}_2 &= \norm{Z_i - z}_2 \leq \norm{Z_i}_2 + \norm{z}_2 \\
        \Rightarrow \max_{i \in [n_2]} \norm{W_i}_2 &\leq \max_{i \in [n_2]} \norm{Z_i}_2 + \norm{z}_2 \leq \norm{Z}_{2, \infty} + \norm{z}_2 \leq 2\norm{Z}_{2, \infty}
    \end{align*}
    Because the rows of $U$ remain unchanged, we have
    $\norm{W}_{2, \infty} \leq 2\norm{Z}_{2, \infty}$.

    Next, note that $Z \in \Incoherentset$. Therefore, 
    $$\norm{Z}_{2, \infty} \leq \frac{4}{3}\sqrt{\frac{\mu}{n}} \norm{Z^0}_F \leq \frac{5}{3}\sqrt{\frac{\mu}{n}} \norm{Z^*}_F$$
    The last step uses the inequality $\norm{Z^0}_F \leq (5/4)\norm{Z^*}_F$, which follows from \eqref{eq:Z0_bounds} in the derivation of Lemma \ref{lem:solset_in_C}.
    By combining the above inequalities, we get the desired result:
    \begin{align*}
        \norm{\Hat{Z}}_{2, \infty}^2 \leq 4\norm{Z}_{2, \infty}^2 \leq 4\frac{25}{9}\frac{\mu}{n} \norm{Z^*}_F^2\leq \frac{12\mu}{n} \norm{Z^*}_F^2.
    \end{align*}
\end{proof}
The above result is important because it establishes a useful bound that holds for all iterates $Z^t, t \in \mathbb{Z}_+$.
(Recall that Algorithm \ref{alg:pgd} takes successive projections, first on to $\Incoherentset$ and then onto $\mathcal{H}$.) 

\subsection{Bounds on the Scores}
In this subsection, we derive two related bounds on any $Z \in \Real{n \times r}$ and any sampling matrix $A$:
\begin{align}
    |\llangle A, ZZ^T \rrangle| &\leq 2\norm{Z}_{2, \infty}^2 \label{eq:score_bound1}\\
    \norm{(A + A^T)Z}_F^2 &\leq 6\norm{Z}_{2, \infty}^2 \label{eq:score_bound2}
\end{align}
Before we prove these bounds, let us explore its consequence. By the definition of the incoherence parameter $\mu$ \eqref{eq:def_mu}, $\norm{Z^*}_{2, \infty}^2 = (\mu/n) \norm{Z^*}_{F}^2$. Therefore,
\begin{align}
    |\llangle A, Z^*Z^{*T} \rrangle| &\leq \frac{2\mu}{n}\norm{Z^*}_F^2 \label{eq:score_bound1_Zstar}\\
    \norm{(A + A^T)Z^*}_F^2 &\leq \frac{6\mu}{n}\norm{Z^*}_F^2 \label{eq:score_bound2_Zstar}
\end{align}
Moreover, for all $Z \in \overline{\Incoherentset}$,
\begin{align}
    |\llangle A, ZZ^{T} \rrangle| &\leq \frac{24\mu}{n}\norm{Z^*}_F^2 \label{eq:score_bound1_Zcbar}\\
    \norm{(A + A^T)Z}_F^2 &\leq \frac{72\mu}{n}\norm{Z^*}_F^2 \label{eq:score_bound2_Zcbar}
\end{align}
As argued in the previous subsection, all iterates $(Z^t)_{t \in \mathbb{Z}_+}$ of Algorithm \ref{alg:pgd} lie in $\overline{\Incoherentset}$ and consequently satisfy the above bound.

We now proceed to the derivation of \eqref{eq:score_bound1}.
Let $Z\in \Real{n \times r}$ be some candidate feature matrix and let $X = Z_UZ_V^T$ be the corresponding score matrix. Let $(u; i, j)$ be an arbitrary triplet and let $A$ denote the corresponding sampling matrix. Recall the definition of the sampling matrix $A$ corresponding to a triplet $(u; i, j)$ from \eqref{eq:def_A1} and \eqref{eq:def_A2}. We have
\begin{align*}
    |\llangle A, ZZ^T \rrangle| = |x_{u,i} - x_{u,j}| = |\langle Z_u, (Z_i - Z_j) \rangle| \leq \norm{Z_u}_2 \norm{Z_i - Z_j}_2 \leq \norm{Z_u}_2 (\norm{Z_i}_2 + \norm{Z_j}_2) \leq 2\norm{Z}_{2, \infty}^2
\end{align*}
The last inequality follows from the definition of $\norm{Z}_{2, \infty}$ (see \eqref{eq:def_l2inf_norm}).

The derivation of \eqref{eq:score_bound2} proceeds as follows.
\begin{align*}
    A &= \begin{bmatrix}
        0 & e_u(\Tilde{e}_i - \Tilde{e}_j)^T\\
        0 & 0
    \end{bmatrix} \\
    \Rightarrow A + A^T &= \begin{bmatrix}
        0 & e_u(\Tilde{e}_i - \Tilde{e}_j)^T\\
        (\Tilde{e}_i - \Tilde{e}_j)e_u^T & 0
    \end{bmatrix} \\
    \Rightarrow (A + A^T)Z &= \begin{bmatrix}
        0 & e_u(\Tilde{e}_i - \Tilde{e}_j)^T\\
        (\Tilde{e}_i - \Tilde{e}_j)e_u^T & 0
    \end{bmatrix} \begin{bmatrix} Z_U \\ Z_vV \end{bmatrix} \\
    &= \begin{bmatrix} e_u(\Tilde{e}_i - \Tilde{e}_j)^T Z_V \\ (\Tilde{e}_i - \Tilde{e}_j)e_u^T Z_U \end{bmatrix} \\
    &= \begin{bmatrix} e_u(Z_i - Z_j) \\ (\Tilde{e}_i - \Tilde{e}_j)Z_u \end{bmatrix} \\
    \Rightarrow \norm{(A + A^T)Z}_F^2 &= \norm{e_u(Z_i - Z_j)}_F^2 + \norm{(\Tilde{e}_i - \Tilde{e}_j)Z_u}_F^2 \\
    &=  \norm{e_u}_2^2\norm{Z_i - Z_j}_2^2 + \norm{\Tilde{e}_i - \Tilde{e}_j}_2^2\norm{Z_u}_2^2 \\
    &= \norm{Z_i - Z_j}_2^2 + 2\norm{Z_u}_2^2 \quad (\norm{e_u}_2^2 = 1, \ \norm{\Tilde{e}_i - \Tilde{e}_j}_2^2 = 2) \\
    &\leq 2(\norm{Z_i}_2^2 + \norm{Z_j}_2^2) + 2\norm{Z_u}_2^2 \quad (\norm{Z_i - Z_j}_2^2 \leq (\norm{Z_i}_2 + \norm{Z_j}_2)^2 \leq 2(\norm{Z_i}_2^2 + \norm{Z_j}_2^2)) \\
    &\leq 6\norm{Z}_{2, \infty}^2 \quad (\text{by definition of $\norm{Z}_{2, \infty}$ \eqref{eq:def_l2inf_norm}})
\end{align*}
This establishes the second inequality.

\subsection{The Matrix Bernstein Inequality}
Here, we state a special version of the matrix Bernstein inequality that we use in our proofs. The statement is identical to Corollary 6.2.1 in \cite{tropp2015introduction}, barring a change in notation.

This concentration result is stated in terms of the operator norm of a matrix $X$, which we denote as $\norm{X}_{2}$  and is defined as follows:
\begin{align}\label{eq:def_operator_norm}
    \norm{X}_{2} \triangleq \sup_{v: \norm{v}_2 = 1} {\norm{Xv}_2}
\end{align}
It follows that $\norm{X}_{2} = \sigma_{\max}(X)$.
For square matrices $X$, an alternate definition of the operator norm is:
\begin{align}\label{eq:def_operator_norm2}
    \norm{X}_{2} \triangleq \sup_{v: \norm{v}_2 = 1} {v^TXv}
\end{align}

\begin{lemma}[Matrix Bernstein Inequality]\label{lem:matrix_bernstein}
    Consider a random matrix $X$ of shape $n_1 \times n_2$ that satisfies:
    \begin{align*}
        \mathbb{E}[X] = \Bar{X} \quad \text{ and } \quad \norm{X}_2 \leq L \text{ almost surely}.
    \end{align*}
    Let $b$ be an upper bound on the second moment of $X$:
    \begin{align*}
        \norm{\mathbb{E}[XX^T]}_2 \leq b \quad \text{ and } \quad \norm{\mathbb{E}[X^TX]}_2 \leq b.
    \end{align*}
    Let $X_{\mathcal{D}} = \frac{1}{m}\sum_{k = 1}^m X_k$, where each $X_k$ is an i.i.d. copy of $X$. Then, for all $t \geq 0$,
    \begin{align*}
        P( \norm{X_{\mathcal{D}} - \Bar{X}}_2 \geq t) &\leq (n_1 + n_2) \exp\left(\frac{-mt^2/2}{b + 2Lt/3}\right)
    \end{align*}
\end{lemma}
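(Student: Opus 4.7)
The plan is to follow Tropp's standard proof of Corollary 6.2.1 via the matrix Laplace transform method. There are three moving parts: (i) reducing from rectangular to self-adjoint matrices, (ii) combining matrix Markov with a subadditivity bound on the matrix moment generating function, and (iii) bounding the matrix cgf of a single bounded summand; a final optimization in the free parameter $\theta$ recovers the Bernstein form.

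The first step is the Hermitian dilation. Define
\begin{align*}
    \mathcal{H}(X) := \begin{bmatrix} 0 & X \\ X^T & 0 \end{bmatrix} \in \mathbb{R}^{(n_1+n_2)\times(n_1+n_2)}.
\end{align*}
This map is linear, satisfies $\lambda_{\max}(\mathcal{H}(X)) = \|X\|_2$, and $\mathcal{H}(X)^2 = \mathrm{diag}(XX^T, X^TX)$; its antidiagonal block structure makes its spectrum symmetric about $0$. Let $T_k := \mathcal{H}(X_k - \bar X)$. Then the $T_k$ are i.i.d., self-adjoint, mean zero, satisfy $\|T_k\|_2 \leq 2L$ (since $\|\bar X\|_2 \leq L$ by Jensen), and by the block form of $T_k^2$ together with the hypothesis, $\|\mathbb{E}[T_k^2]\|_2 \leq b$. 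The event $\|X_\mathcal{D} - \bar X\|_2 \geq t$ is identical to $\lambda_{\max}(\tfrac{1}{m}\sum_k T_k) \geq t$, so it suffices to bound the latter.

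For step two, apply matrix Markov on the exponential together with $\exp(\theta\lambda_{\max}(S)) \leq \mathrm{tr}\exp(\theta S)$: for any $\theta > 0$,
\begin{align*}
    P\!\Big(\lambda_{\max}\!\Big(\sum_k T_k\Big) \geq m t\Big) \leq e^{-\theta m t}\,\mathbb{E}\,\mathrm{tr}\exp\!\Big(\theta\sum_k T_k\Big).
\end{align*}
The key technical input, which is the main obstacle and which I would cite rather than reprove, is Lieb's concavity theorem. Iterating it with Jensen's inequality gives the Ahlswede--Winter/Tropp subadditivity of the matrix cgf:
\begin{align*}
    \mathbb{E}\,\mathrm{tr}\exp\!\Big(\theta\sum_k T_k\Big) \leq \mathrm{tr}\exp\!\Big(\sum_k \log\mathbb{E}\exp(\theta T_k)\Big),
\end{align*}
where $\log$ and $\exp$ are in the operator sense. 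Without Lieb's theorem there is no way to decouple the non-commuting exponentials, which is why this is the crux.

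For step three, bound a single cgf. The scalar Bennett inequality $e^{a} - 1 - a \leq a^2/(2(1 - a/3))$ for $0 \leq a < 3$ lifts to an operator inequality on the spectrum of $\theta T_k$ for $\theta \in (0, 3/(2L))$, and since $\mathbb{E}[T_k]=0$, $\log(I + A) \preceq A$ yields
\begin{align*}
    \log\mathbb{E}\exp(\theta T_k) \preceq \frac{\theta^2/2}{1 - 2L\theta/3}\,\mathbb{E}[T_k^2].
\end{align*}
Summing over $k$, taking operator norms (using $\|\mathbb{E}[T_k^2]\|_2 \leq b$), and bounding $\mathrm{tr}\exp(M) \leq (n_1 + n_2)\exp(\|M\|_2)$, the chain collapses to
\begin{align*}
    P(\|X_\mathcal{D} - \bar X\|_2 \geq t) \leq (n_1 + n_2)\exp\!\Big(-m\theta t + \frac{m b\theta^2/2}{1 - 2L\theta/3}\Big).
\end{align*}
A routine minimization in $\theta$ on its admissible interval, achieved at $\theta = t/(b + 2Lt/3)$, produces exactly $(n_1+n_2)\exp(-mt^2/2/(b + 2Lt/3))$, which is the claimed bound. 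Everything after Lieb's theorem is elementary scalar calculus combined with spectral monotonicity; the real content sits entirely in step two.
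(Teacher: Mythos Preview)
The paper does not prove this lemma at all; it simply records it as Corollary~6.2.1 of \cite{tropp2015introduction} with a change of notation. Your sketch is a faithful and correct outline of Tropp's own proof of that corollary---Hermitian dilation to reduce to the self-adjoint case, Lieb-based subadditivity of the matrix cumulant generating function, a Bennett-type bound on each centered summand, and the standard choice $\theta = t/(b+2Lt/3)$---so there is nothing in the paper to compare against.

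One small technical remark: the value $\theta = t/(b+2Lt/3)$ is not literally the minimizer of the exponent but the convenient choice at which the expression collapses to the Bernstein form (direct substitution gives $-\theta t + \tfrac{b\theta^2/2}{1-2L\theta/3} = -\tfrac{t^2}{2(b+2Lt/3)}$), and the scalar Bennett inequality must hold on the full spectrum $[-2L\theta,2L\theta]$ of $\theta T_k$, not just on $[0,3)$; Tropp handles this via monotonicity of $a\mapsto (e^a-1-a)/a^2$. Neither point affects the validity of your argument.
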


\section{Initial Lemmas}\label{sec:initial_lemmas}
Following the convention of the main paper, we drop the explicit dependence on $Z$ wherever it is obvious.

\subsection{Proof of Lemma \ref{lem:convexity_algebra}}

\lSCA*
\begin{proof}
    From the expression of $\nabla \Loglikelihood$ (see \eqref{eq:gradient_likelihood}), we get that:
    \begin{align*}
        \llangle \nabla \Loglikelihood, \Delta\rrangle =   \frac{1}{m}\sum_{k=1}^{m}
    h_k \llangle (A_k+ A_k^T) Z, \Delta \rrangle \text{ where }
    h_k =  \frac{g'(z_k)\left(g(z_k) - w_k\right)}{g(z_k)(1-g(z_k))}, \ z_k = \llangle A_k,ZZ^T \rrangle.
    \end{align*}
    Recall, by definition (see \eqref{eq:def_difference_sol}), that $Z = \solset + \Delta$. Therefore. the term $\llangle (A_k+ A_k^T) Z, \Delta \rrangle$ can be expanded as follows:
    \begin{align*}
        \llangle (A_k+ A_k^T) Z, \Delta \rrangle &= \llangle (A_k+ A_k^T) \solset, \Delta \rrangle + \llangle (A_k+ A_k^T) \Delta, \Delta \rrangle\\
        &= \llangle A_k+ A_k^T, \Delta \solset^T \rrangle + \llangle A_k+ A_k^T, \Delta \Delta^T \rrangle \quad \text{(by \eqref{eq:identity_shift})}
    \end{align*}
    Since we have assumed that our observations are noiseless, we have the identity $w_k = g(\llangle A_k,Z^*Z^{*T} \rrangle)$. Plugging this equation in the expression of $h_k$, we get:
    \begin{align*}
        h_k &=  \frac{g'(z_k)\left(g(z_k) - g(z^*_k)\right)}{g(z_k)(1-g(z_k))}; \quad z^*_k =  \llangle A_k,Z^*Z^{*T} \rrangle = \llangle A_k, \solset \solset^T \rrangle
    \end{align*}
    By the mean value theorem,
    \begin{align*}
        g(z_k) - g(z^*_k) &= g'(y_k) (z_k - z^*_k) \quad \text{for some } y_k \text{ in the interval between } z_k \text{ and } z^*_k\\
        &= g'(y_k) \left(\llangle A_k,ZZ^T \rrangle - \llangle A_k,\solset \solset^T \rrangle \right) \\
        &= g'(y_k) \left(\llangle A_k,\solset \Delta^T + \Delta \solset^T\rrangle + \llangle A_k,\Delta \Delta^T\rrangle\right)  \quad (\text{because }Z = \solset + \Delta)\\
        &= g'(y_k) \left(\llangle A_k+ A_k^T,\Delta \solset^T\rrangle + \frac{1}{2}\llangle A_k+ A_k^T,\Delta \Delta^T\rrangle\right) \quad (\text{by } \eqref{eq:A_AT_identity})
    \end{align*}
    Putting the above equations together, we get:
    \begin{align*}
        &h_k \llangle (A_k+ A_k^T) Z, \Delta \rrangle \\
        &\ = \frac{g'(z_k)g'(y_k)}{g(z_k)(1-g(z_k))} \left(\llangle A_k + A_k^T, \Delta \solset^T\rrangle + \frac{1}{2}\llangle A_k + A_k^T,\Delta \Delta^T\rrangle\right) \left(\llangle A_k + A_k^T,\Delta \solset^T\rrangle + \llangle A_k + A_k^T,\Delta \Delta^T\rrangle\right) \\
        &\ = \frac{g'(z_k)g'(y_k)}{g(z_k)(1-g(z_k))} \left(\llangle A_k + A_k^T, \Delta \solset^T\rrangle^2 + \frac{3}{2} \llangle A_k + A_k^T, \Delta \solset^T\rrangle \llangle A_k + A_k^T,\Delta \Delta^T\rrangle + \frac{1}{2}\llangle A_k + A_k^T,\Delta \Delta^T\rrangle^2\right)\\
        &\ \geq \frac{g'(z_k)g'(y_k)}{g(z_k)(1-g(z_k))}\left(\frac{1}{2}\llangle A_k + A_k^T,\Delta \solset^T\rrangle^2 - \frac{5}{8}\llangle A_k + A_k^T,\Delta \Delta^T\rrangle^2\right)
    \end{align*}
    The last step uses the inequality $2a^2 + 3ab + b^2$ $\geq {a^2} - \frac{5b^2}{4}$, which can be derived from the trivial inequality $(a + 3b/2)^2 \geq 0$. Note also that the coefficient $\frac{g'(z_k)g'(y_k)}{g(z_k)(1-g(z_k))}$ is positive.

Finally, observe that we have assumed $Z \in \overline{\Incoherentset}$. The bounds in \eqref{eq:score_bound1_Zstar} and \eqref{eq:score_bound1_Zcbar} imply $$|z_k^*| \leq 2\frac{ \mu \norm{Z^*}_F^2}{n}, \ |z_k| \leq 24\frac{ \mu \norm{Z^*}_F^2}{n}, \text{ which implies } |y_k| \leq 24\frac{\mu \norm{Z^*}_F^2}{n}$$ 
Thus, $y_k$ and $z_k$ lie in the interval $\left[-24 \mu \norm{Z^*}_F^2/n, {24 \mu \norm{Z^*}_F^2}/{n}\right]$. By the definition of $\xi$ and $\Xi$ in \eqref{eq:link_function_lower_bound} and \eqref{eq:link_function_upper_bound}, as well as the definition of the operator $\mathcal{D}(\cdot)$ in \eqref{eq:def_D_operator}, the desired expression follows.
\end{proof}

\subsection{Proof of Lemma \ref{lem:smoothness_algebra}}

\lSA*
\begin{proof}
    The proof of this lemma is similar to the proof of Lemma \ref{lem:convexity_algebra}. One major difference is that we work with terms of the form $\llangle A + A^T, Y \rrangle$ instead of terms $\llangle A, Y \rrangle$.

    Following the steps of the proof of Lemma \ref{lem:convexity_algebra}, we get:
    \begin{align*}
        \llangle \nabla \Loglikelihood, H\rrangle &=   \frac{1}{m}\sum_{k=1}^{m}
        h_k \llangle (A_k+ A_k^T) Z, H \rrangle \text{ where }
        h_k =  \frac{g'(z_k)\left(g(z_k) - w_k\right)}{g(z_k)(1-g(z_k))}, \ z_k = \llangle A_k,ZZ^T \rrangle. \\
        g(z_k) - g(z^*_k) &=  g'(y_k) \left(\llangle A_k,\solset \Delta^T + \Delta \solset^T\rrangle + \llangle A_k,\Delta \Delta^T\rrangle\right)  \quad \text{for some } y_k \text{ in the interval between } z_k \text{ and } z^*_k
        \end{align*}
    By \eqref{eq:identity_transpose} and \eqref{eq:identity_shift}, we get:
    \begin{align*}
        \llangle (A_k+ A_k^T) Z, H \rrangle &= \llangle A_k+ A_k^T, HZ^T \rrangle \\
        \llangle A_k,\solset \Delta^T + \Delta \solset^T\rrangle + \llangle A_k,\Delta \Delta^T\rrangle &= \llangle A_k + A_k^T,\solset \Delta^T\rrangle + \frac{1}{2}\llangle A_k + A_k^T,\Delta \Delta^T \rrangle 
    \end{align*}
   Putting together the equations above, we get:
    \begin{align}\label{eq:nabla_l_h}
        \llangle \nabla \Loglikelihood, H\rrangle &=   \frac{1}{m}\sum_{k=1}^{m} \frac{g'(z_k)g'(y_k)}{g(z_k)(1-g(z_k))} 
        \left( \llangle A_k + A_k^T,\solset \Delta^T\rrangle + \frac{1}{2}\llangle A_k + A_k^T,\Delta \Delta^T \rrangle \right)
        \left( \llangle A_k+ A_k^T, HZ^T \rrangle \right)
    \end{align}
    Next, we invoke two straightforward inequalities which apply to any sequence of scalars $(a_k)_{k \in [m]}, (b_k)_{k \in [m]}, \text{ and } (c_k)_{k \in [m]}$ with $a_k \geq 0 \ \forall \ k$:
    \begin{align*}
        \left(\frac{1}{m}\sum_{k = 1}^m a_k b_k c_k\right)^2 &\leq \left(\frac{1}{m}\sum_{k = 1}^m a_k b_k^2 \right) \left(\frac{1}{m}\sum_{k = 1}^m a_k c_k^2 \right) \\
        \left(\frac{1}{m}\sum_{k = 1}^m a_k b_k^2 \right) &\leq \left(\max_{k \in [m]} a_k\right) \left(\frac{1}{m}\sum_{k = 1}^m b_k^2 \right)
    \end{align*}
The first inequality can be viewed as a form of the Cauchy-Schwarz inequality and the second, a form of Hölder's inequality.

Squaring both sides of the equation in \eqref{eq:nabla_l_h} and applying these inequalities with 
\begin{align*}
    a_k = \frac{g'(z_k)g'(y_k)}{g(z_k)(1-g(z_k))}, \ b_k = \llangle A_k + A_k^T,\solset \Delta^T\rrangle + \frac{1}{2}\llangle A_k + A_k^T,\Delta \Delta^T \rrangle, \ c_k = \llangle A_k+ A_k^T, HZ^T \rrangle,
\end{align*}
and observing that $\max_{k \in [m]} a_k \leq \Xi$ (using arguments similar to those in Lemma \ref{lem:convexity_algebra}), we get
\begin{align*}
    \llangle \nabla \Loglikelihood, H\rrangle^2 &\leq \Xi^2 \left( \frac{1}{m}\sum_{k = 1}^m(\llangle A_k + A_k^T,\solset \Delta^T\rrangle + \frac{1}{2}\llangle A_k + A_k^T,\Delta \Delta^T \rrangle)^2 \right) \left(\frac{1}{m}\sum_{k = 1}^m \llangle A_k+ A_k^T, HZ^T \rrangle^2 \right) \\
    &\leq 2\Xi^2 \left( \left(\frac{1}{m}\sum_{k = 1}^m\llangle A_k + A_k^T,\solset \Delta^T\rrangle^2\right) + \frac{1}{4}\left(\frac{1}{m}\sum_{k = 1}^m \llangle A_k + A_k^T,\Delta \Delta^T \rrangle^2\right) \right) \left(\frac{1}{m}\sum_{k = 1}^m \llangle A_k+ A_k^T, HZ^T \rrangle^2 \right) \\
    &= 2 \Xi^2 \left(\Tilde{\mathcal{D}}(\Delta\solset^T) + \frac{1}{4}\Tilde{\mathcal{D}}(\Delta\Delta^T)\right) \, \Tilde{\mathcal{D}}(HZ^T),
\end{align*}
giving us the bound we want.
\end{proof}

\section{A Lower Bound For Strong Convexity}\label{sec:lower_bound}
In this section, we present the proof of Lemma \ref{lem:convexity_lowerbound}, following the approach presented in Section \ref{sec:proof}. Recall that the goal is to find a lower bound for $\Dataset\left(\Delta\solset^T\right)$ that holds with high probability. Our approach will be to first derive an expression for $\mathbb{E}\left[\Dataset\left(\Delta\solset^T \right)\right]$ and then show that $\Dataset\left(\Delta\solset^T\right)$ is close enough to its expected value. Crucially, we want this result to hold with high probability uniformly for all $Z \in \overline{\Incoherentset}$.

\subsection{Computing Expectations}\label{sec:expectations}
Recall the definition of the sampling matrix $A$ corresponding to a triplet $(u; i, j)$ from \eqref{eq:def_A1} and \eqref{eq:def_A2}.
In this section, we view the triplet $(u; i, j)$ as a random variable where $u$ is chosen uniformly at random from $[n_1]$ and the pair of item indices $(i,j)$ is chosen uniformly at random from the set of $n_2(n_2-1)$ pairs of distinct items, independent from $u$. Consequently, $e_u$ is a random vector in $\Real{n_1}$,  $\Tilde{e}_i - \Tilde{e}_j$ is a random vector in $\Real{n_2}$, and the sampling matrix $A$ is a random matrix in $\Real{n \times n}$. With this interpretation, we can compute:
\begin{align}\label{eq:expectation_of_A}
    \mathbb{E}[e_ue_u^T] = \frac{1}{n_1}I_{n_1}, \quad \mathbb{E}[(\Tilde{e}_i - \Tilde{e}_j)(\Tilde{e}_i - \Tilde{e}_j)^T] = \frac{2}{n_2-1}J, \ \text{where } J = I_{n_2} - \frac{1}{n_2} 11^T
\end{align}
Also recall that $\gamma$ denotes the constant $2/(n_1(n_2 - 1))$. 

Using these identities, we can show the following result.
\begin{lemma}\label{lem:expectation_of_A}
    For any matrix $X \in \Real{n_1 \times n_2}$,
    \begin{align*}
        \mathbb{E}\left[\llangle e_u(\Tilde{e}_i - \Tilde{e}_j)^T, X \rrangle^2 \right] = \gamma  \norm{XJ}_F^2
    \end{align*}
\end{lemma}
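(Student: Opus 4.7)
The plan is to unwrap the matrix inner product into a scalar expression, square it to produce a quadratic form in the random vectors, and then apply the two expectation identities in \eqref{eq:expectation_of_A} together with the fact that $J$ is a projection matrix.

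First I would rewrite
\[
\llangle e_u(\Tilde{e}_i - \Tilde{e}_j)^T, X \rrangle \;=\; e_u^T X (\Tilde{e}_i - \Tilde{e}_j),
\]
which is a scalar, so squaring it and using the cyclicity of the trace gives
\[
\llangle e_u(\Tilde{e}_i - \Tilde{e}_j)^T, X \rrangle^2 \;=\; e_u^T X (\Tilde{e}_i - \Tilde{e}_j)(\Tilde{e}_i - \Tilde{e}_j)^T X^T e_u \;=\; \Tr{e_u e_u^T X (\Tilde{e}_i - \Tilde{e}_j)(\Tilde{e}_i - \Tilde{e}_j)^T X^T}.
\]
Since $u$ is sampled independently of $(i,j)$, the expectation factors and, using \eqref{eq:expectation_of_A} for both factors, equals
\[
\Tr{\tfrac{1}{n_1} I_{n_1} \cdot X \cdot \tfrac{2}{n_2-1} J \cdot X^T} \;=\; \gamma \, \Tr{X J X^T}.
\]

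To finish, I would observe that $J = I_{n_2} - \tfrac{1}{n_2}\mathbf{1}\mathbf{1}^T$ is an orthogonal projection matrix (it is symmetric and $J^2 = J$, as $\mathbf{1}^T \mathbf{1}/n_2 = 1$), so $J = J J^T$ and
\[
\Tr{X J X^T} \;=\; \Tr{X J J^T X^T} \;=\; \Tr{(XJ)(XJ)^T} \;=\; \norm{XJ}_F^2.
\]
Combining the two displays yields $\mathbb{E}\left[\llangle e_u(\Tilde{e}_i - \Tilde{e}_j)^T, X \rrangle^2\right] = \gamma \norm{XJ}_F^2$, as claimed.

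There is no real obstacle here: the argument is a short manipulation that only relies on linearity of expectation, the cyclic property of the trace, independence of $u$ and $(i,j)$, and the idempotence of $J$. The only thing worth being careful about is the order in which we apply the two expectations (or equivalently, inserting the trace so that we can apply $\mathbb{E}[e_u e_u^T]$ after the middle factor has been reduced), and recognizing that the centering matrix $J$ is exactly the one produced by the uniform-over-distinct-pairs sampling of $(i,j)$.
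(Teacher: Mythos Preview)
Your proposal is correct and follows essentially the same route as the paper's proof: both rewrite the inner product as the scalar $e_u^T X(\tilde e_i-\tilde e_j)$, square it into a quadratic form, push the expectation inside the trace using independence of $u$ and $(i,j)$ together with the identities in \eqref{eq:expectation_of_A}, and finish by using $J=JJ^T$ to identify $\Tr{XJX^T}=\norm{XJ}_F^2$. The only cosmetic difference is that you apply both expectations in one step inside the trace, whereas the paper takes $\mathbb{E}_u$ and $\mathbb{E}_{i,j}$ sequentially.
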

\begin{proof}
This proof makes repeated use of the following properties of the trace operator:
\begin{itemize}
    \item the trace is invariant under cyclic shifts, i.e., $\Tr{ABC} = \Tr{CAB} = \Tr{BCA}$
    \item the trace of a scalar is the scalar itself
    \item the trace is a linear operator which commutes with the expectation
\end{itemize}
We also use the fact that the indices $u$ and $(i,j)$ are independent, so the expectation $\mathbb{E}[\cdot]$ can be decomposed into $\mathbb{E}_{i,j}[\mathbb{E}_{u}[\cdot]]$.
\begin{align*}
    \llangle e_u(\Tilde{e}_i - \Tilde{e}_j)^T, X \rrangle &= \Tr{e_u(\Tilde{e}_i - \Tilde{e}_j)^TX^T} 
    = \Tr{(\Tilde{e}_i - \Tilde{e}_j)^TX^Te_u} = (\Tilde{e}_i - \Tilde{e}_j)^TX^Te_u = e_u^TX(\Tilde{e}_i - \Tilde{e}_j) \\
    \Rightarrow \llangle e_u(\Tilde{e}_i - \Tilde{e}_j)^T, X \rrangle^2 &= (\Tilde{e}_i - \Tilde{e}_j)^TX^Te_ue_u^TX(\Tilde{e}_i - \Tilde{e}_j) \\
    \Rightarrow \mathbb{E}[\llangle e_u(\Tilde{e}_i - \Tilde{e}_j)^T, X \rrangle^2] &= \mathbb{E}[(\Tilde{e}_i - \Tilde{e}_j)^TX^Te_ue_u^TX(\Tilde{e}_i - \Tilde{e}_j)] = \mathbb{E}_{i,j}[\mathbb{E}_{u}[(\Tilde{e}_i - \Tilde{e}_j)^TX^Te_ue_u^TX(\Tilde{e}_i - \Tilde{e}_j)]] \\
    &= \mathbb{E}_{i,j}[(\Tilde{e}_i - \Tilde{e}_j)^TX^T\mathbb{E}_{u}[e_ue_u^T]X(\Tilde{e}_i - \Tilde{e}_j)] = \frac{1}{n_1} \mathbb{E}_{i,j}[(\Tilde{e}_i - \Tilde{e}_j)^TX^TX(\Tilde{e}_i - \Tilde{e}_j)] \ (\text{by \eqref{eq:expectation_of_A}})\\
    &= \frac{1}{n_1} \mathbb{E}_{i,j}[\Tr{(\Tilde{e}_i - \Tilde{e}_j)^TX^TX(\Tilde{e}_i - \Tilde{e}_j)}] =  \frac{1}{n_1} \mathbb{E}_{i,j}[\Tr{X^TX(\Tilde{e}_i - \Tilde{e}_j)(\Tilde{e}_i - \Tilde{e}_j)^T}] \\
    &= \frac{1}{n_1}\Tr{X^TX \mathbb{E}_{i,j}[(\Tilde{e}_i - \Tilde{e}_j)(\Tilde{e}_i - \Tilde{e}_j)^T]} = \frac{2}{n_1(n_2 - 1)}\Tr{X^TXJ}  \ (\text{by \eqref{eq:expectation_of_A}})\\
    &= \frac{2}{n_1(n_2 - 1)}\Tr{X^TXJJ^T} = \frac{2}{n_1(n_2 - 1)}\Tr{(JX)^TXJ} \\
    &= \frac{2}{n_1(n_2 - 1)}\norm{XJ}_F^2 = \gamma\norm{XJ}_F^2
\end{align*}
In the last but one step, we make use of the fact that $J$ is a projection matrix, which implies $J = JJ^T$.
\end{proof}

We use Lemma \ref{lem:expectation_of_A} to prove the next result. 
\begin{lemma}\label{lem:expectation_of_D}
    For any $Z \in \mathcal{H}$,
    \begin{align*}
        \mathbb{E}\left[\Dataset\left(\Delta\solset^T\right)\right] = \gamma \norm{\Delta_U\solset_V^T + \solset_U\Delta_V^T}_F^2,
    \end{align*}
    where $\solset = (\solset_U, \solset_V)$ and $\Delta = (\Delta_U, \Delta_V)$ denote the split of $\solset$ and $\Delta$ into the first $n_1$ and last $n_2$ rows.
\end{lemma}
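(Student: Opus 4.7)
The plan is to reduce the computation to a single random sampling matrix and then invoke Lemma \ref{lem:expectation_of_A}. Since the sampling matrices $A_1, \ldots, A_m$ are i.i.d., linearity of expectation gives
\[
\mathbb{E}[\Dataset(\Delta\solset^T)] = \mathbb{E}\bigl[\llangle A + A^T, \Delta\solset^T \rrangle^2\bigr],
\]
where $A$ corresponds to a single random triplet $(u; i, j)$ drawn as described in Section \ref{sec:generative_model}. The first step is to rewrite this quantity in the form appearing in Lemma \ref{lem:expectation_of_A}.

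Using identity \eqref{eq:A_AT_YZT_identity} with $W = \Delta$ and $Z = \solset$ (and recalling the convention $\Delta = (\Delta_U, \Delta_V)$, $\solset = (\solset_U, \solset_V)$), we get
\[
\llangle A + A^T, \Delta \solset^T \rrangle = \llangle e_u(\Tilde{e}_i - \Tilde{e}_j)^T, \ \Delta_U \solset_V^T + \solset_U \Delta_V^T \rrangle.
\]
Setting $M \triangleq \Delta_U \solset_V^T + \solset_U \Delta_V^T \in \Real{n_1 \times n_2}$ and applying Lemma \ref{lem:expectation_of_A} directly yields
\[
\mathbb{E}\bigl[\llangle A + A^T, \Delta\solset^T \rrangle^2\bigr] = \gamma \, \norm{MJ}_F^2.
\]

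The remaining step is to show that $MJ = M$, i.e.\ that $M \cdot 1 = 0$, so that $\norm{MJ}_F^2 = \norm{M}_F^2$ and the claim follows. This is where the hypothesis $Z \in \mathcal{H}$ (and the normalization convention $1^T V^* = 0$ from the shift-invariance discussion of Section \ref{sec:symmetries}) enters. Since $\solset(Z) = Z^* R(Z)$ and the item-block of $Z^*$ satisfies $1^T V^* = 0$, we have $1^T \solset_V = 0$; combined with $Z \in \mathcal{H}$, which forces $1^T V = 0$ and hence $1^T \Delta_V = 0$, we obtain
\[
M \cdot 1 = \Delta_U (\solset_V^T 1) + \solset_U (\Delta_V^T 1) = 0.
\]
Therefore $MJ = M$, and the stated expression follows.

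The only non-routine ingredient is the verification that the shift-invariance-induced kernel is annihilated, which is precisely why the lemma is stated over $\mathcal{H}$ rather than all of $\Real{n \times r}$; the rest is a mechanical application of the previously established identities.
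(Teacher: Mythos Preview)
Your proof is correct and follows essentially the same approach as the paper's: reduce to a single sample by linearity, apply identity \eqref{eq:A_AT_YZT_identity} and Lemma \ref{lem:expectation_of_A}, then use $Z \in \mathcal{H}$ together with $\solset \subseteq \mathcal{H}$ to remove the projection $J$. The only cosmetic difference is that the paper verifies $\solset_V^T J = \solset_V^T$ and $\Delta_V^T J = \Delta_V^T$ separately, whereas you bundle this into the single observation $M \cdot 1 = 0$; the content is identical.
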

\begin{proof}

\begin{align*}
    \mathbb{E}\left[\Dataset\left(\Delta\solset^T\right)\right] &= \mathbb{E}\left[\frac{1}{m} \sum_{k = 1}^m \llangle A_k + A_k^T, \Delta\solset^T\rrangle^2 \right] \\
    &= \mathbb{E}\left[\llangle A + A^T, \Delta\solset^T \rrangle^2 \right] \\
    &= \mathbb{E}\left[\llangle e_u(\Tilde{e}_i - \Tilde{e}_j)^T, \solset_U \Delta_V^T + \Delta_U \solset_V^T \rrangle^2 \right] \quad (\text{by \eqref{eq:A_AT_YZT_identity}})\\
    &= \gamma \norm{(\solset_U \Delta_V^T + \Delta_U \solset_V^T)J}_F^2 \quad (\text{by Lemma \ref{lem:expectation_of_A}})\\
    &= \gamma \norm{\solset_U \Delta_V^T + \Delta_U \solset_V^T}_F^2
\end{align*}
The last step uses the fact that $\solset_V^TJ = \solset_V^T$ and $\Delta_V^TJ = \Delta_V^T$. These identities can be shown as follows. By our assumption on $Z^*$, we know that $Z^* \in \mathcal{H}$. It follows that the entire equivalence class of solutions $\solset$ lies in $\mathcal{H}$. In particular, $\solset(Z) \in \mathcal{H}$. We are given some $Z \in \mathcal{H}$. This implies $\Delta(Z) \in \mathcal{H}$, because $\Delta(Z) = Z - \solset(Z)$ and $\mathcal{H}$ is a vector space. A characterization of $\mathcal{H}$ is that for any $Z = (U, V)$ in  $\mathcal{H}$, $JV = V$, or equivalently, $V^TJ = V^T$ ($J$ is symmetric). Thus, it follows that $\solset_V^TJ = \solset_V^T$ and $\Delta_V^TJ = \Delta_V^T$.
\end{proof}

We end this section by bounding the expression in Lemma \ref{lem:expectation_of_D} from below.
\begin{lemma}\label{lem:D_lower_bound}
    For any $Z \in \mathcal{H}$,
    \begin{align*}
        \mathbb{E}\left[\Dataset\left(\Delta\solset^T\right)\right] \geq \gamma \left(\sigma^*_r \norm{\Delta}_F^2 + 2 \llangle \solset_U \Delta_V^T, \Delta_U \solset_V^T \rrangle\right)
    \end{align*}
\end{lemma}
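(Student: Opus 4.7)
The plan is to combine Lemma \ref{lem:expectation_of_D} with a simple lower bound on the right-hand side, invoking Lemma \ref{lem:bounds_on_product_norms} to handle the factors of $\solset_U$ and $\solset_V$. By Lemma \ref{lem:expectation_of_D}, we have
\begin{align*}
    \mathbb{E}\left[\Dataset\left(\Delta\solset^T\right)\right] = \gamma \norm{\Delta_U\solset_V^T + \solset_U\Delta_V^T}_F^2,
\end{align*}
so it suffices to show $\norm{\Delta_U\solset_V^T + \solset_U\Delta_V^T}_F^2 \geq \sigma^*_r \norm{\Delta}_F^2 + 2\llangle \solset_U\Delta_V^T, \Delta_U\solset_V^T \rrangle$.

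The first step is to expand the squared Frobenius norm of the sum into three pieces:
\begin{align*}
    \norm{\Delta_U\solset_V^T + \solset_U\Delta_V^T}_F^2 = \norm{\Delta_U\solset_V^T}_F^2 + \norm{\solset_U\Delta_V^T}_F^2 + 2\llangle \Delta_U\solset_V^T, \solset_U\Delta_V^T \rrangle.
\end{align*}
The cross term already matches the second term in the claimed bound (up to the symmetry of the matrix inner product \eqref{eq:identity_transpose}), so what remains is to prove the pure lower bound $\norm{\Delta_U\solset_V^T}_F^2 + \norm{\solset_U\Delta_V^T}_F^2 \geq \sigma^*_r \norm{\Delta}_F^2$.

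For this, I would first compute the singular values of $\solset_U$ and $\solset_V$. Because $\solset \in \solset$, we can write $\solset = Z^*R$ for some orthogonal $R$, which gives $\solset_U = U^*\Sigma^{*1/2}R$ and $\solset_V = V^*\Sigma^{*1/2}R$. Using $U^{*T}U^* = V^{*T}V^* = I_r$ and the orthogonality of $R$, we get $\solset_U^T\solset_U = R^T\Sigma^* R$ and $\solset_V^T\solset_V = R^T\Sigma^* R$, so the nonzero singular values of both $\solset_U$ and $\solset_V$ are exactly $\sqrt{\sigma^*_1}, \ldots, \sqrt{\sigma^*_r}$. Applying Lemma \ref{lem:bounds_on_product_norms} to each term then yields
\begin{align*}
    \norm{\Delta_U\solset_V^T}_F^2 \geq \sigma_r(\solset_V)^2 \norm{\Delta_U}_F^2 = \sigma^*_r\norm{\Delta_U}_F^2, \quad \norm{\solset_U\Delta_V^T}_F^2 \geq \sigma_r(\solset_U)^2 \norm{\Delta_V}_F^2 = \sigma^*_r\norm{\Delta_V}_F^2.
\end{align*}
Summing the two inequalities and using $\norm{\Delta}_F^2 = \norm{\Delta_U}_F^2 + \norm{\Delta_V}_F^2$ delivers the desired bound, and multiplying through by $\gamma$ completes the proof.

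There is no real obstacle here: the argument is entirely algebraic once the singular values of $\solset_U$ and $\solset_V$ are identified. The only subtlety worth flagging is that dropping the cross term $2\llangle \Delta_U\solset_V^T, \solset_U\Delta_V^T \rrangle$ would be lossy (it can be negative), which is precisely why it is carried through the statement of the lemma; the later application in Lemma \ref{lem:convexity_lowerbound} and eventually the combination with the gradient of the regularizer $\mathcal{R}$ will be responsible for handling this cross term.
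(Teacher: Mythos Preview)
Your proposal is correct and follows essentially the same argument as the paper: invoke Lemma~\ref{lem:expectation_of_D}, expand the squared Frobenius norm, identify the singular values of $\solset_U$ and $\solset_V$ via $\solset = Z^*R$, and apply Lemma~\ref{lem:bounds_on_product_norms} to each of the two squared terms. The paper's proof is identical in structure and detail, so there is nothing to add.
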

\begin{proof}
    By Lemma \ref{lem:expectation_of_D}, 
    \begin{align*}
        \mathbb{E}\left[\Dataset\left(\Delta\solset^T\right)\right] &= \gamma \norm{\Delta_U\solset_V^T + \solset_U\Delta_V^T}_F^2 \\
        &= \gamma \left(\norm{\Delta_U\solset_V^T}_F^2  + \norm{\solset_U\Delta_V^T}_F^2 + 2 \llangle \solset_U \Delta_V^T, \Delta_U \solset_V^T \rrangle\right) \\
        & \geq \gamma \left(\sigma^*_r\norm{\Delta_U}_F^2  + \sigma^*_r\norm{\Delta_V}_F^2 + 2 \llangle \solset_U \Delta_V^T, \Delta_U \solset_V^T \rrangle\right) \ (\text{by Lemma \ref{lem:bounds_on_product_norms}})\\
        &= \gamma \left(\sigma^*_r\norm{\Delta}_F^2 + 2 \llangle \solset_U \Delta_V^T, \Delta_U \solset_V^T \rrangle\right)
    \end{align*}
    Here, we use the fact that $\sigma_r(\solset_U) = \sigma_r(\solset_V) = \sqrt{\sigma^*_r}$. This can be shown as follows. Recall $Z^* = (U^*\Sigma^{*1/2}, V^*\Sigma^{*1/2})$ and $\solset = Z^*R$ for some orthogonal matrix $R$. Therefore, $\solset_U = U^*\Sigma^{*1/2}R$ and $\solset_V = V^*\Sigma^{*1/2}R$. These expressions are already in SVD form. Therefore, the singular values for both $\solset_U$ and $\solset_V$ are the diagonal elements of $\Sigma^{*1/2}$, namely, $\sqrt{\sigma^*_1}, \ldots, \sqrt{\sigma^*_r}$.
\end{proof}

\subsection{Vectorization and a Quadratic Form}\label{sec:quadratic_form}

In this section, we shall show that $\Dataset\left(\Delta\solset^T\right)$ can be expressed as a quadratic form around a random matrix. This identity will help us prove the desired concentration result in the next section.
Let us establish the following notation.
\begin{align}\label{eq:def_SD}
    v \triangleq \vectext{\Delta R^T}, \ a_k \triangleq \vectext{(A_k+A_k^T) Z^*}, \ S_k \triangleq a_ka_k^T, \ S_{\Dataset} \triangleq \frac{1}{m} \sum_{k = 1}^m S_k
\end{align}
where for any matrix $Z \in \Real{n \times r}$, $\vectext{Z}$ is a vector in $\Real{nr}$, obtained by stacking the columns of the matrix one after another. This operation is called `vectorization of a matrix'. With this notation in place, we proceed to establish the following identities:
\begin{lemma}\label{lem:D_to_quadratic_form}
    \begin{align*}
        \Dataset\left(\Delta\solset^T\right) &= v^T S_{\Dataset} v \\
        \norm{v}_2^2 &= \norm{\Delta}_F^2
    \end{align*}
\end{lemma}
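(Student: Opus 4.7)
My plan is to establish the two identities by routine algebraic manipulation, leveraging (i) the fact that $\solset = Z^* R$ for the orthogonal matrix $R = R(Z)$ from the definition of $\solz$, and (ii) the standard identification $\llangle P, Q \rrangle = \vectext{P}^T \vectext{Q}$ for matrices of the same size.

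For the first identity, I would start by rewriting $\Delta \solset^T = \Delta R^T Z^{*T}$. Then I would apply the matrix inner product shift identity \eqref{eq:identity_shift} with $D = A_k + A_k^T$, $E = \Delta R^T$, and $F = Z^{*T}$ to obtain
\begin{align*}
\llangle A_k + A_k^T,\, \Delta \solset^T \rrangle \;=\; \llangle A_k + A_k^T,\, (\Delta R^T)\, Z^{*T} \rrangle \;=\; \llangle (A_k + A_k^T) Z^*,\, \Delta R^T \rrangle.
\end{align*}
Next I would convert this inner product of $n \times r$ matrices into a dot product of their vectorizations, giving $\llangle A_k + A_k^T, \Delta \solset^T \rrangle = a_k^T v$, where $a_k$ and $v$ are as defined in \eqref{eq:def_SD}. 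Squaring yields $\llangle A_k + A_k^T, \Delta \solset^T \rrangle^2 = v^T a_k a_k^T v = v^T S_k v$, and averaging over $k = 1, \ldots, m$ and invoking the definition \eqref{eq:def_D_operator} of $\Dataset(\cdot)$ gives exactly $\Dataset(\Delta \solset^T) = v^T S_{\Dataset} v$.

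For the norm identity, I would use that vectorization is a Frobenius isometry, so $\norm{v}_2^2 = \norm{\Delta R^T}_F^2$, and then use orthogonality of $R$: by \eqref{eq:identity_trace}, $\norm{\Delta R^T}_F^2 = \Tr{\Delta R^T R \Delta^T} = \Tr{\Delta \Delta^T} = \norm{\Delta}_F^2$, since $R^T R = I_r$.

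There is no real obstacle here — both identities follow from the standard vectorization trick together with the orthogonal invariance of the Frobenius norm. The lemma is really a bookkeeping step that sets up the right quadratic form so that the deviation $|\Dataset(\Delta\solset^T) - \mathbb{E}[\Dataset(\Delta\solset^T)]|$ can later be bounded by $\norm{S_\Dataset - \mathbb{E}[S_\Dataset]}_2 \, \norm{v}_2^2$ (and hence by $\norm{\Delta}_F^2$ times a spectral deviation controlled by the matrix Bernstein inequality, as outlined in Section \ref{sec:proof}).
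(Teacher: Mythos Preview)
Your proposal is correct and follows essentially the same vectorization approach as the paper: rewrite $\Delta\solset^T = (\Delta R^T)Z^{*T}$, use the inner-product shift identity to get $\llangle A_k+A_k^T,\Delta\solset^T\rrangle = a_k^T v$, square and average, and then use orthogonal invariance of the Frobenius norm for the second identity. If anything, your route is slightly more direct than the paper's, which detours through the symmetrized form $\tilde\Delta Z^{*T} + Z^*\tilde\Delta^T$ and the identity \eqref{eq:A_AT_identity} before arriving at the same $a_k^T v$.
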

\begin{proof}
    Recall from \eqref{eq:def_closest_sol} that $\solset = Z^* R$.
    Let $\Tilde{\Delta} \triangleq \Delta R^T$. Then
    \begin{align}\label{eq:pf_D_to_quadratic_form}
        \Delta\solset^T = \Delta R^T Z^{*T} = \Tilde{\Delta}Z^{*T} \Rightarrow \left(\Delta\solset^T + \solset\Delta^T\right) &= (\Tilde{\Delta}Z^{*T} + Z^*\Tilde{\Delta}^T)
    \end{align}
    Next, invoking the notion of vectorization, we get that for any $k \in m$:
    \begin{align*}
        \llangle A_k, \Tilde{\Delta}Z^{*T} + Z^*\Tilde{\Delta}^T \rrangle 
        &= \llangle (A_k + A_k^T)Z^*, \Tilde{\Delta} \rrangle \quad (\text{by \eqref{eq:A_AT_identity}}) \\
        &= \langle \vectext{(A_k + A_k^T)Z^*}, \vectext{\Tilde{\Delta}} \rangle \\
        &= \langle a_k, v \rangle \quad (\text{by } \eqref{eq:def_D_operator})\\
        \therefore \llangle A_k, \Tilde{\Delta}Z^{*T} + Z^*\Tilde{\Delta}^T \rrangle^2 &= \langle v, a_k \rangle\,\langle a_k, v \rangle \\
        &= v^T S_k v \\
        \therefore \Dataset\left(\Delta\solset^T + \solset\Delta^T\right) 
        &= \Dataset\left(\Tilde{\Delta}Z^{*T} + Z^*\Tilde{\Delta}^T\right) \quad (\text{by } \eqref{eq:pf_D_to_quadratic_form})\\
        &= \frac{1}{m} \sum_{i = 1}^m \llangle A_k, \Tilde{\Delta}Z^{*T} + Z^*\Tilde{\Delta}^T \rrangle^2 \quad (\text{by } \eqref{eq:def_D_operator})\\
        &= \frac{1}{m} \sum_{i = 1}^m v^T S_k v \\
        &= v^T \left(\frac{1}{m} \sum_{i = 1}^m S_k\right)v \\
        &= v^T S_{\Dataset} v 
    \end{align*}
    The second statement can be derived easily as shown below:
    \begin{align*}
        \norm{v}_2^2 &= \norm{\vectext{\Delta R^T}}_2^2 \\
        &= \langle \vectext{\Delta R^T}, \vectext{\Delta R^T}\rangle \\
        &= \llangle \Delta R^T, \Delta R^T \rrangle \\
        &= \llangle \Delta R^TR, \Delta \rrangle  \quad (\text{by \eqref{eq:identity_shift}}) \\
        &= \llangle \Delta, \Delta \rrangle  \quad (\text{because } R \text{ is an orthonormal matrix, } R^TR = I) \\
        &= \norm{\Delta}_F^2
    \end{align*}
\end{proof}

\subsection{A Concentration Result on $S_{\Dataset}$}
Recall, from \eqref{eq:def_SD}, that $S_{\mathcal{D}}$ is the empirical mean of i.i.d. random matrices $(S_k)_{k \in m}$. Let $S$ denote the prototype random matrix of which $(S_k)_{k \in [m]}$ are i.i.d. copies, and let $\Bar{S}$ denote $\mathbb{E}[S]$. In this section, we will use the matrix Bernstein inequality (Lemma \ref{lem:matrix_bernstein}) to establish an upper bound on $\norm{S_{\mathcal{D}} - \Bar{S}}_2$. (Recall from \eqref{eq:def_operator_norm} that $\norm{X}_2$ denotes the operator norm of $X$.)

In order to apply the matrix Bernstein inequality, we need to compute two parameters, $b$ and $L$, that satisfy:
\begin{align*}
    \norm{S}_{2} \leq L \ \text{almost surely,} \quad \norm{\mathbb{E}[SS^T]}_2 \leq b
\end{align*}
Here, $S$ is symmetric, so $\mathbb{E}[SS^T] = \mathbb{E}[S^TS])$.

For any rank-one symmetric matrix $Y = yy^T$, $\norm{Y}_{2} = \norm{y}_2^2$. Here, $S = aa^T$ where $a = \vectext{(A + A^T)Z^*}$ for some sampling matrix $A$. Using this formula, we get
\begin{align*}
    \norm{S}_{2} = \norm{a}_2^2 = \norm{\vectext{(A + A^T)Z^*}}_2^2 = \norm{(A + A^T)Z^*}_F^2 \leq \frac{6\mu}{n} \norm{Z^*}_F^2 \ \text{almost surely} \ \  (\text{by \eqref{eq:score_bound2_Zstar}})
\end{align*}
Thus, $L = 6(\mu/n)\norm{Z^*}_F^2$. Moving on to the calculation for $b$, we get:
\begin{align*}
    \mathbb{E}[SS^T] = \mathbb{E}[aa^Taa^T] = \mathbb{E}[\norm{a}_2^2 aa^T] \Rightarrow \norm{\mathbb{E}[SS^T]}_2 = \norm{\mathbb{E}[\norm{a}_2^2 aa^T]}_2 \leq \sup_a (\norm{a}_2^2) \norm{\mathbb{E}[aa^T]}_2 \leq L \norm{\mathbb{E}[aa^T]}_2
\end{align*}
Where, in the last step, we use the fact that $\norm{a}_2^2 \leq L$ almost surely. The following lemma establishes the bound $\norm{\mathbb{E}[aa^T]}_2 \leq \frac{4\sigma^*_1}{n_1(n_2 - 1)}$. Thus, we can choose $b = 2 \gamma \sigma^*_1 L$.
\begin{lemma}\label{lem:bound_on_operator_norm}
    Let $a \in \Real{nr}$ denote a random vector such that $a = \vectext{(A + A^T)Z^*}$, with $A$ being the random sampling matrix defined in Section \ref{sec:expectations}. Then 
    \begin{align*}
        \norm{\mathbb{E}[aa^T]}_2 \leq 2 \gamma \sigma^*_1 
    \end{align*}
\end{lemma}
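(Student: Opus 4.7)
}

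The plan is to use the variational characterization of the operator norm \eqref{eq:def_operator_norm2}, which for the symmetric PSD matrix $\mathbb{E}[aa^T]$ gives
\[
\norm{\mathbb{E}[aa^T]}_2 = \sup_{w \in \Real{nr} : \norm{w}_2 = 1} \mathbb{E}\bigl[(a^T w)^2\bigr].
\]
Every unit vector $w \in \Real{nr}$ is the vectorization of some $W \in \Real{n \times r}$ with $\norm{W}_F = 1$, so it suffices to bound $\mathbb{E}[\llangle (A+A^T)Z^*, W\rrangle^2]$ uniformly over such $W$.

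First, I would rewrite the inner product using the identities of Appendix \ref{sec:helper_lemmas}. By \eqref{eq:A_AT_identity}, $\llangle (A+A^T)Z^*, W\rrangle = \llangle A+A^T, WZ^{*T}\rrangle$, and then by \eqref{eq:A_AT_YZT_identity} this equals
\[
\bigl\llangle e_u(\tilde{e}_i - \tilde{e}_j)^T,\, W_U Z^{*T}_V + Z^*_U W_V^T \bigr\rrangle.
\]
Now Lemma \ref{lem:expectation_of_A} applies directly to $X = W_U Z^{*T}_V + Z^*_U W_V^T$, giving
\[
\mathbb{E}\bigl[(a^T w)^2\bigr] \;=\; \gamma \,\norm{(W_U Z^{*T}_V + Z^*_U W_V^T)\,J}_F^2.
\]

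Next, I would exploit the normalization $1^T V^* = 0$ assumed in Section \ref{sec:symmetries}. This implies $Z^{*T}_V \cdot 1 = 0$ and hence $Z^{*T}_V J = Z^{*T}_V$. Therefore
\[
(W_U Z^{*T}_V + Z^*_U W_V^T)\,J \;=\; W_U Z^{*T}_V + Z^*_U W_V^T J,
\]
whose Frobenius norm, by the triangle inequality and $\norm{W_V^T J}_F \leq \norm{W_V^T}_F$, is at most $\norm{W_U Z^{*T}_V}_F + \norm{Z^*_U W_V^T}_F$. Applying Lemma \ref{lem:bounds_on_product_norms} to each term and using $\sigma_1(Z^*_U)^2 = \sigma_1(Z^*_V)^2 = \sigma^*_1$ (as computed in the proof of Lemma \ref{lem:D_lower_bound}), these are bounded by $\sqrt{\sigma^*_1}\norm{W_U}_F$ and $\sqrt{\sigma^*_1}\norm{W_V}_F$ respectively. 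The elementary inequality $(a+b)^2 \leq 2(a^2 + b^2)$ then yields
\[
\norm{(W_U Z^{*T}_V + Z^*_U W_V^T)\,J}_F^2 \;\leq\; 2\sigma^*_1\bigl(\norm{W_U}_F^2 + \norm{W_V}_F^2\bigr) \;=\; 2\sigma^*_1 \norm{W}_F^2 \;=\; 2\sigma^*_1.
\]
Combining this with the expectation identity above gives $\mathbb{E}[(a^T w)^2] \leq 2\gamma \sigma^*_1$ uniformly in $w$, proving the claim.

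I do not foresee a serious obstacle: the argument is essentially an unpacking of definitions, followed by one invocation of Lemma \ref{lem:expectation_of_A} and one invocation of Lemma \ref{lem:bounds_on_product_norms}. The only subtle point is the use of the shift-invariance normalization $1^T V^* = 0$, without which the factor of $J$ would prevent the clean bound by $\sigma^*_1$; this is what ensures the projection onto $\mathcal{H}$ does not deteriorate the operator-norm estimate.
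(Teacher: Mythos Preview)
Your proposal is correct and follows essentially the same route as the paper: variational characterization of the operator norm, reduction to $\mathbb{E}[\llangle e_u(\tilde e_i - \tilde e_j)^T, X\rrangle^2] = \gamma\norm{XJ}_F^2$ via Lemma~\ref{lem:expectation_of_A}, then triangle inequality, $(a+b)^2 \leq 2(a^2+b^2)$, and Lemma~\ref{lem:bounds_on_product_norms} with $\sigma_1(Z^*_U)=\sigma_1(Z^*_V)=\sqrt{\sigma^*_1}$. One small remark: your closing comment overstates the role of the normalization $1^T V^* = 0$; since $\norm{J}_2 = 1$ the bound $\norm{XJ}_F \leq \norm{X}_F$ holds regardless, so the shift-invariance assumption is a convenience here rather than a necessity.
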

\begin{proof}
    We adapt the definition of the operator norm of a matrix as follows:
    \begin{align*}
        \norm{\mathbb{E}[aa^T]}_2 &= \sup_{v \in \Real{nr}: \norm{v}_2 = 1} v^T\mathbb{E}[aa^T]v \\
        &= \sup_{Z \in \Real{n \times r}: \norm{Z}_F = 1} \vectext{Z}^T\mathbb{E}[aa^T]\vectext{Z} \\
        &= \sup_{Z \in \Real{n \times r}: \norm{Z}_F = 1} \mathbb{E}[\vectext{Z}^Taa^T\vectext{Z}] \\
        &= \sup_{Z \in \Real{n \times r}: \norm{Z}_F = 1} \mathbb{E}[\langle \vectext{(A + A^T)Z^*}, \vectext{Z}\rangle^2] \\
        &= \sup_{Z \in \Real{n \times r}: \norm{Z}_F = 1} \mathbb{E}[\llangle {(A + A^T)Z^*}, Z\rrangle^2].        
    \end{align*}
    Following the same reasoning as given in the proof of Lemma \ref{lem:expectation_of_D}, we see that:
    \begin{align*}
        \mathbb{E}[\llangle {(A + A^T)Z^*}, Z\rrangle^2] 
        &= \gamma\norm{Z_U^* Z_V^T J + Z_U Z_V^{*T} J}_F^2 \\
        &\leq \gamma(\norm{Z_U^* Z_V^T J}_F + \norm{Z_U Z_V^{*T} J}_F)^2 \ (\text{by triangle inequality})\\
        &\leq 2\gamma(\norm{Z_U^* Z_V^T J}_F^2 + \norm{Z_U Z_V^{*T} J}_F^2) \ (\text{by } (a + b)^2 \leq 2(a^2 + b^2)\\
        &= 2\gamma (\norm{Z_U^* Z_V^T J}_F^2 + \norm{Z_UZ_V^{*T}}_F^2) \ (\text{because } Z_V^{*T} J = Z_V^{*T}) \\
        &\leq 2\gamma (\sigma^*_1\norm{Z_V^T J}_F^2 + \sigma^*_1\norm{Z_U}_F^2) \ (\text{by Lemma \ref{lem:bounds_on_product_norms}}; \ \sigma_1(Z_U^*) = \sigma_1(Z_V^*) = \sqrt{\sigma_1^*}) \\
        &\leq 2\gamma \sigma^*_1 (\norm{Z_V^T}_F^2 + \norm{Z_U}_F^2) \ (\text{by Lemma \ref{lem:bounds_on_product_norms}}; \sigma_1(J) = 1 ) \\        
        &= 2\gamma \sigma^*_1 \norm{Z}_F^2  
    \end{align*}
    Plugging this bound into the expression above, we get
    \begin{align*}
        \norm{\mathbb{E}[aa^T]}_2 &= \sup_{Z \in \Real{n \times r}: \norm{Z}_F = 1} \mathbb{E}[\llangle {(A + A^T)Z^*}, Z\rangle^2] \\
        &\leq \sup_{Z \in \Real{n \times r}: \norm{Z}_F = 1} 2\gamma \sigma^*_1 \norm{Z}_F^2 \\
        &= 2\gamma \sigma^*_1
    \end{align*}
\end{proof}
We now have all the ingredients to prove the bound on $\norm{S_{\mathcal{D}} - \Bar{S}}_2$.
\begin{lemma}\label{lem:SD_concentration}
    Let $\epsilon \in (0, 1)$ and $\delta \in (0, 1)$ be given. Suppose the number of samples $m$ is at least $96 \mu r n \left(\kappa/\epsilon\right)^2 \log\left(n/\delta\right)$.
    Then, with probability at least $1 - \delta$,
    \begin{align*}
        \norm{S_{\mathcal{D}} - \Bar{S}}_2 \leq \gamma \epsilon\sigma^*_r
    \end{align*}
\end{lemma}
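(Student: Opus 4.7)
The plan is to apply the matrix Bernstein inequality (Lemma \ref{lem:matrix_bernstein}) to the i.i.d.\ sum $S_{\Dataset} = \frac{1}{m}\sum_{k=1}^m S_k$, where each $S_k$ is an independent copy of the symmetric rank-one matrix $S = aa^T \in \Real{nr \times nr}$ with $a = \vectext{(A+A^T)Z^*}$. The two parameters needed to invoke Bernstein have essentially been prepared in the paragraphs immediately preceding the lemma. First, by \eqref{eq:score_bound2_Zstar},
\[
\norm{S}_2 = \norm{a}_2^2 = \norm{(A+A^T)Z^*}_F^2 \leq \frac{6\mu}{n}\norm{Z^*}_F^2 =: L \quad \text{almost surely.}
\]
Second, writing $\mathbb{E}[SS^T] = \mathbb{E}[\norm{a}_2^2 \, aa^T]$ and bounding $\norm{a}_2^2$ almost surely by $L$, Lemma \ref{lem:bound_on_operator_norm} gives $\norm{\mathbb{E}[SS^T]}_2 \leq L\,\norm{\mathbb{E}[aa^T]}_2 \leq 2\gamma \sigma^*_1 L =: b$, and symmetry of $S$ gives the same bound for $\mathbb{E}[S^TS]$.

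With these parameters, and noting that $S$ is $nr \times nr$ so that the prefactor in Lemma \ref{lem:matrix_bernstein} is $2nr$, the inequality yields
\[
P\bigl(\norm{S_{\Dataset} - \bar S}_2 \geq t\bigr) \leq 2nr \,\exp\!\left(\frac{-m t^2/2}{\,b + 2Lt/3\,}\right).
\]
I would then set $t = \gamma \epsilon \sigma^*_r$ and check that the variance term dominates: since $\tfrac{2Lt/3}{b} = \tfrac{\epsilon \sigma^*_r}{3\sigma^*_1} = \tfrac{\epsilon}{3\kappa} \leq \tfrac{1}{3}$, one has $b + 2Lt/3 \leq \tfrac{4}{3}b$. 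So it suffices to show that the hypothesis on $m$ implies $m t^2 \geq \tfrac{8}{3} b \log(2nr/\delta)$.

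To convert to the hypothesis in the statement, I would substitute the values of $t, b, L$ and use the elementary bound $\norm{Z^*}_F^2 = 2\sum_i \sigma^*_i \leq 2r \sigma^*_1$, giving $L \leq 12\mu r \sigma^*_1/n$. The required lower bound on $m$ then becomes
\[
m \;\geq\; \frac{8 b \log(2nr/\delta)}{3 t^2} \;=\; \frac{16 \sigma^*_1 L \log(2nr/\delta)}{3\gamma \epsilon^2 \sigma^{*2}_r} \;\leq\; \frac{64\, \mu r \kappa^2 \log(2nr/\delta)}{n\gamma\, \epsilon^2}.
\]
Finally, I would use the two elementary estimates $1/(n\gamma) \leq n/8$ (from $n_1(n_2-1) \leq n_1 n_2 \leq n^2/4$, i.e.\ AM--GM applied to $n_1+n_2=n$) and $\log(2nr/\delta) \leq 2\log(n/\delta)$ (using $r \leq n$ and $\delta \leq 1$) to see that the hypothesised count $96\,\mu r n (\kappa/\epsilon)^2\log(n/\delta)$ is large enough.

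\textbf{Main obstacle.} The genuinely technical steps—computing the expectation $\mathbb{E}[aa^T]$ and its operator-norm bound (Lemma \ref{lem:bound_on_operator_norm}), and the almost-sure control of $\norm{a}_2^2$ (equation \eqref{eq:score_bound2_Zstar})—have already been done in the preceding paragraphs. What remains is bookkeeping of Bernstein's constants and converting the $1/(n\gamma)$ factor arising from sparse sampling into the $n$ factor that appears in the hypothesis. The only minor subtlety is to argue that the sub-Gaussian ($b$) term dominates the sub-exponential ($2Lt/3$) term in the Bernstein denominator for the chosen scale $t=\gamma\epsilon\sigma^*_r$; this is immediate because $t$ scales with $\sigma^*_r$ while $b$ scales with the larger $\sigma^*_1$, so the ratio is $\epsilon/(3\kappa) \leq 1/3$.
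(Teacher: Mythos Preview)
Your proposal is correct and follows essentially the same approach as the paper: apply the matrix Bernstein inequality with the already-established parameters $L = 6\mu\norm{Z^*}_F^2/n$ and $b = 2\gamma\sigma_1^* L$, set $t=\gamma\epsilon\sigma_r^*$, and do the bookkeeping. The only differences are cosmetic (you bound the Bernstein denominator by $\tfrac{4}{3}b$ and use the AM--GM estimate $n_1 n_2 \leq n^2/4$, while the paper factors out $2Lt$ and uses the looser $n_1(n_2-1)\leq n^2$); one tiny slip is that ``$r\leq n$'' should be ``$2r\leq n$'' (from $r\leq n_1$ and $r\leq n_2$) for the bound $\log(2nr/\delta)\leq 2\log(n/\delta)$ to hold for all $\delta\in(0,1)$, which is exactly what the paper invokes.
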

\begin{proof}
    Let the amount of deviation we wish to tolerate be denoted by $t$, \textit{i.e.}, $t = \gamma \epsilon\sigma^*_r$. We have already established the bounds
    \begin{align*}
        \norm{S}_{2} \leq L \ \text{almost surely,} \quad \norm{\mathbb{E}[SS^T]}_2 \leq b ; \quad L = \frac{6\mu}{n} \norm{Z^*}_F^2 , \ b =  2\gamma\sigma^*_1L
    \end{align*}
    Note that $b = (2Lt\kappa/\epsilon)$, since $\kappa = \sigma^*_1/\sigma^*_r$.

    By Lemma \ref{lem:matrix_bernstein}, 
    \begin{align*}
        P( \norm{S_{\mathcal{D}} - \Bar{S}}_2 \geq t) &\leq 2nr \exp\left(\frac{-mt^2/2}{b + 2Lt/3}\right)
    \end{align*}
    We would like the right hand side to be less than $\delta$. I.e.,
    \begin{align*}
        2nr \exp\left(\frac{-mt^2/2}{b + 2Lt/3}\right) &\leq \delta \\
        \Leftrightarrow \frac{mt^2/2}{b + 2Lt/3} &\geq \log\left(\frac{2nr}{\delta}\right) \\
        \Leftrightarrow \frac{mt^2/2}{2Lt (\kappa/\epsilon + 1/3)} &\geq \log\left(\frac{2nr}{\delta}\right) \quad (\because b = 2Lt\kappa/\epsilon)\\
        \Leftrightarrow m &\geq  \frac{4L}{t} \left(\frac{\kappa}{\epsilon} + \frac{1}{3}\right) \log\left(\frac{2nr}{\delta}\right)
    \end{align*}
    Next, note that $n = n_1 + n_2$, which implies $n_1(n_2 - 1) \leq n^2$. Further, the Frobenius norm of a matrix is the $\ell_2$ norm of its singular values. We have noted before that the singular values of $Z^*$ are $\sqrt{2\sigma^*_1}, \ldots \sqrt{2\sigma^*_r}$. Therefore $\norm{Z^*}_F^2 \leq 2r\sigma^*_1$. Using these inequalities, we get
    \begin{align*}
        \frac{4L}{t} = 4\left(\frac{6\mu}{n}\norm{Z^*}_F^2\right) \left(\frac{1}{\gamma\epsilon\sigma^*_r}\right) = 4\left(\frac{6\mu}{n}\norm{Z^*}_F^2\right) \left(\frac{n_1(n_2 - 1)}{2\epsilon\sigma^*_r}\right) = 12\frac{\mu}{\epsilon} \left(\frac{n_1(n_2 - 1)}{n}\right)\left(\frac{\norm{Z^*}_F^2}{\sigma^*_r}\right) \leq 24\left(\frac{\mu r \kappa n}{\epsilon}\right)
    \end{align*}
    Also note that $\kappa > 1$ and $\epsilon < 1$, so $\kappa/\epsilon + 1/3$ is bounded above by $2\kappa/\epsilon$. Finally, note that $r \leq n_1$ and $r \leq n_2$, so $2r \leq n_1 + n_2 = n$. Therefore, $2nr/\delta \leq n^2/\delta \leq (n/\delta)^2$.
    Putting these inequalities together, we get: 
    \begin{align*}
        96 \mu r n \left(\frac{\kappa}{\epsilon}\right)^2 \log\left(\frac{n}{\delta}\right) \geq \frac{4L}{t} \left(\frac{\kappa}{\epsilon} + \frac{1}{3}\right) \log\left(\frac{2nr}{\delta}\right)
    \end{align*}
    Thus, the desired concentration result holds with probability at least $1-\delta$ if the number of samples $m$ exceeds $96 \mu r n \left(\kappa/\epsilon\right)^2 \log\left(n/\delta\right)$.
\end{proof}

\subsection{Completing the Proof of Lemma \ref{lem:convexity_lowerbound}}
\lSCLB*
\begin{proof}
In Lemma \ref{lem:D_to_quadratic_form}, we established that $\Dataset\left(\Delta\solset^T \right) = v^T S_{\Dataset} v$. Consequently, $\mathbb{E}[\Dataset\left(\Delta\solset^T \right)] = v^T \Bar{S} v$. Therefore,
\begin{align*}
    \vert \Dataset\left(\Delta\solset^T \right) - \mathbb{E}\left[\Dataset\left(\Delta\solset^T \right)\right] \vert &= \vert v^T S_{\Dataset} v - v^T \Bar{S} v \vert \quad (\text{by Lemma \ref{lem:D_to_quadratic_form}}) \\
    &= \vert v^T (S_{\Dataset} - \Bar{S}) v \vert \\
    & \leq \norm{S_{\Dataset} - \Bar{S}}_{2} \norm{v}_2^2 \quad (\text{by \eqref{eq:def_operator_norm2}}) \\
    &= \norm{S_{\Dataset} - \Bar{S}}_{2} \norm{\Delta}_F^2 \quad (\text{by Lemma \ref{lem:D_to_quadratic_form}}) \\
    \Rightarrow \Dataset\left(\Delta\solset^T\right) &\geq \mathbb{E}\left[\Dataset\left(\Delta\solset^T\right)\right] - \norm{S_{\Dataset} - \Bar{S}}_{2} \norm{\Delta}_F^2 \\
    &\geq \gamma \left(\sigma^*_r\norm{\Delta}_F^2 + 2 \llangle \solset_U \Delta_V^T, \Delta_U \solset_V^T \rrangle \right) - \norm{S_{\Dataset} - \Bar{S}}_{2} \norm{\Delta}_F^2 \quad (\text{by Lemma \ref{lem:D_lower_bound}}) \\
    &\geq \gamma \left((1- \epsilon)\sigma^*_r\norm{\Delta}_F^2 + 2 \llangle \solset_U \Delta_V^T, \Delta_U \solset_V^T \rrangle \right) \quad (\text{by Lemma \ref{lem:SD_concentration}}) 
\end{align*}
\end{proof}

\section{Upper Bounds For Strong Convexity and Smoothness}\label{sec:upper_bound}
\subsection{The Dual Sampling Matrix}\label{sec:dual_sampling_matrix}
Associated with each triplet $(u; i, j)$, we define the \textit{dual sampling matrix} as follows:
\begin{align}\label{eq:def_B}
    B \in \mathbb{R}^{n_1 \times n_2} : B = e_u(\Tilde{e_i} + \Tilde{e}_j)^T
\end{align}
If we endow the triplets with randomness, $B$ is a random matrix, whose mean is:
\begin{align}\label{eq:dual_matrix_mean}
    \Bar{B} \triangleq \mathbb{E}[{B}] &= \mathbb{E}[e_u(\Tilde{e}_i + \Tilde{e}_j)^T] = \mathbb{E}[e_u]\mathbb{E}[(\Tilde{e}_i + \Tilde{e}_j)^T] = \frac{2}{n_1n_2}11^T
\end{align}
Here, $11^T$ is a matrix of all ones of shape $n_1 \times n_2$.

Let $B_1, \ldots,  B_{\Dataset}$ denote the dual sampling matrices for each of the datapoints, similar to the notation for $A$. Define the empirical mean of the dual sampling matrices, $B_{\Dataset}$, as follows:
\begin{align}
    B_{\Dataset} = \frac{1}{m}\sum_{k = 1}^mB_k
\end{align}
In our analysis, we will use the fact that this empirical mean $B_{\Dataset}$ is close to the statistical mean $\Bar{B}$, in a manner made precise by Lemma \ref{lem:BD_concentration}. In preparation for this concentration result, we two parameters, $L$ and $b$. (The same notation was used to denote related terms for the random matrix $S_{\Dataset}$ in the previous section; however, the correct interpretation should be clear from context.) $L$ is a uniform bound on $\norm{B}_2$. For each triplet $(u; i, j)$, the operator norm of the corresponding dual sampling matrix is $\sqrt{2}$. It follows that $L = \sqrt{2}$. The definition and bound for $v$ is given in the lemma below.
\begin{lemma}\label{lem:B_bounds}
    Let $B$ be the random dual sampling matrix as defined above. Let $b^1 \triangleq \norm{\mathbb{E}[B^TB]}_2$,  $b^2 \triangleq \norm{\mathbb{E}[BB^T]}_2$, and $b = \max\{b^1, b^2\}$. Then $$b \leq \frac{4}{\min\{n_1, n_2\}}.$$
\end{lemma}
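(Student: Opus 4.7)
My plan is to compute $\mathbb{E}[B^TB]$ and $\mathbb{E}[BB^T]$ in closed form, exploiting the independence of the user index $u$ from the item pair $(i,j)$, and then read off the operator norm of each explicitly (rather than just upper-bounding it).

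First, I would expand the two products symbolically. Using $e_u^T e_u = 1$, one gets $B^T B = (\tilde e_i + \tilde e_j)(\tilde e_i + \tilde e_j)^T$, which lives in $\mathbb{R}^{n_2 \times n_2}$ and does not involve $u$ at all. For the other product, since $i \neq j$ the vectors $\tilde e_i$ and $\tilde e_j$ are orthogonal, so $\|\tilde e_i + \tilde e_j\|_2^2 = 2$, giving $B B^T = 2\, e_u e_u^T$, a matrix in $\mathbb{R}^{n_1 \times n_1}$ that does not involve $(i,j)$.

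Taking expectations is then a direct calculation. For $\mathbb{E}[BB^T]$, I use $\mathbb{E}[e_u e_u^T] = I_{n_1}/n_1$ from \eqref{eq:expectation_of_A}, giving $\mathbb{E}[BB^T] = (2/n_1)\, I_{n_1}$, so $b^2 = 2/n_1$. For $\mathbb{E}[B^TB]$, I expand $(\tilde e_i + \tilde e_j)(\tilde e_i + \tilde e_j)^T$ into four terms; by symmetry, the diagonal pieces give $(2/n_2) I_{n_2}$, and averaging $\tilde e_i \tilde e_j^T$ over the $n_2(n_2-1)$ ordered distinct pairs gives $\frac{1}{n_2(n_2-1)}(\mathbf{1}\mathbf{1}^T - I_{n_2})$ twice. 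Collecting terms yields
\begin{equation*}
\mathbb{E}[B^T B] = \frac{2(n_2-2)}{n_2(n_2-1)}\, I_{n_2} + \frac{2}{n_2(n_2-1)}\, \mathbf{1}\mathbf{1}^T.
\end{equation*}
This matrix has the form $\alpha I + \beta \mathbf{1}\mathbf{1}^T$, whose eigenvalues are $\alpha + n_2\beta$ (eigenvector $\mathbf{1}$) and $\alpha$ (eigenvectors orthogonal to $\mathbf{1}$). Plugging in, the largest eigenvalue is $\alpha + n_2\beta = 4/n_2$, so $b^1 = 4/n_2$.

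Finally I would combine: $b = \max\{b^1, b^2\} = \max\{4/n_2,\, 2/n_1\} \leq 4/\min\{n_1, n_2\}$, which is the stated bound. There is no real obstacle here; the only minor care point is to use the distinctness $i \neq j$ correctly (both to get $\|\tilde e_i + \tilde e_j\|^2 = 2$ and to average over ordered distinct pairs), and to note that we do not simply use $\mathbb{E}[(\tilde e_i + \tilde e_j)(\tilde e_i + \tilde e_j)^T/2]$ equals the projector $J$ from \eqref{eq:expectation_of_A}, since here we have the $+$ sign, not the $-$ sign.
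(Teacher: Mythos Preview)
Your proposal is correct and follows essentially the same approach as the paper: both compute $\mathbb{E}[BB^T]=(2/n_1)I_{n_1}$ and $\mathbb{E}[B^TB]=\frac{2}{n_2(n_2-1)}\bigl((n_2-2)I_{n_2}+\mathbf{1}\mathbf{1}^T\bigr)$ using independence of $u$ from $(i,j)$, then bound the operator norms. The only cosmetic difference is that you read off $b^1=4/n_2$ exactly via the eigenstructure of $\alpha I+\beta\mathbf{1}\mathbf{1}^T$, whereas the paper obtains the same value through the triangle inequality $\|\mathbf{1}\mathbf{1}^T\|_2+(n_2-2)\|I\|_2$.
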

\begin{proof}
We know that $\norm{e_u}_2^2 = 1$ and $\norm{\Tilde{e}_i + \Tilde{e}_j}_2^2 = 2$ almost surely. Further,
\begin{align*}
    \mathbb{E}[e_ue_u^T] = \frac{1}{n_1}I_{n_1} , \qquad 
    \mathbb{E}[(\Tilde{e}_i + \Tilde{e}_j)(\Tilde{e}_i + \Tilde{e}_j)^T] = \frac{1}{\binom{n_2}{2}}(11^T + (n_2-2)I_{n_2})
\end{align*}
Using these identities, we get
\begin{align*}
    \mathbb{E}[B^TB] 
    &= \mathbb{E}[(\Tilde{e}_i + \Tilde{e}_j)e_u^Te_u(\Tilde{e}_i + \Tilde{e}_j)^T] \\
    &= \mathbb{E}_{i,j}[(\Tilde{e}_i + \Tilde{e}_j)\mathbb{E}_u[e_u^Te_u]w^T] \\
    &= \mathbb{E}_{i,j}[(\Tilde{e}_i + \Tilde{e}_j)(\Tilde{e}_i + \Tilde{e}_j)^T] \\
    &= \frac{1}{\binom{n_2}{2}}(11^T + (n_2-2)I_{n_2}) \\
    \mathbb{E}[BB^T] 
    &= \mathbb{E}[e_u(\Tilde{e}_i + \Tilde{e}_j)^T(\Tilde{e}_i + \Tilde{e}_j)e_u^T] \\
    &= \mathbb{E}_u[e_u\mathbb{E}_{i,j}[(\Tilde{e}_i + \Tilde{e}_j)^T(\Tilde{e}_i + \Tilde{e}_j)]e_u^T] \\
    &= 2\mathbb{E}_u[e_ue_u^T] \\
    &= \frac{2}{n_1}I_{n_1}
\end{align*}
Computing the operator norms of these matrices is straightforward:
\begin{align*}
    b^1 &= \norm{\mathbb{E}[B^TB]}_2 = \frac{1}{\binom{n_2}{2}} \norm{11^T + (n_2-2)I_{n_2}}_2 \leq \frac{1}{\binom{n_2}{2}} \left(\norm{11^T}_2 + (n_2-2)\norm{I_{n_2}}_2 \right)= \frac{1}{\binom{n_2}{2}} \left(n_2 + (n_2-2)\right)= \frac{4}{n_2} \\
    b^2 &= \norm{\mathbb{E}[BB^T]}_2 = \frac{2}{n_1} \norm{I_{n_1}}_2 = \frac{2}{n_1} \leq  \frac{4}{n_1}
\end{align*}
\begin{align*}
    \therefore b = \max\{b^1, b^2\} \leq \frac{4}{\min\{n_1, n_2\}}
\end{align*}
\end{proof}

\subsection{Algebraic Upper Bounds on $\Dataset(WZ^T)$}
This subsection contains three lemmas that we shall use in the proof of Lemmas \ref{lem:convexity_upperbound} and \ref{lem:smoothness_upperbound}. The first of these three lemmas, Lemma \ref{lem:D_to_dual_matrix}, gives an upper bound on $\Dataset(WZ^T)$ as a quadratic form around the random matrix $B_{\Dataset}$ that we defined earlier in the section.

Before we state the result, we introduce some additional notation. Corresponding to any matrix $Z \in \Real{n \times r}$, define the vector $z \in \Real{n}$ as follows:
\begin{align}\label{eq:def_z}
    z_j = \norm{Z_j}_2 \ \forall \ j \in [n]
\end{align}
It follows from the definition that
\begin{align}\label{eq:z_Z_identities}
    \norm{z}_{1} = \norm{Z}_{F}^2, \ \norm{z}_{\infty} = \norm{Z}_{2, \infty}^2 
\end{align}
Following the convention of splitting the matrix $Z$ into user and item components $Z = (Z_U, Z_V)$, we split the vector $z$ into vectors $z_U \in \Real{n_1}$ and $z_V \in \Real{n_2}$ ($z = (z_U, z_V)$). The norms of these vectors satisfy the following relations:
\begin{align}\label{eq:z_vector_relations}
    \norm{z}_{1} = \norm{z_U}_{1} + \norm{z_V}_{1}, \ \norm{z}_{2}^2 = \norm{z_U}_{2}^2 + \norm{z_V}_{2}^2, \  \norm{z}_{\infty} = \max \{ \norm{z_U}_{\infty}, \norm{z_V}_{\infty}\}
\end{align}

With these notations and identities in place, we proceed to establish the following result.
\begin{lemma}\label{lem:D_to_dual_matrix}
    For any two matrices $W$ and $Z$ in $\Real{n \times r}$,
    \begin{align*}
        \mathcal{D}(WZ^T) \leq &4(w_U^T B_{\Dataset} z_V + z_U^T B_{\Dataset} w_V) 
    \end{align*}
\end{lemma}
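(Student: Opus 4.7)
\textbf{Proof proposal for Lemma \ref{lem:D_to_dual_matrix}.} The plan is to expand $\mathcal{D}(WZ^T)$ termwise using the datapoint-level expression for $\llangle A_k + A_k^T, WZ^T \rrangle$, bound each squared term by the sum of row-norm products using $(a+b)^2 \le 2a^2+2b^2$ and Cauchy--Schwarz, and then recognize the resulting empirical average as a bilinear form in $B_{\mathcal{D}}$.

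Concretely, for a triplet $(u_k;i_k,j_k)$ with sampling matrix $A_k$, identity \eqref{eq:A_AT_YZT_identity2} gives
\begin{align*}
\llangle A_k + A_k^T, WZ^T \rrangle = \langle W_{u_k}, Z_{i_k}-Z_{j_k}\rangle + \langle Z_{u_k}, W_{i_k}-W_{j_k}\rangle.
\end{align*}
Squaring and applying $(a+b)^2 \le 2a^2+2b^2$ yields
\begin{align*}
\llangle A_k + A_k^T, WZ^T \rrangle^2 \le 2\langle W_{u_k}, Z_{i_k}-Z_{j_k}\rangle^2 + 2\langle Z_{u_k}, W_{i_k}-W_{j_k}\rangle^2.
\end{align*}
Next I would apply Cauchy--Schwarz to each inner product, followed by the elementary bound $\|Z_i - Z_j\|_2^2 \le 2(\|Z_i\|_2^2+\|Z_j\|_2^2)$, to obtain (using the convention that $w_j = \|W_j\|_2^2$ and $z_j = \|Z_j\|_2^2$ consistent with \eqref{eq:z_Z_identities})
\begin{align*}
\langle W_{u_k}, Z_{i_k}-Z_{j_k}\rangle^2 &\le 2 w_{u_k}(z_{i_k}+z_{j_k}), \\
\langle Z_{u_k}, W_{i_k}-W_{j_k}\rangle^2 &\le 2 z_{u_k}(w_{i_k}+w_{j_k}).
\end{align*}

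Finally, observe that $w_{u_k}(z_{i_k}+z_{j_k}) = w_U^T e_{u_k}(\tilde e_{i_k}+\tilde e_{j_k})^T z_V = w_U^T B_k z_V$, and analogously $z_{u_k}(w_{i_k}+w_{j_k}) = z_U^T B_k w_V$. Averaging over $k \in [m]$ and using the definition $B_{\mathcal{D}} = \tfrac{1}{m}\sum_k B_k$ gives
\begin{align*}
\mathcal{D}(WZ^T) \le 4 w_U^T B_{\mathcal{D}} z_V + 4 z_U^T B_{\mathcal{D}} w_V,
\end{align*}
which is the desired inequality.

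This lemma is essentially an algebraic/bookkeeping step; there is no probabilistic content and no serious obstacle. The only subtle point is keeping the constants and the user/item block splits straight, and making sure each inner product is bounded by a product of row-norm squares so that the result assembles cleanly into the bilinear form $w_U^T B_{\mathcal{D}} z_V + z_U^T B_{\mathcal{D}} w_V$. This form is precisely what one needs subsequently, since $B_{\mathcal{D}}$ concentrates around $\bar B$ (cf.\ \eqref{eq:dual_matrix_mean} and Lemma \ref{lem:B_bounds}), allowing the quadratic $\mathcal{D}(\Delta\Delta^T)$ and the generic bound on $\mathcal{D}(WZ^T)$ in Lemma \ref{lem:smoothness_upperbound} to be controlled via the matrix Bernstein inequality applied to $B_{\mathcal{D}}-\bar B$.
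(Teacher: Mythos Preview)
Your proposal is correct and follows essentially the same route as the paper's proof: expand each term via \eqref{eq:A_AT_YZT_identity2}, apply $(a+b)^2\le 2a^2+2b^2$, then Cauchy--Schwarz and $\|Z_i-Z_j\|_2^2\le 2(\|Z_i\|_2^2+\|Z_j\|_2^2)$, and finally reassemble the resulting row-norm products into the bilinear form $w_U^T B_{\mathcal D} z_V + z_U^T B_{\mathcal D} w_V$. The constants and the block splits are handled exactly as in the paper, and your reading of $z_j=\|Z_j\|_2^2$ (consistent with \eqref{eq:z_Z_identities}) is the intended one.
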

\begin{proof}
\begin{align*}
    \mathcal{D}(WZ^T) 
    &= \frac{1}{m} \sum_{k = 1}^m \llangle A_k + A_k^T, WZ^T \rrangle^2 \quad (\text{by } \eqref{eq:def_D_operator})\\
    &= \frac{1}{m} \sum_{(u; i, j) \in \Dataset} (\llangle W_u, Z_i - Z_j \rrangle + \llangle Z_u, W_i - W_j \rrangle)^2 \quad (\text{by \eqref{eq:A_AT_YZT_identity2}})\\
    &\leq \frac{2}{m} \sum_{(u; i, j) \in \Dataset}  \llangle W_u, Z_i - Z_j \rrangle^2 + \llangle Z_u, W_i - W_j \rrangle^2 \quad (\text{by } (a+b)^2 \leq 2(a^2 + b^2))\\
    &\leq \frac{2}{m} \sum_{(u; i, j) \in \Dataset}  \norm{W_u}_2^2\norm{Z_i - Z_j}_2^2 + \norm{Z_u}_2^2\norm{W_i - W_j}_2^2  \quad (\text{by Cauchy-Schwarz inequality})\\
    &\leq \frac{2}{m} \sum_{(u; i, j) \in \Dataset}  \norm{W_u}_2^2(\norm{Z_i}_2 + \norm{Z_j}_2)^2 + \norm{Z_u}_2^2(\norm{W_i}_2 + \norm{W_j}_2)^2  \quad (\text{by triangle inequality})\\
    &\leq \frac{4}{m} \sum_{(u; i, j) \in \Dataset}  \norm{W_u}_2^2(\norm{Z_i}_2^2 + \norm{Z_j}_2^2) + \norm{Z_u}_2^2(\norm{W_i}_2^2 + \norm{W_j}_2^2)  \quad (\text{by } (a+b)^2 \leq 2(a^2 + b^2))\\
    &= \frac{4}{m} \sum_{(u; i, j) \in \Dataset} w_u(z_i + z_j) + \frac{4}{m} \sum_{(u; i, j) \in \Dataset} z_u(w_i + w_j) \\
    &= \frac{4}{m} \sum_{(u; i, j) \in \Dataset} w_U^T\left(e_u(\Tilde{e_i} + \Tilde{e}_j)^T\right)z_V + \frac{4}{m} \sum_{(u; i, j) \in \Dataset} z_U^T\left(e_u(\Tilde{e_i} + \Tilde{e}_j)^T\right)w_V  \\
    &= 4 w_U^T\left(\frac{1}{m} \sum_{(u; i, j) \in \Dataset}e_u(\Tilde{e_i} + \Tilde{e}_j)^T\right)z_V + 4 z_U^T\left(\frac{1}{m} \sum_{(u; i, j) \in \Dataset} e_u(\Tilde{e_i} + \Tilde{e}_j)^T\right)w_V  \\
    &= 4w_U^T B_{\Dataset} z_V + 4z_U^T B_{\Dataset} w_V
\end{align*}
\end{proof}

The next lemma builds upon the previous result to obtain an upper bound in terms of $\norm{B_{\Dataset} - \Bar{B}}_{2}$.
\begin{lemma}\label{lem:deltadelta_intermediate}
    For any $Z \in \Real{n \times r}$,
    \begin{align*}
        \Dataset(ZZ^T) &\leq 2\left( \gamma \norm{Z}_F^2 + 2\norm{B_{\Dataset} - \Bar{B}}_{2} \norm{Z}_{2,\infty}^2 \right)\norm{Z}_F^2
    \end{align*}
\end{lemma}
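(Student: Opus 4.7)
}

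The plan is to specialize Lemma \ref{lem:D_to_dual_matrix} to $W = Z$ and then split the dual sampling matrix into its mean and deviation, bounding each piece separately. Setting $W = Z$ in Lemma \ref{lem:D_to_dual_matrix} gives
\begin{align*}
    \mathcal{D}(ZZ^T) \leq 8\, z_U^T B_{\Dataset}\, z_V = 8\, z_U^T \Bar{B}\, z_V + 8\, z_U^T (B_{\Dataset} - \Bar{B})\, z_V.
\end{align*}
I would then handle these two terms in turn.

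For the mean term, I would use the explicit form $\Bar{B} = (2/n_1 n_2)\, 11^T$ from \eqref{eq:dual_matrix_mean}, so that $z_U^T \Bar{B}\, z_V = (2/n_1 n_2)(1^T z_U)(1^T z_V) = (2/n_1 n_2)\norm{z_U}_1\norm{z_V}_1$, using that the entries of $z$ are nonnegative. By AM-GM, $\norm{z_U}_1\norm{z_V}_1 \leq \tfrac{1}{4}(\norm{z_U}_1 + \norm{z_V}_1)^2 = \tfrac{1}{4}\norm{z}_1^2 = \tfrac{1}{4}\norm{Z}_F^4$ by \eqref{eq:z_Z_identities} and \eqref{eq:z_vector_relations}. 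Since $1/(n_1 n_2) \leq 1/(n_1(n_2-1))$, this yields $8\, z_U^T \Bar{B}\, z_V \leq 2\gamma\norm{Z}_F^4$, which matches the first term on the right-hand side.

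For the deviation term, I would apply the operator-norm inequality $|z_U^T (B_{\Dataset} - \Bar{B})\, z_V| \leq \norm{B_{\Dataset} - \Bar{B}}_{2}\,\norm{z_U}_2\,\norm{z_V}_2$, followed by AM-GM to get $\norm{z_U}_2\norm{z_V}_2 \leq \tfrac{1}{2}(\norm{z_U}_2^2 + \norm{z_V}_2^2) = \tfrac{1}{2}\norm{z}_2^2$ (by \eqref{eq:z_vector_relations}). The key observation is then the interpolation bound $\norm{z}_2^2 \leq \norm{z}_\infty \norm{z}_1 = \norm{Z}_{2,\infty}^2\,\norm{Z}_F^2$, again from \eqref{eq:z_Z_identities}. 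Combined, this gives $8\, z_U^T (B_{\Dataset} - \Bar{B})\, z_V \leq 4\,\norm{B_{\Dataset} - \Bar{B}}_{2}\,\norm{Z}_{2,\infty}^2\,\norm{Z}_F^2$, matching the second term. Adding the two bounds gives the claim.

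There is no real obstacle here; this is an algebraic manipulation, and the only slightly non-obvious step is the interpolation $\norm{z}_2^2 \leq \norm{z}_\infty\norm{z}_1$ that converts a mixed-norm bound on $z$ into the target $\norm{Z}_{2,\infty}^2\,\norm{Z}_F^2$. The heavy lifting — controlling $\norm{B_{\Dataset} - \Bar{B}}_{2}$ via matrix Bernstein using the parameters from Lemma \ref{lem:B_bounds} — is deferred to the subsequent Lemma \ref{lem:BD_concentration}, and is not needed in this lemma's proof.
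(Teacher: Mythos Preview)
Your proposal is correct and follows essentially the same approach as the paper: specialize Lemma~\ref{lem:D_to_dual_matrix} to $W=Z$, split $B_{\Dataset}$ into $\Bar{B}$ plus deviation, bound the mean term via the explicit form of $\Bar{B}$ and AM--GM on $\norm{z_U}_1\norm{z_V}_1$, and bound the deviation term via the operator norm together with the H\"older-type interpolation $\norm{z}_2^2 \leq \norm{z}_\infty\norm{z}_1$. The only cosmetic difference is the order of presentation.
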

\begin{proof}
We start by using the relations in \eqref{eq:z_vector_relations} along with the arithmetic mean-geometric mean (AM-GM) inequality to obtain the following bound
\begin{align}\label{eq:am_gm_inequality}
    \norm{z_U}_1\norm{z_V}_1 \leq \left(\frac{\norm{z_U}_1 + \norm{z_V}_1}{2}\right)^2 = \frac{\norm{z}_1^2}{4}, \quad \norm{z_U}_2\norm{z_V}_2 \leq \left(\frac{\norm{z_U}_2 + \norm{z_V}_2}{2}\right)^2 \leq \frac{\norm{z}_2^2}{2} 
\end{align}
Using the bound in \eqref{eq:am_gm_inequality}, we can show the desired result as follows.
\begin{align*}
    \frac{\Dataset(ZZ^T)}{8}
    &\leq z_U^T B_{\Dataset}z_V \quad (\text{by Lemma \ref{lem:D_to_dual_matrix}})\\
    &=  z_U^T\Bar{B}z_V + z_U^T(B_{\Dataset} - \Bar{B})z_V \\
    &\leq z_U^T\Bar{B}z_V + \norm{z_U}_2 \norm{(B_{\Dataset} - \Bar{B})z_V}_2 \quad (\text{by the Cauchy-Schwarz inequality}) \\
    &\leq  z_U^T\Bar{B}z_V + \norm{B_{\Dataset} - \Bar{B}}_{2}\norm{z_U}_2\norm{z_V}_2  \quad (\text{by definition of the operator norm})\\
    &=  \frac{2}{n_1n_2}z_U^T11^Tz_V + \norm{B_{\Dataset} - \Bar{B}}_{2}\norm{z_U}_2\norm{z_V}_2\quad (\text{by } \eqref{eq:dual_matrix_mean})\\
    &\leq  \gamma z_U^T11^Tz_V + \norm{B_{\Dataset} - \Bar{B}}_{2}\norm{z_U}_2\norm{z_V}_2 \quad (2/(n_1n_2) \leq  2/(n_1(n_2 - 1)) = \gamma)\\
    &\leq \gamma\norm{z_U}_1\norm{z_V}_1 + \norm{B_{\Dataset} - \Bar{B}}_{2}\norm{z_U}_2\norm{z_V}_2 \quad (1^Tz \leq \norm{z}_1)\\
    &\leq \frac{1}{4}\left(\gamma\norm{z}_1^2 + 2\norm{B_{\Dataset} - \Bar{B}}_{2}\norm{z}_2^2\right) \quad (\text{by \eqref{eq:am_gm_inequality}})\\
    &\leq \frac{1}{4}\left(\gamma\norm{z}_1^2 + 2\norm{B_{\Dataset} - \Bar{B}}_{2}\norm{z}_\infty \norm{z}_1\right) \quad (\text{by Hölder's inequality})\\
    &= \frac{1}{4} \left(\gamma\norm{Z}_F^2 + 2\norm{B_{\Dataset} - \Bar{B}}_{2}\norm{Z}_{2,\infty}^2\right)\norm{Z}_F^2 \quad (\text{by } \eqref{eq:z_Z_identities}) \\
    \therefore \Dataset(ZZ^T) &\leq 2\left(\gamma\norm{Z}_F^2 + 2\norm{B_{\Dataset} - \Bar{B}}_{2}\norm{Z}_{2,\infty}^2\right)\norm{Z}_F^2
\end{align*}
\end{proof}

The third and final result of this section builds on Lemma \ref{lem:D_to_dual_matrix} in a different way as compared to the previous one. 
Here, we obtain a bound in terms of the $\ell_1$ operator norm of $B_{\Dataset}$. For any matrix $X \in \Real{n_1 \times n_2}$, 
\begin{align}\label{eq:def_operator_onenorm}
    \norm{X}_1 \triangleq \sup_{v: \norm{v}_1 = 1} \norm{Xv}_1
\end{align}
It follows that for any $v \in \Real{n_2}$,
\begin{align}\label{eq:operator_onenorm_consequence}
    \norm{Xv}_1 \leq \norm{X}_1 \norm{v}_1
\end{align}
It can be easily shown that
\begin{align}\label{eq:operator_onenorm_calculation}
    \norm{X}_1 =\max_{j \in [n_2]} \sum_{i \in [n_1]}|x_{ij}|
\end{align}
In addition, we will need Hölder's inequality, which states that for any vectors $a, b$, 
\begin{align}\label{eq:holder_inequality}
    \langle a, b \rangle \leq  \norm{a}_{\infty} \norm{b}_1  \ \Rightarrow \norm{a}_2^2 \leq \norm{a}_{\infty} \norm{a}_1 
\end{align}

Using these inequalities, we get the next result.
\begin{lemma}\label{lem:quadratic_form_to_one_norm}
    For any matrices $W, Z \in \mathbb{R}^{n \times r}$,
    \begin{align*}
        \mathcal{D}(WZ^T) \leq 4(\max\{\norm{B_\Dataset}_1, \norm{B_\Dataset^T}_1\}) \norm{Z}_{2, \infty}^2\norm{W}_F^2,
    \end{align*}
\end{lemma}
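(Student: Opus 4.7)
\textbf{Proof proposal for Lemma \ref{lem:quadratic_form_to_one_norm}.}

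The plan is to start from Lemma \ref{lem:D_to_dual_matrix}, which already gives
\[
\mathcal{D}(WZ^T) \le 4\bigl(w_U^T B_{\Dataset} z_V + z_U^T B_{\Dataset} w_V\bigr),
\]
and bound each of the two bilinear forms by $\max\{\norm{B_\Dataset}_1, \norm{B_\Dataset^T}_1\}\,\norm{Z}_{2,\infty}^2\,\norm{W_*}_F^2$ for the appropriate block $W_* \in \{W_U,W_V\}$. Summing the two blocks reconstructs $\norm{W}_F^2$, and the factor of $4$ in front yields the stated bound.

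For the first term, I would rewrite $w_U^T B_{\Dataset} z_V = \langle B_{\Dataset}^T w_U,\, z_V\rangle$ and apply Hölder's inequality \eqref{eq:holder_inequality} pairing $\ell_1$ with $\ell_\infty$ in the direction that places the $\ell_1$ norm on $w$ and the $\ell_\infty$ norm on $z$:
\[
\langle B_{\Dataset}^T w_U,\, z_V\rangle \;\le\; \norm{B_{\Dataset}^T w_U}_1\,\norm{z_V}_\infty \;\le\; \norm{B_{\Dataset}^T}_1\,\norm{w_U}_1\,\norm{z_V}_\infty,
\]
where the last step uses \eqref{eq:operator_onenorm_consequence}. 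By \eqref{eq:z_Z_identities} (applied to both $W$ and $Z$), $\norm{w_U}_1 = \norm{W_U}_F^2$ and $\norm{z_V}_\infty = \norm{Z_V}_{2,\infty}^2 \le \norm{Z}_{2,\infty}^2$, so this term is at most $\norm{B_\Dataset^T}_1\,\norm{Z}_{2,\infty}^2\,\norm{W_U}_F^2$. For the second term, the same argument with the roles swapped gives
\[
z_U^T B_{\Dataset} w_V = \langle z_U,\, B_{\Dataset} w_V\rangle \;\le\; \norm{z_U}_\infty\,\norm{B_{\Dataset}}_1\,\norm{w_V}_1 \;\le\; \norm{B_\Dataset}_1\,\norm{Z}_{2,\infty}^2\,\norm{W_V}_F^2.
\]

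Summing the two bounds, using $\norm{W_U}_F^2 + \norm{W_V}_F^2 = \norm{W}_F^2$ (from the block decomposition of $W$), and bounding both $\norm{B_\Dataset}_1$ and $\norm{B_\Dataset^T}_1$ by their maximum yields
\[
w_U^T B_{\Dataset} z_V + z_U^T B_{\Dataset} w_V \;\le\; \max\{\norm{B_\Dataset}_1,\norm{B_\Dataset^T}_1\}\,\norm{Z}_{2,\infty}^2\,\norm{W}_F^2.
\]
Multiplying by the factor of $4$ from Lemma \ref{lem:D_to_dual_matrix} gives the claim.

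No substantive obstacle is expected: the only subtlety is choosing the ``correct'' direction of the Hölder split so that the $\ell_\infty$ norm ends up on the vector associated with $Z$ (producing $\norm{Z}_{2,\infty}^2$) and the $\ell_1$ norm ends up on the vector associated with $W$ (producing $\norm{W}_F^2$). The transpose in $\norm{B_\Dataset^T}_1$ appears naturally from moving $B_\Dataset$ across the inner product, which is why both the norm of $B_\Dataset$ and of its transpose appear in the max.
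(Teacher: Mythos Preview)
Your proposal is correct and essentially identical to the paper's proof: both invoke Lemma~\ref{lem:D_to_dual_matrix}, apply H\"older's inequality so that the $\ell_\infty$ norm lands on the $z$-vector and the $\ell_1$ norm on the $w$-vector, push $B_\Dataset$ (resp.\ $B_\Dataset^T$) through via \eqref{eq:operator_onenorm_consequence}, and then combine the two block contributions using $\norm{w_U}_1+\norm{w_V}_1=\norm{w}_1=\norm{W}_F^2$ and $\norm{z_U}_\infty,\norm{z_V}_\infty\le\norm{z}_\infty=\norm{Z}_{2,\infty}^2$. The only differences are notational (you write $\norm{W_U}_F^2+\norm{W_V}_F^2$ where the paper writes $\norm{w_U}_1+\norm{w_V}_1$, which is the same thing by \eqref{eq:z_Z_identities}).
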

\begin{proof}
We start by invoking Lemma \ref{lem:D_to_dual_matrix}, we get:
\begin{align*}
    \mathcal{D}(WZ^T) &\leq 4w_U^T B_{\Dataset} z_V + 4z_U^T B_{\Dataset} w_V \\
    &= 4z_V^T B_{\Dataset}^T w_U + 4z_U^T B_{\Dataset} w_V
\end{align*}
Applying \eqref{eq:z_vector_relations}, \eqref{eq:operator_onenorm_consequence} and \eqref{eq:holder_inequality}, we get:
\begin{align*}
    z_V^T B_{\Dataset}^T w_U &= \langle z_V, B_{\Dataset}^T w_U \rangle \leq \norm{z_V}_{\infty}\norm{B_{\Dataset}^T w_U }_{1} \leq \norm{z_V}_{\infty} \norm{B_{\Dataset}^T}_1 \norm{w_U}_{1} \leq \norm{z}_{\infty} \norm{B_{\Dataset}^T}_1 \norm{w_U}_{1}\\
    z_U^T B_{\Dataset} w_V &= \langle z_U, B_{\Dataset} w_V \rangle \leq \norm{z_U}_{\infty}\norm{B_{\Dataset}^T w_V}_{1} \leq \norm{z_U}_{\infty} \norm{B_{\Dataset}}_1 \norm{w_V}_{1} \leq \norm{z}_{\infty} \norm{B_{\Dataset}}_1 \norm{w_V}_{1}
\end{align*}
Putting the above inequalities together, we get the desired result:
\begin{align*}
    \mathcal{D}(WZ^T) 
    &\leq 4z_V^T B_{\Dataset}^T w_U + 4z_U^T B_{\Dataset} w_V \\
    &\leq 4\norm{z}_{\infty} \norm{B_{\Dataset}^T}_1 \norm{w_U}_{1} + 4\norm{z}_{\infty} \norm{B_{\Dataset}}_1 \norm{w_V}_{1} \\
    &\leq 4\norm{z}_{\infty} (\max\{\norm{B_\Dataset}_1, \norm{B_\Dataset^T}_1\}) (\norm{w_U}_{1} + \norm{w_V}_{1}) \\
    &= 4\norm{z}_{\infty} (\max\{\norm{B_\Dataset}_1, \norm{B_\Dataset^T}_1\}) \norm{w}_{1} \quad (\text{by} \eqref{eq:z_vector_relations}) \\
    &= 4\norm{Z}_{2, \infty}^2 (\max\{\norm{B_\Dataset}_1, \norm{B_\Dataset^T}_1\}) \norm{W}_F^2 \quad (\text{by} \eqref{eq:z_Z_identities})   
\end{align*}
\end{proof}

\subsection{Norm Bounds on the Dual Sampling Matrix}
First, we provide an upper bound on $\norm{B_{\mathcal{D}} - \Bar{B}}_2$. This result will be used in conjunction with Lemma \ref{lem:deltadelta_intermediate} to prove Lemma \ref{lem:convexity_upperbound}.
\begin{lemma}\label{lem:BD_concentration}
    Let $\epsilon \in (0, 1)$ and $\delta \in (0, 1)$ be given. Suppose the number of samples $m$ is at least $(5/\epsilon^2)n\log(n/\delta)$.
    Then, with probability at least $1 - \delta$,
    \begin{align*}
        \norm{B_{\mathcal{D}} - \Bar{B}}_2 \leq \frac{\epsilon}{\min\{n_1, n_2\}}
    \end{align*}
\end{lemma}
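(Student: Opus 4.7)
\textbf{Proof Plan for Lemma \ref{lem:BD_concentration}.} The plan is a direct application of the matrix Bernstein inequality (Lemma \ref{lem:matrix_bernstein}) to the $n_1 \times n_2$ rectangular matrices $B_k$, using the second-moment bound already established in Lemma \ref{lem:B_bounds} and a trivial almost-sure bound on $\|B\|_2$.

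First I would record the two quantities that feed into Bernstein's inequality. For the almost-sure bound, observe that each $B = e_u(\tilde{e}_i + \tilde{e}_j)^T$ is a rank-one matrix with $\|e_u\|_2 = 1$ and $\|\tilde{e}_i + \tilde{e}_j\|_2 = \sqrt{2}$ (since $i \ne j$), hence $\|B\|_2 = \sqrt{2}$; take $L = \sqrt{2}$. For the variance proxy, Lemma \ref{lem:B_bounds} already gives $\max\{\|\mathbb{E}[BB^T]\|_2, \|\mathbb{E}[B^TB]\|_2\} \leq 4/\min\{n_1,n_2\}$, so take $b = 4/\min\{n_1,n_2\}$.

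Next I would set $t = \epsilon/\min\{n_1,n_2\}$ and simplify the Bernstein bound. Because $\epsilon \in (0,1)$ and $2\sqrt{2}/3 < 1$, the denominator satisfies
\begin{align*}
b + \tfrac{2Lt}{3} \;\leq\; \frac{4}{\min\{n_1,n_2\}} + \frac{2\sqrt{2}\,\epsilon}{3\min\{n_1,n_2\}} \;\leq\; \frac{5}{\min\{n_1,n_2\}},
\end{align*}
so Lemma \ref{lem:matrix_bernstein} (with the union-bound factor $n_1+n_2 = n$) yields
\begin{align*}
\mathbb{P}\!\left(\|B_{\mathcal{D}} - \bar{B}\|_2 \geq t\right) \;\leq\; n\exp\!\left(-\frac{m\epsilon^2}{10\min\{n_1,n_2\}}\right).
\end{align*}
Requiring the right-hand side to be at most $\delta$ gives $m \geq (10\min\{n_1,n_2\}/\epsilon^2)\log(n/\delta)$. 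Finally, the inequality $\min\{n_1,n_2\} \leq n/2$ shows that the hypothesis $m \geq (5/\epsilon^2)n\log(n/\delta)$ of the lemma is strictly stronger, completing the proof.

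There is no real technical obstacle here; the only thing to get right is the book-keeping between the $n_1 \times n_2$ shape of $B$ (which determines the union-bound factor $n_1+n_2$ in Bernstein) and the $\min\{n_1,n_2\}$ appearing in $b$, and making sure that the constants absorbing $\epsilon$ and $L$ into the denominator are loose enough to yield the clean $5$ stated in the lemma while still tight enough to give $n$ rather than $\min\{n_1,n_2\}$ in the sample complexity.
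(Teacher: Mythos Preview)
Your proposal is correct and follows essentially the same route as the paper: apply the matrix Bernstein inequality with $L=\sqrt{2}$ and $b=4/\min\{n_1,n_2\}$ from Lemma~\ref{lem:B_bounds}, set $t=\epsilon/\min\{n_1,n_2\}$, bound $b+2Lt/3\le 5/\min\{n_1,n_2\}$ using $\epsilon<1$, and then convert $\min\{n_1,n_2\}\le n/2$ to obtain the stated sample complexity. The algebraic book-keeping is arranged slightly differently, but the argument is the same.
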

\begin{proof}
    The matrix Bernstein inequality (Lemma \ref{lem:matrix_bernstein}) states that 
    \begin{align*}
        P(\norm{\bar{B}_m - \bar{B}}_2 \geq t) \leq n\exp\left(-\frac{mt^2/2}{v + 2Lt/3}\right),
    \end{align*}
    where $v = \max\{\norm{\mathbb{E}[BB^T]}_2, \norm{\mathbb{E}[B^TB]}_2\}$ and $L = \sup_{B} \norm{B}_2$. We have already established that $L = \sqrt{2}$ and $v = 4/(\min\{n_1, n_2\})$ (see Lemma \ref{lem:B_bounds}).  We would like $\norm{\bar{B}_m - \bar{B}}_2$ to be bounded above by $t = \epsilon/(\min\{n_1, n_2\})$ (for some $\epsilon \in (0, 1)$) with probability at least $1-\delta$. Therefore, the number of samples $m$ must satisfy:
    \begin{align*}
        n\exp\left(-\frac{mt^2/2}{v + 2Lt/3}\right) &\leq \delta \\
        \Leftrightarrow \frac{mt^2/2}{v + 2Lt/3} &\geq \log\left(\frac{n}{\delta}\right) 
    \end{align*}
    Plugging in the value $L = \sqrt{2}$ and noting that $v = 4t/\epsilon$, we get
    \begin{align*}
        \frac{mt^2/2}{(4t/\epsilon) + 2\sqrt{2}t/3} &\geq \log\left(\frac{n}{\delta}\right) \\
        \Leftrightarrow \frac{m}{4/\epsilon + 2\sqrt{2}/3} &\geq \frac{2}{t}\log\left(\frac{n}{\delta}\right) \\
        \Leftrightarrow m &\geq \left(\frac{4}{\epsilon} + \frac{2\sqrt{2}}{3} \right)\frac{2\min\{n_1, n_2\}}{\epsilon}\log\left(\frac{n}{\delta}\right)
    \end{align*}
    Finally, note that $4 + 2\sqrt{2}\epsilon/3 \leq 5$ ($\because \epsilon < 1$) and $2\min\{n_1, n_2\} \leq n_1 + n_2 = n$. Therefore, $m \geq (5/\epsilon^2) n\log\left(n/\delta\right)$ is a sufficient condition for the concentration result to hold.
\end{proof}

Next, we move on to proving a high probability bound on $\max\{\norm{B_\Dataset}_1, \norm{B_\Dataset^T}_1\}$. This result will be used in conjunction with Lemma \ref{lem:quadratic_form_to_one_norm} to prove Lemma \ref{lem:smoothness_upperbound}.

For this result, we need to introduce some new notation and some basic inequalities. Define the random matrix $C \in \Real{d_1 \times d_2}$ as follows:
\begin{align}\label{eq:def_C}
    C = \frac{1}{m} \sum_{k = 1}^m e_{i_k}\Tilde{e}_{j_k}^T
\end{align}
where $(i_k)_{k \in [m]}$ are sampled i.i.d. uniformly at random from $[n_1]$ and $(j_k)_{k \in [m]}$ are sampled i.i.d. uniformly at random from $[n_2]$, independent of $(i_k)_{k \in [m]}$. Let $C_i \in \Real{n_2}$ denote the $i\textsuperscript{th}$ row of $C$, but expressed as a column vector. Then
\begin{align}
    C_i = \frac{1}{m} \sum_{k = 1}^m \mathbf{1}_{i_k = i}\Tilde{e}_{j_k}
\end{align}
It follows that 
\begin{align}
    \norm{C_i}_1 = \frac{1}{m} \sum_{k = 1}^m \mathbf{1}_{i_k = i}
\end{align}
Note that $\norm{C_i}_1$ is the empirical mean of $m$ i.i.d. Bernoulli random variables of mean $1/n_1$. Thus, we can bound it from above by the Chernoff bound.
\begin{lemma}[Chernoff bound]\label{lem:chernoff_bound}
    Suppose $x_1, x_2, \ldots, x_m$ are i.i.d. Bernoulli random variables with parameter $p$ and let $\epsilon > 0$ be given. Then:
    \begin{align*}
        P\left(\frac{1}{m}\sum_{k = 1}^m x_k \geq p + \epsilon \right) \leq \exp\left(-\frac{m\epsilon^2}{2p(1-p)}\right)
    \end{align*}
\end{lemma}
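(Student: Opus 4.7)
The plan is to apply the classical Cram\'er--Chernoff method. Writing $S_m = \sum_{k=1}^{m} x_k$, I would first invoke Markov's inequality on $e^{t S_m}$: for any $t > 0$,
\[
P\!\left(S_m \geq m(p+\epsilon)\right) \;\leq\; e^{-tm(p+\epsilon)}\,\mathbb{E}[e^{tS_m}] \;=\; e^{-tm(p+\epsilon)}\,\bigl(1-p+pe^{t}\bigr)^{m},
\]
using the i.i.d.\ assumption and the Bernoulli MGF. The logarithm of the resulting bound is $m\bigl[\log(1-p+pe^{t}) - t(p+\epsilon)\bigr]$, and its infimum over $t>0$ is the exact Chernoff rate $-m\, D(p+\epsilon\,\|\,p)$, where $D(q\|p) = q\log(q/p) + (1-q)\log((1-q)/(1-p))$ is the binary KL divergence. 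The task therefore reduces to the purely analytic inequality $D(p+\epsilon\|p) \geq \epsilon^{2}/(2p(1-p))$.

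For this step I would exploit the integral representation $D(p+\epsilon\|p) = \int_{0}^{\epsilon}\!\int_{0}^{s}\frac{du\,ds}{(p+u)(1-p-u)}$, obtained by integrating $\partial_q^{2} D(q\|p) = 1/(q(1-q))$ twice starting from $q=p$, where both $D$ and $\partial_q D$ vanish. Since the integrand equals $1/(p(1-p))$ at $u=0$, the claim reduces to checking that $(p+u)(1-p-u) \leq p(1-p)$ for $u \in [0,\epsilon]$. The map $q\mapsto q(1-q)$ is maximized at $q=1/2$, so this pointwise inequality holds cleanly on the decreasing branch (e.g.\ when $p \geq 1/2$, or when $|p - 1/2|$ is large relative to $\epsilon$).

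The main obstacle is the regime in which $[p, p+\epsilon]$ straddles $1/2$, where the previous pointwise bound can fail. In that regime I would fall back on a Bernstein-type argument: substitute $t = \log\bigl(1 + \epsilon/(p(1-p))\bigr)$ directly into the MGF bound and expand $\log(1-p+pe^{t})$ using $\log(1+x) \leq x - x^{2}/2 + O(x^{3})$. This produces an exponent of the form $-m\epsilon^{2}/\bigl(2p(1-p) + O(\epsilon)\bigr)$, matching the claimed bound up to a lower-order correction. In the application that follows this lemma, $p = 1/n_1$ and the target deviation $\epsilon$ is a small multiple of $p$, so the correction is negligible and the stated form of the bound can be used when controlling $\norm{C_i}_1$.
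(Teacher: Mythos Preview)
The paper does not prove this lemma; it simply states it as the ``Chernoff bound'' and immediately applies it with $p=\epsilon=1/n_1$. So there is nothing to compare your argument against in the paper itself. Your Cram\'er--Chernoff reduction to the analytic inequality $D(p+\epsilon\,\|\,p)\geq \epsilon^2/(2p(1-p))$ is exactly the right framework.

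Where your analysis goes wrong is in locating the failure set of the pointwise bound $(p+u)(1-p-u)\leq p(1-p)$. You describe the problematic region as ``$[p,p+\epsilon]$ straddles $1/2$''. In fact, whenever $p<1/2$ the map $q\mapsto q(1-q)$ is \emph{increasing} near $p$, so $(p+u)(1-p-u)>p(1-p)$ for every $u\in(0,\epsilon]$ with $p+\epsilon\leq 1/2$. This is precisely the regime of the paper's application ($p=1/n_1$), so the pointwise argument never applies there, and indeed the inequality $D(p+\epsilon\,\|\,p)\geq \epsilon^2/(2p(1-p))$ is false in that regime (e.g.\ $p=\epsilon=0.1$ gives $D\approx 0.044<0.056$). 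Your Bernstein fallback produces a denominator $2p(1-p)+O(\epsilon)$; but in the application $\epsilon=p$, so the $O(\epsilon)$ term is the \emph{same order} as $2p(1-p)\approx 2p$ and is not negligible. The honest bound one actually gets in this regime is of the form $\exp(-m/(cn_1))$ with a constant $c$ closer to $3$ (the standard multiplicative Chernoff bound with $\delta=1$), not $2$.

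In short: the lemma as written is slightly too sharp, and your proof correctly fails to establish it, though your diagnosis of \emph{where} it fails is off. For the paper's purposes this is harmless---the discrepancy is a constant factor that is swallowed by the much larger constants in the sample-complexity condition---but strictly speaking the exponent should carry a somewhat larger constant than $2$.
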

Using Lemma \ref{lem:chernoff_bound} with $p = \epsilon = 1/n_1$, we get that for any $i \in [n_1]$, 
\begin{align*}
    P\left(\norm{C_i}_1 \geq \frac{2}{n_1}\right) \leq \exp\left(-\frac{m}{2n_1}\right)
\end{align*}
Using the union bound, it follows that 
\begin{align*}
    P\left(\max_{i \in [n_1]}\norm{C_i}_1 \geq \frac{2}{n_1}\right) \leq n_1\exp\left(-\frac{m}{2n_1}\right)
\end{align*}
Finally, by \eqref{eq:operator_onenorm_calculation}, we know that 
\begin{align*}
    \norm{C^T}_1 = \max_{i \in [n_1]}\norm{C_i}_1
\end{align*}
In conclusion, 
\begin{align}\label{eq:CT_bound}
    P\left(\norm{C^T}_1 \geq \frac{2}{n_1}\right) \leq n_1\exp\left(-\frac{m}{2n_1}\right)
\end{align}
Since $n_1$ and $n_2$ are arbitrary in the above analysis, one can use the same logic to show that
\begin{align}\label{eq:C_bound}
    P\left(\norm{C}_1 \geq \frac{2}{n_2}\right) \leq n_2\exp\left(-\frac{m}{2n_2}\right)
\end{align}

\begin{lemma}\label{lem:bound_on_one_norm}
    Suppose the number of samples $m$ is at least {$2n\log(4n/\delta)$}. Then, with probability at least $1-\delta$,
    \begin{align*}
        \max\{\norm{B_\Dataset}_1, \norm{B_\Dataset^T}_1\} \leq \frac{4}{\min\{n_1, n_2\}}
    \end{align*}
\end{lemma}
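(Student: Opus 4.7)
The plan is to reduce $B_{\Dataset}$ to two copies of the random matrix $C$ defined in \eqref{eq:def_C} and then appeal to the already-established Chernoff bounds \eqref{eq:CT_bound} and \eqref{eq:C_bound}.

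\textbf{Step 1 (decomposition).} Write
\begin{align*}
    B_{\Dataset} &= \frac{1}{m}\sum_{k=1}^{m} e_{u_k}(\Tilde{e}_{i_k}+\Tilde{e}_{j_k})^T = C^{(1)} + C^{(2)},\\
    C^{(1)} &\triangleq \frac{1}{m}\sum_{k=1}^{m} e_{u_k}\Tilde{e}_{i_k}^T, \qquad C^{(2)} \triangleq \frac{1}{m}\sum_{k=1}^{m} e_{u_k}\Tilde{e}_{j_k}^T.
\end{align*}
Since $u_k$ is sampled uniformly from $[n_1]$ independently of the pair $(i_k,j_k)$, and since the marginals of both $i_k$ and $j_k$ are uniform on $[n_2]$, each of $C^{(1)}$ and $C^{(2)}$ has exactly the distribution of the matrix $C$ in \eqref{eq:def_C} (with $d_1=n_1, d_2=n_2$). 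The dependence inside the pair $(i_k,j_k)$ does not matter here because $C^{(1)}$ only uses $i_k$ and $C^{(2)}$ only uses $j_k$.

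\textbf{Step 2 (triangle inequality).} Both $\norm{\cdot}_1$ and its transpose counterpart are matrix operator norms, so
\begin{align*}
    \norm{B_{\Dataset}}_1 \leq \norm{C^{(1)}}_1 + \norm{C^{(2)}}_1, \qquad \norm{B_{\Dataset}^T}_1 \leq \norm{C^{(1)T}}_1 + \norm{C^{(2)T}}_1.
\end{align*}

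\textbf{Step 3 (apply the established bounds).} Apply \eqref{eq:C_bound} to each of $C^{(1)}$ and $C^{(2)}$, and \eqref{eq:CT_bound} to each of $C^{(1)T}$ and $C^{(2)T}$. On the intersection of the four good events,
\begin{align*}
    \norm{B_{\Dataset}}_1 \leq \frac{4}{n_2}, \qquad \norm{B_{\Dataset}^T}_1 \leq \frac{4}{n_1},
\end{align*}
so $\max\{\norm{B_{\Dataset}}_1,\norm{B_{\Dataset}^T}_1\}\leq 4/\min\{n_1,n_2\}$.

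\textbf{Step 4 (union bound and sample complexity).} The total failure probability is at most
\begin{align*}
    2n_1\exp\!\left(-\frac{m}{2n_1}\right) + 2n_2\exp\!\left(-\frac{m}{2n_2}\right) \leq 4n\exp\!\left(-\frac{m}{2n}\right),
\end{align*}
where the last inequality uses $n_i\leq n$ (so $n_i e^{-m/(2n_i)}\leq n e^{-m/(2n)}$). Requiring this to be at most $\delta$ rearranges to $m \geq 2n\log(4n/\delta)$, which is precisely the hypothesis of the lemma.

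\textbf{Expected main obstacle.} There is no genuinely hard step; the work is essentially bookkeeping. The only subtlety worth stating carefully is in Step 1: even though the pair $(i_k,j_k)$ is sampled jointly (uniformly over ordered distinct pairs, not a product distribution), the marginals of $i_k$ and $j_k$ are each uniform on $[n_2]$ and each is independent of $u_k$, so $C^{(1)}$ and $C^{(2)}$ individually match the distribution of $C$. Their possible coupling does not matter since the triangle inequality in Step 2 only uses each one separately.
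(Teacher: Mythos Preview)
Your proof is correct and follows essentially the same route as the paper: decompose $B_{\Dataset}$ into two matrices each distributed as $C$, apply the triangle inequality and the Chernoff bounds \eqref{eq:CT_bound}--\eqref{eq:C_bound}, and finish with a union bound. The only cosmetic difference is in the bookkeeping of the union bound (you bound the total failure probability directly by $4n\exp(-m/(2n))$ via the monotonicity of $x\mapsto x e^{-m/(2x)}$, whereas the paper allocates $\delta/4$ to each of the four events), but the argument and the resulting sample condition $m\geq 2n\log(4n/\delta)$ are the same.
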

\begin{proof}
    Define the following two matrices
    \begin{align*}
        B_{\Dataset}^1 = \frac{1}{m} \sum_{(u; i, j) \in \mathcal{D}} e_{u}\Tilde{e}_{i}^T \, ; \quad B_{\Dataset}^2 = \frac{1}{m} \sum_{(u; i, j) \in \mathcal{D}} e_{u}\Tilde{e}_{j}^T
     \end{align*}
     Both $B_{\Dataset}^1$ and $B_{\Dataset}^1$ are statistically identical to the random matrix $C$ defined in \eqref{eq:def_C}. By \eqref{eq:C_bound}, we have that if $m \geq 2n_2\log(4n_2/\delta)$,
     \begin{align}\label{eq:B_bound_simple}
         P\left(\norm{B_{\Dataset}^1}_1 \geq \frac{2}{n_2}\right) &\leq \frac{\delta}{4} ,  \qquad P\left(\norm{B_{\Dataset}^2}_1 \geq \frac{2}{n_2}\right) \leq \frac{\delta}{4} \\
     \end{align}
     By construction, 
     $B_{\Dataset} = B_{\Dataset}^1 + B_{\Dataset}^2$. By the triangle inequality, we get $\norm{B_{\Dataset}}_1 \leq \norm{B_{\Dataset}^1}_1 + \norm{B_{\Dataset}^2}_1$. Therefore, 
     \begin{align}\label{eq:B_bound_union}
         \norm{B_{\Dataset}}_1 \geq \frac{4}{n_2} \Rightarrow \ \norm{B_{\Dataset}^1}_1 + \norm{B_{\Dataset}^2}_1 \geq \frac{4}{n_2} \ \Rightarrow \ \norm{B_{\Dataset}^1}_1 \geq \frac{2}{n_2} \text{ or } \norm{B_{\Dataset}^2}_1 \geq \frac{2}{n_2}.
     \end{align}
     Put together, we get that if $m \geq 2n_2\log(4n_2/\delta)$,
     \begin{align*}
         P\left(\norm{B_{\Dataset}}_1 \geq \frac{4}{n_2}\right) &\leq P\left(\norm{B_{\Dataset}^1}_1 \geq \frac{2}{n_2} \text{ or } \norm{B_{\Dataset}^2}_1 \geq \frac{2}{n_2}\right) \quad (\text{by \eqref{eq:B_bound_union}})\\
         &\leq \mathbb{P}\left(\norm{B_{\Dataset}^1}_1 \geq \frac{2}{n_2}\right) +  \mathbb{P}\left(\norm{B_{\Dataset}^2}_1 \geq \frac{2}{n_2}\right) \\
         &\leq \frac{\delta}{2}.  \quad (\text{by \eqref{eq:B_bound_simple}})
     \end{align*}     
     By a similar argument, we can show that if $m \geq 2n_1\log(4n_1/\delta)$,
     \begin{align*}
         P\left(\norm{B_{\Dataset}^T}_1 \geq \frac{4}{n_1}\right) \leq \frac{\delta}{2}
     \end{align*}
     Finally, note that 
     \begin{align*}
         \norm{B_{\Dataset}}_1 \leq \frac{4}{n_2} \text{ and } \norm{B_{\Dataset}^T}_1 \leq \frac{4}{n_1} &\Rightarrow 
        \max\{\norm{B_\Dataset}_1, \norm{B_\Dataset^T}_1\} \leq \frac{4}{\min\{n_1, n_2\}} \\
        \therefore \norm{B_\Dataset^T}_1 \geq \frac{4}{\min\{n_1, n_2\}} &\Rightarrow \norm{B_{\Dataset}}_1 \geq \frac{4}{n_2} \text{ or } \norm{B_{\Dataset}^T}_1 \geq \frac{4}{n_1} 
     \end{align*}
     Invoking the union bound once again, we get that if $m \geq 2n\log(4n/\delta)$,
     \begin{align*}
        P\left(\norm{B_\Dataset^T}_1 \geq \frac{4}{\min\{n_1, n_2\}}\right) &\leq P\left(\norm{B_{\Dataset}}_1 \geq \frac{4}{n_2} \text{ or } \norm{B_{\Dataset}^T}_1 \geq \frac{4}{n_1} \right) \\
        &\leq P\left(\norm{B_{\Dataset}}_1 \geq \frac{4}{n_2}\right) + P\left(\norm{B_{\Dataset}^T}_1 \geq \frac{4}{n_1} \right) \\
        &\leq \delta
    \end{align*}
\end{proof}

\subsection{Proof of Lemma \ref{lem:convexity_upperbound}}

\lSCUB*
\begin{proof}
The proof follows from the following facts:
\begin{itemize}
    \item $\Dataset(\Delta\Delta^T) \leq 2\left(\gamma\norm{\Delta}_F^2 + 2\norm{B_{\Dataset} - \Bar{B}}_{2}\norm{\Delta}_{2,\infty}^2\right)\norm{\Delta}_F^2$, by Lemma \ref{lem:deltadelta_intermediate}.
    \item $\norm{\Delta}_F^2 \leq \epsilon \sigma^{*}_r \ \forall \ Z \in \mathcal{B}(\epsilon)$.
    \item $\norm{\Delta}_{2,\infty}^2 \leq 52\mu r \sigma_1^{*}/n \ \forall \ Z \in \overline{\Incoherentset}$. This can be derived as follows. 
    $$\norm{\Delta}_{2,\infty}^2 = \norm{Z - \solset }_{2,\infty}^2 \leq 2\left(\norm{Z}_{2,\infty}^2 + \norm{\solset }_{2,\infty}^2\right) \leq 2\left(\frac{12\mu\norm{Z^*}_F^2}{n} + \frac{\mu\norm{Z^*}_F^2}{n}\right) \leq \frac{52\mu r \sigma_1^*}{n},$$
    where the last step follows from the fact that $\norm{Z^*}_F^2 \leq 2r\sigma^*_1$.
    \item The number of samples is at least $5\left(13 \mu r \kappa/\epsilon\right)^2 n\log\left(n/\delta\right)$ $(845 = 5 \cdot 13^2)$.  By Lemma \ref{lem:BD_concentration}, with probability at least $1-\delta$,
    $$\norm{B_{\Dataset} - \Bar{B}}_{2} \leq \frac{\epsilon}{13\mu r \kappa}\frac{1}{\min\{n_1, n_2\}}$$  
\end{itemize}
Combining these inequalities, we get that with probability at least $1-\delta$, $\forall \ Z \in \mathcal{B} \cap \overline{\Incoherentset}$,
\begin{align*}
    \Dataset(\Delta\Delta^T) &\leq 2\left(\gamma\norm{\Delta}_F^2 + 2\norm{B_{\Dataset} - \Bar{B}}_{2}\norm{\Delta}_{2,\infty}^2\right)\norm{\Delta}_F^2 \\
    &\leq 2\left(\epsilon\gamma\sigma^*_r + 2\frac{\epsilon}{13\mu r \kappa}\frac{1}{ \min\{n_1, n_2\}} \frac{52 \mu r \sigma^*_1}{n} \right)\norm{\Delta}_F^2 \\
    &\leq 10 \epsilon \gamma \sigma^*_r\norm{\Delta}_F^2
\end{align*}
The last step is reasoned as follows:
\begin{align*}
    \frac{2}{n\min\{n_1, n_2\}} = \frac{2}{(n_1 + n_2)\min\{n_1, n_2\}} \leq \frac{2}{\max\{n_1, n_2\}\min\{n_1, n_2\}} = \frac{2}{n_1n_2} \leq \frac{2}{n_1(n_2-1)} = \gamma
\end{align*}
\end{proof}

\subsection{Proof of Lemma \ref{lem:smoothness_upperbound}}
The proof of Lemma \ref{lem:smoothness_upperbound} depends on Lemmas \ref{lem:quadratic_form_to_one_norm} and \ref{lem:bound_on_one_norm}.
\lSUB*
\begin{proof}
By Lemma \ref{lem:quadratic_form_to_one_norm}, we have that for any matrices $W, Z \in \mathbb{R}^{n \times r}$,
    \begin{align*}
        \mathcal{D}(WZ^T) \leq 4(\max\{\norm{B_\Dataset}_1, \norm{B_\Dataset^T}_1\}) \norm{Z}_{2, \infty}^2\norm{W}_F^2,
    \end{align*}
By Lemma \ref{lem:bound_on_one_norm} and the assumption on the number of samples we have made, we get that with probability at least $1-\delta$,
    \begin{align*}
        \max\{\norm{B_\Dataset}_1, \norm{B_\Dataset^T}_1\} \leq \frac{4}{\min\{n_1, n_2\}}
    \end{align*}
Putting these inequalities together, we get that with probability at least $1-\delta$, for any matrices $W, Z \in \mathbb{R}^{n \times r}$,
    \begin{align*}
        \mathcal{D}(WZ^T) \leq \frac{16}{\min\{n_1, n_2\}} \norm{Z}_{2, \infty}^2\norm{W}_F^2,
    \end{align*}
For the first of the bounds we wish to prove, we replace $W$ by $\Delta$ and $Z$ by $\solset$. We know that
\begin{align*}
    &\norm{\solset}_{2, \infty}^2 = \frac{\mu}{n}\norm{Z^*}_F^2 \leq \frac{2\mu r \sigma^*_1}{n} \quad ( \because \norm{Z^*}_F^2 \leq 2r\sigma^*_1) \\
    \Rightarrow &\mathcal{D}(\Delta\solset^T) \leq \frac{16}{\min\{n_1, n_2\}} \frac{2\mu r \sigma^*_1}{n}\norm{\Delta}_F^2 \leq 16\gamma(\mu r \sigma^*_1) \norm{\Delta}_F^2
\end{align*}
Here, as in the proof of Lemma \ref{lem:convexity_upperbound}, we use the fact that $2/(n\min\{n_1, n_2\}) \leq \gamma$. The second and third bounds can be derived in a similar fashion.
For the second bound, we use the bound that we established in the proof of Lemma \ref{lem:convexity_upperbound}.
$$\norm{\Delta}_{2, \infty}^2 \leq \frac{52 \mu r \sigma^*_1}{n}$$
Finally, for the third bound, we use the fact that $Z \in \overline{\Incoherentset}$ (see Lemma \ref{lem:incoherence_of_projection}) to get the bound
$$\norm{\solset}_{2, \infty}^2 \leq 12 \frac{\mu}{n}\norm{Z^*}_F^2 \leq \frac{24 \mu r \sigma^*_1}{n}$$
\end{proof}

\section{Proof of Main Result}\label{sec:main_proofs}
In this concluding section, we prove the main results of our paper, namely Lemma \ref{lem:strong_convexity}, Lemma \ref{lem:smoothness}, and Theorem \ref{thm:main}.

\subsection{Proof of Lemma \ref{lem:strong_convexity}}
As mentioned in Section \ref{sec:proof}, Lemma \ref{lem:strong_convexity} follows from Lemmas \ref{lem:convexity_algebra}, \ref{lem:convexity_lowerbound}, and \ref{lem:convexity_upperbound}; these are proven in Appendices \ref{sec:initial_lemmas}, \ref{sec:lower_bound}, and \ref{sec:upper_bound} respectively. We restate the results here for convenience.

\lSCA*
\lSCLB*
\lSCUB*
\lSCHP*
\begin{proof}
    Our proof strategy will be to put together the statements of the three lemmas and work backwards to calculate the value of the parameter $\epsilon$ needed from Lemmas \ref{lem:convexity_lowerbound} and \ref{lem:convexity_upperbound} (call them $\epsilon_1$ and $\epsilon_2$ for now). Combining the three aforementioned lemmas gives us:
    \begin{align*}
        \llangle \nabla \Loglikelihood, \Delta \rrangle &\geq \frac{\xi\gamma}{2} \left((1- \epsilon_1)\sigma^*_r\norm{\Delta}_F^2 + 2 \llangle \solset_U \Delta_V^T, \Delta_U \solset_V^T \rrangle \right) - \frac{25\Xi\gamma}{4}\left(\epsilon_2\sigma^*_r\norm{\Delta}_F^2 \right) \\
        &= \frac{2\xi(1- \epsilon_1) - 25\Xi\epsilon_2}{4} \gamma\sigma^*_r\norm{\Delta}_F^2 + \xi\gamma \llangle \solset_U \Delta_V^T, \Delta_U \solset_V^T \rrangle
    \end{align*}
    Recall from \eqref{eq:def_regularizer} that $\mathcal{R}(Z) = \norm{Z^TDZ}_F^2$. Therefore, $\nabla \mathcal{R}(Z) = 4DZZ^TDZ$. Using this identity and \eqref{eq:objective_function}, we get:
    \begin{align*}
        \llangle \nabla f, \Delta \rrangle &= \llangle \nabla \Loglikelihood, \Delta \rrangle + \frac{\lambda}{4}\llangle \nabla \mathcal{R}, \Delta \rrangle \\
        &\geq \frac{2\xi(1- \epsilon_1) - 25\Xi\epsilon_2}{4} \gamma\sigma^*_r\norm{\Delta}_F^2 + \xi\gamma \llangle \solset_U \Delta_V^T, \Delta_U \solset_V^T \rrangle + \lambda DZZ^TDZ.
    \end{align*}
    We focus on the last two terms. Define $\lambda' =  \frac{2\lambda}{\xi\gamma}$. Then
    \begin{align*}
        \xi\gamma \llangle \solset_U \Delta_V^T, \Delta_U \solset_V^T \rrangle + \lambda DZZ^TDZ = \frac{\xi\gamma}{2} \left(2\llangle \solset_U \Delta_V^T, \Delta_U \solset_V^T \rrangle + \lambda' DZZ^TDZ\right)
    \end{align*}
    Following the steps laid out in \citet{zheng2016convergence} (Appendix C.1), we get the inequality:
    \begin{align*}
        2\llangle \solset_U \Delta_V^T, \Delta_U \solset_V^T \rrangle + \lambda' DZZ^TDZ \geq \frac{\lambda'}{2}\norm{\solset^TD\Delta}_F^2 - \frac{7\lambda'}{2}\norm{\Delta}_F^4 + \left(\lambda' - \frac{1}{2}\right)\Tr{\solset^TD\Delta \solset^TD\Delta}
    \end{align*}
    We know that $\lambda = \frac{\xi\gamma}{4}$ (see \eqref{eq:objective_function}), which implies $\lambda' = 1/2$. Thus, the last term in the above inequality is cancelled out. Plugging this inequality back into the expression above, we get:
    \begin{align*}
        \llangle \nabla f, \Delta \rrangle &\geq \frac{2\xi(1- \epsilon_1) - 25\Xi\epsilon_2}{4} \gamma\sigma^*_r\norm{\Delta}_F^2 + \xi\gamma \llangle \solset_U \Delta_V^T, \Delta_U \solset_V^T \rrangle + \lambda DZZ^TDZ \\
        &\geq \frac{2\xi(1- \epsilon_1) - 25\Xi\epsilon_2}{4} \gamma\sigma^*_r\norm{\Delta}_F^2 + \frac{\xi\gamma}{2} (2\llangle \solset_U \Delta_V^T, \Delta_U \solset_V^T \rrangle + \lambda' DZZ^TDZ) \\
        &\geq \frac{2\xi(1- \epsilon_1) - 25\Xi\epsilon_2}{4} \gamma\sigma^*_r\norm{\Delta}_F^2 + \frac{\xi\gamma}{2} \left(\frac{1}{4}\norm{\solset^TD\Delta}_F^2 - \frac{7}{4}\norm{\Delta}_F^4\right) \\
        &\geq \frac{4\xi(1- \epsilon_1) - 50\Xi\epsilon_2  - 7\xi\epsilon_2}{8} \gamma\sigma^*_r\norm{\Delta}_F^2  + \frac{\xi\gamma}{8}\norm{\solset^TD\Delta}_F^2
    \end{align*}
Choosing $\epsilon_1 = 1/8$ and $\epsilon_2 = \tau/50 = \xi/(50\Xi)$ gives us $4\xi(1- \epsilon_1) - 50\Xi\epsilon_2  - 7\xi\epsilon_2 \geq 2\xi$. Therefore,
    \begin{align*}
        \llangle \nabla f, \Delta \rrangle &\geq \frac{\xi\gamma}{4}\sigma^*_r\norm{\Delta}_F^2  + \frac{\xi\gamma}{8}\norm{\solset^TD\Delta}_F^2
    \end{align*}
    The number of samples needed for Lemma \ref{lem:convexity_lowerbound} to hold with probability at least $1-\delta/2$ is  
    $$m_1 \geq 96 \mu r  \left(\kappa/\epsilon_1\right)^2 n\log\left(2n/\delta\right) = 6144 \mu r  \kappa^2 n\log\left(2n/\delta\right)$$
    The number of samples needed for Lemma \ref{lem:convexity_upperbound} to hold with probability at least $1-\delta/2$ is  
    $$m_2 \geq 845  \left(\mu r \kappa/\epsilon_2\right)^2 n \log\left(n/\delta\right) \geq 2112500  \left(\mu r \kappa /\tau \right)^2 n \log\left(2n/\delta\right)$$
    The two lemmas jointly hold with probability at least $1 - \delta$. Clearly, the sample complexity requirement from Lemma \ref{lem:convexity_upperbound} is higher. Thus, we can conclude that given  $m \geq 10^7\left(\mu r \kappa (\Xi/\xi)\right)^2 n \log\left(2n/\delta\right)$ samples, with probability at least $1-\delta$,
    \begin{align*}
        \llangle \nabla f, \Delta \rrangle &\geq \frac{\xi\gamma}{4}\sigma^*_r\norm{\Delta}_F^2  + \frac{\xi\gamma}{8}\norm{\solset^TD\Delta}_F^2 \ \forall \ Z \in \mathcal{H} \cup \mathcal{B}(\tau/50) \cup \overline{C} ,
    \end{align*}
\end{proof}

\subsection{Proof of Lemma \ref{lem:smoothness}}
Lemma \ref{lem:smoothness} follows from Lemmas \ref{lem:smoothness_algebra} and \ref{lem:smoothness_upperbound}; these are proven in Appendices \ref{sec:initial_lemmas} and \ref{sec:upper_bound} respectively. We restate the results here for convenience.
\lSA*
\lSUB*
\lSHP*
\begin{proof}
    From \eqref{eq:objective_function}, we get that 
    \begin{align}\label{eq:lem2_1}
        \nabla f &= \nabla \Loglikelihood + \frac{\lambda}{4} \nabla \mathcal{R} = \nabla \Loglikelihood + \lambda DZZ^TDZ \nonumber \\
        \therefore \norm{\nabla f}_F^2 &= \norm{\nabla \Loglikelihood + \lambda DZZ^TDZ}_F^2 \leq (\norm{\nabla \Loglikelihood}_F + \norm{\lambda DZZ^TDZ}_F)^2  \nonumber \\ 
        &\leq 2(\norm{\nabla \Loglikelihood}_F^2 + \lambda^2\norm{DZZ^TDZ}_F^2)
    \end{align}
    We have assumed that $Z \in \mathcal{B}(1)$, which implies $\norm{\Delta}_F^2 \leq \sigma^*_r \leq \sigma^*_1$. Using this bound along with
    the analysis in \citet{zheng2016convergence} (Appendix C.2), we get:
    \begin{align}\label{eq:lem2_2}
        \norm{DZZ^TDZ}_F^2 &\leq 6(\norm{\Delta}_F^2 + 4\sigma^*_1)\norm{\Delta}_F^2 \norm{Z}_2^2 + 4\sigma^*_1\norm{\solset^TD\Delta}_F^2 \nonumber \\
        &\leq 30\sigma^*_1\norm{\Delta}_F^2 \norm{Z}_2^2 + 4\sigma^*_1\norm{\solset^TD\Delta}_F^2 \  \nonumber \\
        &\leq 180(\sigma^*_1)^2\norm{\Delta}_F^2 + 4\sigma^*_1\norm{\solset^TD\Delta}_F^2 \ \quad (\norm{Z}_2^2 \leq 6\sigma^*_1)
    \end{align}
    The last bound can be derived as follows:
    \begin{align*}
        \norm{Z}_2^2 = \norm{\solset + \Delta}_2^2 \leq (\norm{\solset}_2 + \norm{\Delta}_2)^2 \leq 2(\norm{\solset}_2^2 + \norm{\Delta}_2^2) \leq 2(\norm{\solset}_2^2 + \norm{\Delta}_F^2) \leq 2(2\sigma^*_1 + \sigma^*_1) = 6\sigma^*_1
    \end{align*}
    Combining the bounds from Lemma \ref{lem:smoothness_algebra} and Lemma \ref{lem:smoothness_upperbound}, we see that if the number of samples $m$ is at least $2n\log(4n/\delta)$, then with probability at least $1-\delta$, $\forall Z \in \overline{\Incoherentset}$,
    \begin{align}\label{eq:lem2_3}
        \llangle \nabla \Loglikelihood, W \rrangle^2  &\leq 2 \Xi^2 \left(\mathcal{D}(\Delta\solset^T) + \frac{1}{4}\mathcal{D}(\Delta\Delta^T)\right) \, \mathcal{D}(WZ^T) \nonumber \\
        &\leq 2 \Xi^2 \left(16\gamma(\mu r \sigma^*_1) \norm{\Delta}_F^2 + 104 \gamma(\mu r \sigma^*_1) \norm{\Delta}_F^2\right) \, 192 \gamma(\mu r \sigma^*_1)  \norm{W}^2_F \nonumber \\
        &= 46080 (\Xi \gamma \mu r \sigma^*_1)^2 \norm{\Delta}_F^2 \norm{W}^2_F \nonumber \\
        \therefore \norm{\nabla \Loglikelihood}_F^2 &= \sup_{W \in \Real{n \times r}: \norm{W}_F = 1} \llangle \nabla \Loglikelihood, W \rrangle^2 \nonumber \\
        &\leq 46080 (\Xi \gamma \mu r \sigma^*_1)^2 \norm{\Delta}_F^2 
    \end{align}
    Putting together the bounds in \eqref{eq:lem2_1}, \eqref{eq:lem2_2}, and \eqref{eq:lem2_3}, and plugging in the value of $\lambda = \xi\gamma/4$, we see that if the number of samples $m$ is at least $2n\log(4n/\delta)$, then with probability at least $1-\delta$, $\forall \ Z \in \mathcal{B} \cap \overline{\Incoherentset}$,
    \begin{align*}
        \norm{\nabla f}_F^2 &\leq 2\left(46080 (\Xi \gamma \mu r \sigma^*_1)^2 \norm{\Delta}_F^2 + 12 (\xi\gamma\sigma^*_1)^2\norm{\Delta}_F^2\right)  + \frac{(\xi\gamma)^2}{2}\sigma^*_1\norm{\solset^TD\Delta}_F^2  \\
        &\leq 10^5 (\Xi \gamma \mu r \sigma^*_1)^2 \norm{\Delta}_F^2 + \frac{(\xi\gamma)^2}{2}\sigma^*_1\norm{\solset^TD\Delta}_F^2
    \end{align*}
    
\end{proof}

\subsection{Proof of Theorem \ref{thm:main}}
Lemmas \ref{lem:strong_convexity} and \ref{lem:smoothness} are the two key ingredients needed to prove the main theorem of this paper.
\tM*
\begin{proof}
We begin by following the standard steps in the analysis of gradient descent.
\begin{align*}
    \norm{\Delta(Z^{t+1})}_F^2 
    &= \norm{Z^{t+1} - \solset(Z^{t+1})}_F^2 \\
    &\leq \norm{Z^{t+1} - \solset(Z^{t})}_F^2 \\
    &= \norm{\mathcal{P}_{\mathcal{H}}(\mathcal{P}_{\Incoherentset}\left(Z^t - \eta \nabla f(Z^t) \right)) - \solset(Z^{t})}_F^2 \\
    &\leq \norm{Z^t - \eta \nabla f(Z^t) - \solset(Z^{t})}_F^2 \\
    &= \norm{\Delta(Z^{t}) - \eta \nabla f(Z^t)}_F^2\\    
    &= \norm{\Delta(Z^{t})}_F^2 + \eta^2 \norm{\nabla f(Z^t)}_F^2 - 2 \eta \llangle \nabla f(Z^t), \Delta(Z^{t}) \rrangle  
\end{align*}
The first inequality comes from the fact that $\solset(Z^{t+1})$ is the closest point in $\solset$ to $Z^{t + 1}$; this is by definition of $\solset(Z^{t+1})$. The second inequality follows from the fact that $\solset(Z^t) \in \Incoherentset$ (by Lemma \ref{lem:solset_in_C}) and $\solset(Z^t) \in \mathcal{H}$ (by assumption); thus, successive projections of the iterate on to $\Incoherentset$ and $\mathcal{H}$ can only bring it closer to $\solset(Z^t)$.

Next, suppose the following bounds hold for some positive constants $a, b, c,$ and $d$ and for all $t \in \mathbb{Z}_+$:
\begin{align}
    \llangle \nabla f(Z^t), \Delta(Z^t) \rrangle &\geq a \norm{\Delta(Z^t)}_F^2 + c \norm{\Delta(Z^t)^TD\solset(Z^t)}_F^2 \label{eq:thm_pf_lower} \\
    \norm{ \nabla f(Z^t) }_F^2 &\leq b \norm{\Delta(Z^t)}_F^2 + d \norm{\Delta(Z^t)^TD\solset(Z^t)}_F^2    \label{eq:thm_pf_upper}
\end{align}
It follows that:
\begin{align}\label{eq:iterate_inequality}
    \norm{\Delta(Z^{t+1})}_F^2 &\leq \norm{\Delta(Z^{t})}_F^2 + \eta^2 \norm{\nabla f(Z^t)}_F^2 - 2 \eta \llangle \nabla f(Z^t), \Delta(Z^{t}) \rrangle \nonumber \\
    &\leq (1 - 2\eta a + \eta^2 b) \norm{\Delta(Z^t)}_F^2 + (\eta^2 d - 2 \eta c) \norm{\Delta(Z^t)^TD\solset(Z^t)}_F^2 \nonumber \\
    &\leq (1 - \eta a) \norm{\Delta(Z^t)}_F^2,
\end{align}
provided $\eta \leq \min(a/b, 2c/d)$. The last step can be justified  as follows:
\begin{align*}
    \eta &\leq \frac{a}{b} \Rightarrow (1 - 2\eta a + \eta^2 b) \leq 1 - \eta a, \quad 
    \eta \leq \frac{2c}{d} \Rightarrow \eta^2 d - 2 \eta c \leq 0
\end{align*}
Further, if $\eta \leq 1/a$, then $1 - \eta a \geq 0$, implying that the right-hand side of \eqref{eq:iterate_inequality} remains positive. This allows us to use the inequality repeatedly to yield:
\begin{align*}
    \norm{\Delta(Z^{t})}_F^2 \leq (1-\eta a)^t \norm{\Delta(Z^{0})}_F^2 \ \forall \ t \in \mathbb{Z}_+
\end{align*}

Finally, observe that we have assumed the number of samples given, $m$, is at least $10^7\left(\mu r \kappa/\tau\right)^2 n \log\left(8n/\delta\right)$. This ensures that with probability at least $1-\delta$, both Lemmas \ref{lem:strong_convexity} and \ref{lem:smoothness} hold. Lemmas \ref{lem:strong_convexity} and \ref{lem:smoothness} imply that the inequalities \eqref{eq:thm_pf_lower} and \eqref{eq:thm_pf_upper} hold for all $Z \in \mathcal{H} \cap \mathcal{B}(\tau/50) \cap \overline{\Incoherentset}$ with parameters:
\begin{align*}
    {a = \frac{\xi\gamma}{4} \sigma^*_r, \quad b = 10^5 (\Xi \gamma \mu r \sigma^*_1)^2, \quad c = \frac{\xi\gamma}{8}, \quad d = \frac{(\xi\gamma)^2}{2} \sigma^*_1}
\end{align*}
Given these parameters, as long as the stepsize $\eta$ satisfies $\eta \leq a/b = 2.5 \cdot 10^{-6} (\tau/\mu r \kappa)^2 /\alpha $, the other conditions on $\eta$ are automatically satisfied.
\end{proof}

\section{Extra Simulation Results}\label{sec:sim_results_appendix}
In this section, we present simulation results that highlight the dependency of the sample complexity of the learning problem on the rank $r$ of the ground-truth matrix $X^*$. In this experiment, the parameters used are as follows. The generated matrices have size $(n_1, n_2) = (2000, 3000)$. We vary the rank $r$ among the values $\{2, 3, \ldots, 6\}$. For every possible value of $r$, we generate a dataset the manner described in Section \ref{sec:generative_model}, whose size $m$ varies among the values $\{30,000, 40,000, \ldots, 100,000\}$. The comparisons are noiseless. The regularizer coefficient $(\lambda/4)$ was set to $0.01$.

For faster implementation, we optimize the loss function using the PyTorch implementation of Adam instead of using gradient descent. As our experiments in Section \ref{sec:simulations} show, we need neither a smart initialization nor the projection step. At the end of $300$ epochs, we compute the reconstruction error $\norm{X^t - X^*}_F/\sqrt{n_1n_2}$. For each value of $r$ and $m$, we run this experiment with ten fresh seeds. The values reported in Figure \ref{fig:rank_sample_heatmap} are the mean and standard deviation of the reconstruction error across these ten runs.

\begin{figure}
    \centering
    \includegraphics[width=0.9\linewidth]{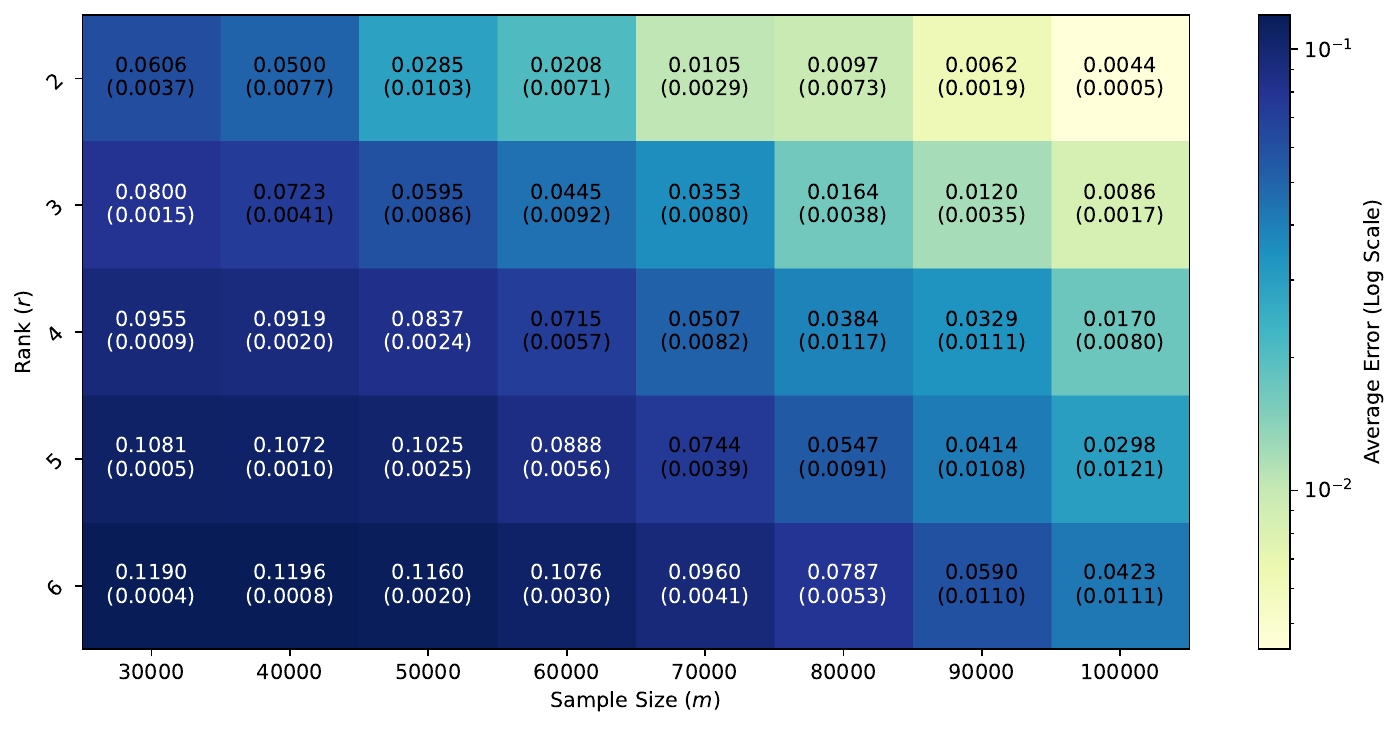}
    \caption{Variation of the reconstruction error as a function of the underlying rank of the matrix $r$ and the sample size $m$, for ground-truth matrices of size $(2000, 3000)$.}
    \label{fig:rank_sample_heatmap}
\end{figure}

We observe that the reconstruction error uniformly increases with the rank $r$ and decreases with the sample size $m$. Interestingly, if we observe the boxes with roughly the same error, we see that the sample complexity increases roughly linearly with the rank $r$. This is not surprising, as ultimately the matrix to be estimated is $Z^*$, which has $nr$ parameters. However, our theoretical analysis gives us a sample complexity that grows as $O(nr^2)$. Thus, there is scope for further research in order to develop $O(nr)$ sample complexity guarantees.

\end{document}